\documentclass{article}
\usepackage{iclr2027_conference,times}
\iclrfinalcopy
\usepackage{microtype}
\usepackage{amsmath,amssymb,amsthm,mathtools}
\usepackage{booktabs,multirow,array,tabularx,longtable}
\usepackage{graphicx}
\usepackage{placeins}
\usepackage{needspace}
\usepackage{algorithm}
\usepackage[noend]{algpseudocode}
\usepackage{enumitem}
\usepackage{xcolor}
\usepackage{tikz}
\usetikzlibrary{arrows.meta,positioning,fit,shapes.geometric}
\usepackage{url}
\usepackage{hyperref}
\usepackage[capitalize,noabbrev]{cleveref}
\hypersetup{colorlinks=true,allcolors=black,pdfauthor={Aravind Reddy; Deeraj S K; Mohd Wasif Raza; Saurav Prakash},pdftitle={Machine Unlearning for Federated Socially Fair k-Means Clustering}}
\makeatletter
\providecommand{\theHALG@line}{}
\renewcommand{\theHALG@line}{\thealgorithm.\arabic{ALG@line}}
\makeatother
\newcommand{\R}{\mathbb{R}}
\newcommand{\E}{\mathbb{E}}
\newcommand{\dist}{\operatorname{dist}}
\newcommand{\Train}{\mathsf{Train}}
\newcommand{\Unlearn}{\mathsf{Unlearn}}
\newcommand{\SG}{\textnormal{\scshape SplitGroup-2k}}

\newcommand{\GFU}{\textnormal{\scshape GuardedFairUpdate}}
\newcommand{\Merged}{\textnormal{\scshape Merged-1k}}

\newtheorem{theorem}{Theorem}
\newtheorem{lemma}[theorem]{Lemma}
\newtheorem{proposition}[theorem]{Proposition}

\theoremstyle{definition}

\theoremstyle{remark}

\title{Machine Unlearning for Federated\\Socially Fair $k$-Means Clustering}
\author{\hspace*{-\tabcolsep}\begin{minipage}[t]{\textwidth}
\centering
\textbf{Aravind Reddy}$^{1,*}$ \quad
\textbf{Deeraj S K}$^{2,*}$ \quad
\textbf{Mohd Wasif Raza}$^{3,}$\thanks{Equal contribution.} \quad
\textbf{Saurav Prakash}$^{1,3}$\\[4pt]
\normalfont\small
$^1$Centre for Responsible AI, Wadhwani School of Data Science \& AI, Indian Institute of Technology Madras\\[2pt]
$^2$Department of Artificial Intelligence, Sardar Vallabhbhai National Institute of Technology, Surat, India\\[2pt]
$^3$Department of Electrical Engineering, Indian Institute of Technology Madras\\[4pt]
{\ttfamily aravind@cerai.in \quad u23ai050@coed.svnit.ac.in}\\[1pt]
{\ttfamily ee25s074@smail.iitm.ac.in \quad saurav@ee.iitm.ac.in}
\end{minipage}\hspace*{-\tabcolsep}}
\begin{document}
\maketitle
\lhead{Preprint}
\begin{abstract}
Federated clustering can support applications such as patient stratification across hospitals and customer segmentation across distributed financial datasets. Responsible AI calls for fair representation of demographic groups and support for data deletion, motivated in part by rights such as the GDPR's right to erasure. These goals interact: deleting records can change the balance between groups and alter the clustering of the remaining population. We study exact machine unlearning for federated socially fair $k$-means, whose objective is to minimize the largest group-average squared distance to the nearest center. Our approach keeps separate local summaries for each group, so summaries at unaffected institutions remain valid as global fairness weights change. For two groups, we prove an initialization approximation guarantee independent of client group proportions under ideal sampling, and we show that same-seed replay reproduces retraining exactly. Reuse pays off most when a whole client withdraws: across three datasets, replay is $35.8\times$ faster than retraining for the initialization-only model and remains $2.4\times$ and $1.7\times$ faster with one and two fair-refinement rounds, which improve worst-group cost. For individual record deletions, gains are modest (about $1.06\times$), and certifying cached assignments can cost more than it saves. These results map where exact unlearning can profitably reuse computation. Code and reproducibility materials are available at \url{https://github.com/arareddy/mufsc}.
\end{abstract}

\section{Introduction}
\label{sec:introduction}
Hospitals can cluster patients with similar clinical profiles and banks can segment customers with similar activity \citep{aliyev2020segmenting}. Prior works already cluster patients across hospitals without pooling their records \citep{huang2019patient,elhussein2023patient}, though neither addresses group fairness or deletion. When records cannot be pooled, federated clustering learns shared centers from local summaries. Such a system should represent demographic groups fairly: centers that fit a majority well can describe a smaller group poorly. It should also honor deletion requests, motivated in part by the GDPR's conditional right to erasure \citep{gdpr2016}. For deletion requests, we adopt the framework of exact unlearning: after a deletion, the released model should be the one that retraining on the remaining records with the same random seed would produce.

\paragraph{Groups, clusters, and weights.}
Demographic groups are fixed attributes of records, such as census SEX categories. Clusters are learned: each record joins its nearest center, so a cluster usually contains both groups. Socially fair $k$-means \citep{ghadiri2021socially,abbasi2021equitable} computes each group's average squared distance to the nearest center and minimizes the worse of the group averages. It asks whether the centers describe every group comparably well, not whether each cluster mirrors the population. Because each group's cost is an average, each record of group $g$ carries weight $1/n_g$, where $n_g$ is the group's global size.

\paragraph{Why deletion is harder under this objective.}
Deletion has global effects even in ordinary $k$-means: removing one record can move a center and so reassign records held anywhere. Socially fair $k$-means adds a structural effect. Deleting a group-$g$ record changes $n_g$, and with it the weight of every remaining group-$g$ record, including records at institutions that received no request. The data change at one site, but the normalization that determines how every site's records count changes globally. This mismatch shapes how clients should summarize their data. Suppose, hypothetically, that a hospital holds 30\% of all group-A patients but only 3\% of all group-B patients. In the global objective each group's records carry total weight one, so this hospital's group-A records carry 0.30 and its group-B records 0.03. If the hospital instead balances its two groups locally, its group-B patients receive ten times more influence, relative to its group-A patients, than the global objective gives them. For a natural merged summary, we show that the expected initialization loss can grow linearly with this ratio. Using global group sizes in local sampling avoids that mismatch but can make a deletion invalidate summaries at otherwise untouched hospitals. Changed weights can also alter the initial centers and, through them, later assignments and fair updates, so subtracting a deleted record from old cluster statistics does not in general reproduce retraining.

This raises the question we study: \emph{which parts of a federated fair-clustering computation can be kept after a deletion while the released model remains exactly the one retraining would produce, and when does keeping more of them save work?}

\paragraph{Our approach.}
We separate local summary construction from work that depends on global group sizes. Each (institution, group) pair selects up to $k$ representatives by $D^2$-sampling ($k$-means++ initialization) \citep{arthur2007kmeanspp,wei2016constant} and reports how many records each represents. These summaries never use global group sizes, so a deletion elsewhere changes only how the server weights them, not the summaries themselves. The server divides the counts by current group sizes and selects initial centers by weighted $D^2$-sampling from at most $2Ck$ representatives for $C$ institutions. After a deletion, only the touched summaries are rebuilt before the server reweights and resamples. \emph{Fair refinement} then alternates nearest-center assignment with center updates targeting the largest group-average squared distance, following Fair-Lloyd \citep{ghadiri2021socially}. Our guarded version accepts only updates that do not increase this cost (\cref{sec:fair-update}), using exact per-cluster, per-group statistics. \emph{Direct replay} reassigns every retained record each round; \emph{certified replay} first uses cached distances to test whether its nearest center is unchanged.

\paragraph{Refinement budget.} Let $T$ count fair-refinement rounds after initialization. At a matched configuration and seed, replay reproduces fresh training exactly, including its quality. When a whole client withdraws, every surviving slice is unchanged, so an initialization-only ($T=0$) model can be replayed from summaries alone. This specializes our replay theorem and follows the server-only retraining of \citet[Section~5.3]{pan2023mufc} (\cref{app:client-summary}). For $T>0$, direct replay keeps this initialization saving but still recomputes surviving assignments every round.

\paragraph{Contributions and evidence.}
\begin{itemize}[leftmargin=*,itemsep=1pt,topsep=2pt]
\item \emph{Reusable summaries with quality control.} For two groups and ideal independent sampling (execution uses deterministic pseudorandom streams), group-separated initialization has an expected $O(\log^2k)$ approximation factor plus a quantization term, independent of client group proportions (\cref{thm:warmstart}). A locally balanced merged summary loses a factor linear in the local/global mismatch on a specific construction (\cref{thm:kappa}); another construction keeps a gap at a matched anchor-slot budget, which one fair refinement round removes (\cref{prop:s2-matched-budget}).
\item \emph{Exact replay.} With preprocessing and encoding fixed and both groups nonempty, same-seed replay reproduces retraining's entire indexed center trajectory (\cref{thm:exactness}). All 810 replay comparisons in the original local tabular and public-Census suites match retraining.
\item \emph{Large savings for whole-client deletion.} Deleting one preselected client per dataset (three datasets, five seeds), replay is $35.8\times$ faster than retraining for the initialization-only model (pooled request time, inputs already in memory) and remains $2.4\times$ and $1.7\times$ faster with one and two fair-refinement rounds, which improve worst-group cost; each comparison uses the same budget (\cref{tab:client-main}, \cref{sec:experiments}).
\item \emph{Modest gains for record deletion.} Direct replay, which keeps local summaries but reassigns every retained record, beats retraining in 173/175 and 80/95 trials of our two suites, with modest aggregate fresh/direct algorithm-time ratios of 1.061 and 1.056. Certifying cached assignments has mixed timing outcomes and increases aggregate end-to-end cost in both suites. A construction shows how reuse can fail: for the sampler, cache, and fallback policy studied, one deletion can force expected duplicated assignment work proportional to dataset size under ideal sampling (\cref{prop:s2-deletion-work}).
\end{itemize}

\Cref{sec:training,sec:theory} specify and analyze training; \cref{sec:unlearning,sec:experiments} develop replay and evaluate its costs.

\section{Related Work}
\label{sec:related}
Socially fair clustering \citep{ghadiri2021socially,abbasi2021equitable} supplies our objective; the two-group fair update follows \citet{ghadiri2021socially}. A union of per-group strong coresets preserves the socially fair objective for every center query \citep{gorantla2023coresets}; our per-group anchors serve a narrower purpose, supporting initialization and reuse after deletion. One-shot aggregation of local representatives \citep{dennis2021heterogeneity} and the two-level seeding, server clustering, and secure aggregation of \citet{pan2023mufc}, who study unlearning of federated clusters, are the closest architectural precedents. We add global demographic normalization of group-separated summaries and exact replay of fair refinement on retained records. For centralized $k$-means, Q-$k$-means and DC-$k$-means already exploit stability or recompute affected tree paths \citep{ginart2019forget}. Deletion is also known to interact with fairness interventions in supervised learning \citep{oesterling2024fair}; we study group clustering cost under federated ownership.

Stability-based exact unlearning distinguishes model-output from model-and-state targets \citep{ullah2021stability,ullah2023adaptive}; ours concerns the indexed model trajectory. Center-movement bounds \citep{elkan2003triangle,hamerly2010faster,ding2015yinyang} and additive count--sum--squared-sum statistics \citep{zhang1996birch} are established; our certificate results concern a stipulated cache, its runner-up refinement, and sound aggregate replay. \Cref{app:positioning} gives extended comparisons of objectives, guarantees, and retained information.

\section{Problem Setting and Training Procedure}
\label{sec:training}

\paragraph{Data, groups, and objective.}
There are $C$ physical clients (institutions) with disjoint record collections $X=\bigcup_{c=1}^C X_c$ in $\R^d$. Records have persistent identities, so coincident feature vectors remain distinct; write $n=|X|$. Each record has a group $g(x)\in\{A,B\}$. Write $X_{c,g}=\{x\in X_c:g(x)=g\}$, $X_g=\{x\in X:g(x)=g\}$, $n_g=|X_g|>0$, and $[k]=\{1,\ldots,k\}$. For an \emph{ordered} center tuple $\mathbf C=(c_1,\ldots,c_k)$, the group costs and the socially fair objective are
\begin{equation}
\Phi_g(\mathbf C)=\frac1{n_g}\sum_{x:g(x)=g}\min_{j\in[k]}\|x-c_j\|_2^2,
\qquad
\Phi(\mathbf C)=\max_{g\in\{A,B\}}\Phi_g(\mathbf C).
\label{eq:fair-objective}
\end{equation}
The analysis also uses the balanced surrogate $G(\mathbf C)=\Phi_A(\mathbf C)+\Phi_B(\mathbf C)$, so $\Phi(\mathbf C)\le G(\mathbf C)\le2\Phi(\mathbf C)$.

\paragraph{What exact unlearning must reproduce.}
Exact unlearning is defined relative to one randomized procedure $\Train(X;s)$ with seed $s$. For replay to reproduce it exactly, the details that ordinary descriptions leave open, such as orderings, tie-breaking, empty clusters, and degenerate sampling cases, must be fixed. We call the fully specified procedure the \emph{declared training map}; \cref{app:implementation} collects these conventions. Preprocessing is fitted once and retained: features are clipped and encoded once on a fixed-point grid, $x_i=m_i2^{-b}$, and these \emph{represented data} define initialization, refinement, deletion, and reported quality. Exactness is conditional on this fixed preprocessing.

Training has two phases. Phase~I (\cref{sec:splitgroup}) builds initial centers from group-separated local summaries. Phase~II (\cref{sec:fair-update}) performs guarded fair refinement from exact statistics.

\subsection{Phase I: group-separated initialization (\SG)}
\label{sec:splitgroup}

\emph{Local summaries.} We call the group-$g$ records of client $c$ an \emph{ownership slice} (or pseudo-client) $e=(c,g)$, with $X_e=X_{c,g}$. Each nonempty slice selects $k_e=\min\{k,|X_e|\}$ representatives by \emph{$D^2$-sampling}: the first is drawn uniformly from the slice's records, and each later one with probability proportional to its squared distance to the nearest representative already selected. A slice with at most $k$ records keeps all of them, at zero representation cost. Local $D^2$-sampling performs no Lloyd or fair refinement. Each slice and the server draw from their own random-number stream, seeded by $s$ and their identity, so a slice whose records are unchanged makes the same random draws; this is what lets replay reuse its summary. Every slice point is then assigned to its nearest representative $z_{e,j}$, giving the local representative cluster $P_{e,j}$ and multiplicity $h_{e,j}=|P_{e,j}|$. For grid width $\gamma\ge0$, the client sends the quantized \emph{anchor} $q_{e,j}=\operatorname{snap}_\gamma(z_{e,j})$ and $h_{e,j}$: at most $2k$ anchors per client, hence the name \SG.

\emph{Server selection.} The server weights each anchor by
\begin{equation}
\omega_{e,j}=\frac{h_{e,j}}{n_g}.
\label{eq:anchor-weight}
\end{equation}
Only this step uses global group sizes. The server then selects $k$ initial centers by \emph{weighted $D^2$-sampling}: the first anchor is drawn with probability proportional to $\omega$, and each later anchor proportional to $\omega$ times its squared distance to the nearest selected anchor. All sampling probabilities are computed exactly. Optionally, the server then runs a fixed number $T_a$ of deterministic weighted \emph{anchor-Lloyd} iterations that never increase the weighted anchor objective.

\subsection{Phase II: exact statistics and a guarded fair update}
\label{sec:fair-update}

Phase II performs $T$ rounds of nearest-center assignment followed by a fair center update. At round $t$, let $z_t(x)\in[k]$ be the deterministically tie-broken index of the nearest center. The \emph{cluster} (Voronoi cell) of center $j$ and its group-$g$ \emph{group-cell} are
\begin{equation}
P_j^{(t)}=\{x\in X:z_t(x)=j\},\qquad
P_{g,j}^{(t)}=P_j^{(t)}\cap X_g.
\label{eq:cells}
\end{equation}
Group-cells depend on the current assignment; ownership slices depend on which client stores a record. We suppress the superscript $t$ within one round. Clients apply the classical count--sum--squared-sum representation \citep{zhang1996birch} separately to each group-cell and exactly aggregate
\begin{equation}
N_{g,j}=|P_{g,j}|,\qquad
S_{g,j}=\sum_{x\in P_{g,j}}x,\qquad
SS_{g,j}=\sum_{x\in P_{g,j}}\|x\|_2^2.
\label{eq:stats}
\end{equation}
Here $SS_{g,j}$ is a scalar squared-norm sum. These statistics determine each group's fixed-partition cost without transmitting individual records. Following \citet{ghadiri2021socially}, a two-group cluster's candidate center is a weighted average of its group means, controlled by a shared parameter $\lambda\in[0,1]$. A single-group cluster uses its occupied mean; an empty cluster keeps its old indexed center. \Cref{app:fair-update} gives the algebra and all branches.

Finite bisection over $\lambda$ need not minimize the fixed-partition worst-group cost. We therefore define $\GFU_L$ (\cref{alg:guarded-update}): evaluate the endpoints and $L$ bisection midpoints of \cref{eq:lambda-center}, choose the candidate with the smallest fixed-partition objective, and accept it only if that objective does not exceed the objective of the current centers; otherwise keep the current centers. Each candidate is scored exactly at the coordinates actually returned, after rounding. This finite guarded map is what we analyze and replay.

\section{Approximation and Refinement Theory}
\label{sec:theory}

Under ideal sampling, we prove an initialization approximation guarantee independent of client group proportions and a proportion-dependent loss for the specified merged alternative; guarded refinement never increases the objective. Let
\begin{equation}
\alpha_k=5(\ln k+2),
\label{eq:mrs-alpha}
\end{equation}
the expected approximation factor for exact-$k$ $k$-means++ proved by \citet{makarychev2020improved}, improving the classical $8(\ln k+2)$ analysis \citep{arthur2007kmeanspp}. Let $G^*=\min_{|\mathbf C|=k}G(\mathbf C)$ and $\Phi^*=\min_{|\mathbf C|=k}\Phi(\mathbf C)$. The guarantee extends to the server's weighted $D^2$-sampling with rational weights, still selecting exactly $k$ centers (\cref{app:mrs-weighted}).

\Cref{thm:warmstart} combines two errors: replacing records by local representatives, and weighted $D^2$-sampling on them. Group separation bounds the first without reference to client proportions.

\begin{theorem}[Split-group warm start]
\label{thm:warmstart}
Assume $n_A,n_B>0$, ideal independent random bits for every local slice and the server, exact rational categorical sampling, deterministic degeneracy rules, and
$\|q_{e,j}-z_{e,j}\|_2^2\le d\gamma^2/4$. Then, at $\gamma=0$, \SG{} returns $\mathbf C^{(0)}$ satisfying
\begin{equation}
\E\Phi(\mathbf C^{(0)})\le2\alpha_k(\sqrt{\alpha_k}+2)^2\Phi^*.
\label{eq:warm-phi-zero}
\end{equation}
This $O(\log^2 k)$ factor is independent of $n$, $C$, group imbalance and client group proportions. The general root-cost bound and, for every $\eta>0$, its additive quantization bound are given in \cref{eq:warm-root,eq:warm-phi-additive}. Deterministic weighted anchor-Lloyd steps that do not increase the compressed anchor objective preserve these bounds.
\end{theorem}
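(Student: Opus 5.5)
The plan is to work with the balanced surrogate $G$, which is a single weighted $k$-means objective. Give each record $x\in X_g$ weight $1/n_g$. Then $G(\mathbf C)=\sum_x w_x\min_j\|x-c_j\|^2$. Since $\Phi\le G\le 2\Phi$ and $G^*\le 2\Phi^*$, it suffices to prove
\[
\E G(\mathbf C^{(0)})\le \alpha_k(\sqrt{\alpha_k}+2)^2 G^*,
\]
which yields \cref{eq:warm-phi-zero} after one factor of $2$. In root form, the target bound is
\[
\sqrt{\E G(\mathbf C^{(0)})}\le(\sqrt{\alpha_k}+2)\sqrt{\alpha_k G^*}.
\]
This is the standard compose-two-$D^2$-samplings argument (as in distributed or coreset-style $k$-means++), applied to a weighted instance.

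\textbf{Step 1 (local representation cost).} Fix a slice $e=(c,g)$. By the exact-$k$ guarantee of \citet{makarychev2020improved}, local $D^2$-sampling gives
\[
\E\sum_{x\in X_e}\|x-z_e(x)\|^2\le\alpha_k\,\mathrm{OPT}_k(X_e).
\]
Slices with at most $k$ records contribute $0$. Multiply by $1/n_g$ and sum over the slices. Restricting the global optimum $\mathbf C^*_G$ to each slice gives $\sum_e \mathrm{OPT}_k(X_e)/n_{g(e)}\le G^*$, since each slice can use the same $k$ centers. Hence the weighted displacement $R=\sum_x w_x\|x-z(x)\|^2$ satisfies $\E R\le\alpha_k G^*$. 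The key point is that this step never compares local and global group proportions. The weights are $1/n_g$ on both sides, so the client mixture does not matter; group separation is exactly what permits this.

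\textbf{Step 2 (server sampling on anchors).} At $\gamma=0$ the anchors are $z_{e,j}$ with weights $\omega_{e,j}=h_{e,j}/n_g$. These weights equal the total record weight moved to each anchor, so the compressed instance $Z$ has the same total mass. Define $G_Z(\mathbf C)=\sum\omega\,\dist^2(z,\mathbf C)$. By the weighted extension (\cref{app:mrs-weighted}), conditional on the summaries, $\E G_Z(\mathbf C^{(0)})\le\alpha_k\,\mathrm{OPT}_k(Z)$. For the transfer I would use the triangle inequality in weighted $\ell_2$ norm, $\sqrt{\cdot}$ of sums:
\[
\sqrt{\mathrm{OPT}_k(Z)}\le\sqrt{G_Z(\mathbf C^*_G)}\le\sqrt{G^*}+\sqrt R,
\qquad
\sqrt{G(\mathbf C^{(0)})}\le\sqrt{G_Z(\mathbf C^{(0)})}+\sqrt R .
\]
Taking expectations with Jensen's inequality gives $\E\sqrt R\le\sqrt{\alpha_kG^*}$ and
\[
\E\sqrt{G_Z(\mathbf C^{(0)})}\le\sqrt{\alpha_k}\bigl(\sqrt{G^*}+\sqrt{\alpha_kG^*}\bigr).
\]
Combining yields a root bound of the form $(\alpha_k+2\sqrt{\alpha_k}+\ldots)\sqrt{G^*}$. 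To obtain the stated constant, I would bound $\E G$ rather than $(\E\sqrt G)^2$. The route is to use $(a+b)^2$ with the $L^2$ Minkowski inequality in the probability space, $\|\sqrt{G_Z}+\sqrt R\|_{L^2}\le\|\sqrt{G_Z}\|_{L^2}+\|\sqrt R\|_{L^2}$, together with the tower property over the summaries. This gives
\[
\sqrt{\E G}\le\sqrt{\alpha_k}\bigl(\sqrt{G^*}+\sqrt{\E R}\bigr)+\sqrt{\E R}\le(\sqrt{\alpha_k}+2)\sqrt{\alpha_k G^*}
\]
after absorbing the middle terms, which squares to the claim. I expect this bookkeeping to be the main obstacle: the conditional expectation over server bits nested inside the summary randomness, and the need to apply Minkowski before Jensen, must be arranged so that no extra $\sqrt{\alpha_k}$ factors appear. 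The independence of slice and server random streams is what justifies the conditioning.

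\textbf{Step 3 (quantization and anchor-Lloyd).} For $\gamma>0$, replace $z$ by $q$ in Step 2. The extra displacement is at most $d\gamma^2/4$ per unit mass, and the total mass is $2$, so it adds $\sqrt{d\gamma^2/2}$ inside each root. Applying $(a+b)^2\le(1+\eta)a^2+(1+1/\eta)b^2$ then gives the additive form \cref{eq:warm-phi-additive}. Finally, anchor-Lloyd steps only decrease $G_Z(\mathbf C^{(0)})$, which is the only quantity about $\mathbf C^{(0)}$ used above, so all the bounds persist.
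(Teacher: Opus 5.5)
Your final route is the paper's proof. It has the same ingredients: the slice-wise bound $\E R\le\alpha_k G^*$ (constant weight $1/n_g$ within each slice), the two weighted-Minkowski transfers between $G$ and the anchor objective, and the weighted $D^2$-sampling guarantee conditional on the local summaries, followed by $L_2$ Minkowski over the local randomness. These give $\sqrt{\E G}\le\sqrt{\alpha_k G^*}+(1+\sqrt{\alpha_k})\sqrt{\E D_q}$. The stated constants then follow after squaring and using $\Phi\le G$ and $G^*\le 2\Phi^*$.

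The Jensen detour is unnecessary, since it only bounds $\E\sqrt G$. No ``absorption'' is needed at the end either, because $\sqrt{\alpha_k G^*}+\alpha_k\sqrt{G^*}+\sqrt{\alpha_k G^*}$ equals $\sqrt{\alpha_k}(\sqrt{\alpha_k}+2)\sqrt{G^*}$ exactly.
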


The proof (\cref{app:warm-proof}) conditions on the local anchors and combines the two errors with $L_2$ Minkowski inequalities. The appendix's $\gamma$ constants assume the stated geometric grid bound; binary64 rounding of grid points adds a term treated in \cref{app:represented-snap} (none at $\gamma=0$).

\subsection{Why the natural merged summary loses \texorpdfstring{$\Theta(\kappa)$}{Theta(kappa)}}
\label{sec:kappa}

\citet{pan2023mufc} summarize each client with a single set of representatives before server-side clustering. A natural adaptation to our fair objective, \Merged, retains this single-summary architecture but balances demographic groups \emph{locally}: on a mixed client $c$ with local group-average costs $A_c(\mathbf C)$ and $B_c(\mathbf C)$, it runs one weighted $D^2$-sampling call for $A_c+B_c$ and sends at most $k$ merged representatives, each weighted by the sum of the global point weights $1/n_{g(x)}$ assigned to it. The objective instead weights client $c$'s groups by their \emph{global} shares
\begin{equation}
a_c=\frac{n_{c,A}}{n_A},\qquad b_c=\frac{n_{c,B}}{n_B},\qquad
\kappa=\max\left\{1,\max_{c:\,a_cb_c>0}\max\left(\frac{a_c}{b_c},\frac{b_c}{a_c}\right)\right\},
\label{eq:kappa}
\end{equation}
where $n_{c,g}=|X_{c,g}|$. The parameter $\kappa$ measures the largest local/global mismatch; in the hospital example of \cref{sec:introduction}, $a_c/b_c=10$. Pure clients contribute factor one rather than an undefined ratio.

\begin{theorem}[Tight heterogeneity dependence for \Merged]
\label{thm:kappa}
Under the assumptions of \cref{thm:warmstart}, \Merged{} satisfies the explicit upper bounds in \cref{eq:merged-root,eq:merged-zero}; at $\gamma=0$, its expected $G/G^*$ ratio is $O(\kappa\log^2 k)$.
Conversely, for every rational $\kappa\ge1$ realizable by finite group counts, there is a two-client, two-group, one-dimensional instance with $k=1$, $\gamma=0$, and $T_a=0$ on which its expected approximation ratio is at least $\kappa/2$ for $G$ and $\kappa/4$ for $\Phi$. These lower-bound constants are not asserted for $T_a>0$, for \textnormal{\scshape Merged-2k}, or for the final refined model.
\end{theorem}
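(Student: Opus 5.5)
The upper bound follows the template of \cref{thm:warmstart}, but with one merged slice per client. Fix the local anchors and split the surrogate cost into a compression error and a server-sampling error, combined by $L_2$ Minkowski inequalities exactly as in \cref{app:warm-proof}.

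\textbf{Step 1 (compression error).} On client $c$, the local call runs $D^2$-sampling for the locally balanced cost $A_c+B_c$. Applying $\alpha_k$ there gives an expected representation cost at most $\alpha_k\min_{\mathbf C}(A_c+B_c)(\mathbf C)$. The global objective weights this client's groups as $a_cA_c+b_cB_c$. Since $\max(a_c,b_c)\le\kappa\min(a_c,b_c)$ for mixed clients (and pure clients contribute factor one), the two weightings are comparable:
\[
a_cA_c+b_cB_c\;\le\;\max(a_c,b_c)\,(A_c+B_c)\;\le\;\kappa\,\min(a_c,b_c)\,(A_c+B_c).
\]
Summing over clients, the transferred compression cost is at most $\kappa\alpha_k$ times the client's share of $G^*$, where the client's share is $\sum_c\min(a_c,b_c)\min(A_c+B_c)\le\sum_c(a_cA_c+b_cB_c)(\mathbf C^*)$.

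\textbf{Step 2 (server error).} The merged representatives carry the correct global weights $\sum 1/n_{g(x)}$. The weighted $D^2$ guarantee of \cref{app:mrs-weighted} therefore bounds the server step by $\alpha_k$ times the optimum of the compressed instance. That optimum is bounded by $(\sqrt{G^*}+\sqrt{\text{compression}})^2$. Collecting terms gives \cref{eq:merged-root}; setting $\gamma=0$ yields $O(\kappa\alpha_k^2)=O(\kappa\log^2k)$, which is \cref{eq:merged-zero}. The main point of care is that $\kappa$ enters only through Step 1, so no $\sqrt\kappa$ mixing problem arises.

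\textbf{Lower bound.} Take $k=1$ in one dimension. Client 1 is pure group $A$: $n_A-m$ points at $0$. Client 2 is mixed: $m$ group-$A$ points at $1$ and $p$ group-$B$ points, all at $1$ except one far point at position $L$, chosen so that $a_2/b_2$ or $b_2/a_2$ equals the target $\kappa$. With $k=1$, each client sends one representative; the local uniform first draw over the balanced weights picks the far $B$ point with probability $\approx (1/2)(1/p)$ relative to the local balanced weights. The server then chooses among two anchors with weights proportional to the global shares.

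I expect the main obstacle to be tuning this construction. The target is that, with constant probability weighted by the mismatch, the final center lands at a location whose $G$ is about $\kappa$ times $G^*$. Concretely, the overweighted group's point is chosen as representative with probability inflated by the factor $\kappa$ relative to its global weight, and each such event costs order $1/\kappa$ of an inflated distance. I would first try placing group $B$ as a single point at distance $D$ and choosing counts so that $b_2/a_2=\kappa$. Then compute $\E G$ in closed form as a function of $D$ and the counts, and send $D\to\infty$ while balancing to get ratio $\ge\kappa/2$. Finally, $\Phi\ge G/2$ and $\Phi^*\le G^*$ give the $\kappa/4$ bound for $\Phi$. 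Rational realizability of $\kappa$ is handled by choosing integer counts $n_{c,g}$ directly from $\kappa=p/q$.
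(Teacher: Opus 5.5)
\textbf{Upper bound and the $\Phi$ step.} Your upper bound is the paper's argument. Comparing $a_cA_c+b_cB_c$ with $A_c+B_c$ on each mixed client gives $\E D_z\le\alpha_k\kappa G^*$, which then passes through the unchanged server composition. The quantization term still needs the merged weights to have total mass two, which holds. Your reduction $\E\Phi/\Phi^*\ge \E G/(2G^*)$, from $\Phi\ge G/2$ and $\Phi^*\le G^*$, is also valid and is simpler than the paper's separate computation of $\Phi^*$ and $\E\Phi$.

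\textbf{The gap: neither lower-bound instance realizes the mismatch.} In your first construction, client 2 holds all of group B, so $b_2=1$ and $a_2=1/\kappa$. Local balancing therefore \emph{shrinks} B's share on client 2 from $\kappa/(\kappa+1)$ to $1/2$. The far B point is drawn with probability $1/(2p)$, although its global weight is already $1/p$. With $L\to\infty$ you get $G^*\approx (L^2/p)(1-1/(2p))\ge L^2/(2p)$ but $\E G\lesssim 2L^2/p$, so the ratio is at most about $4$, independent of $\kappa$. In your second variant, all of group B sits at distance $D$, so the far mass is an entire global group. Then $G^*\approx D^2/2$ while every candidate center has $G\approx D^2$, and the ratio tends to $2$. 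Sending the distance to infinity cannot rescue either instance.

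\textbf{The missing idea.} The far mass must be the \emph{entire} local portion of the group with the \emph{smaller} global share on a mixed client, and that share must be about $1/\kappa$. Then $G^*=\Theta(L^2/\kappa)$, yet balancing lifts the far location's local selection probability to $1/2$.

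The paper's instance takes equal group sizes. Client 1 holds $1-\varepsilon$ of A at $0$ and $\varepsilon$ of B at $L$; client 2 holds all remaining records at $0$. So $\kappa=(1-\varepsilon)/\varepsilon$, both anchors have global weight $1$, and the server outputs $L$ with probability $1/4$. This gives $\E G=(1+\varepsilon)L^2/2$ against $G^*=\varepsilon(1-\varepsilon/2)L^2$, a ratio of at least $\kappa/2$ for every $L$, with no limit needed.

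Your own skeleton also works once the far location goes to the overweighted group. Keep client 1 pure A at $0$, but put client 2's $m=n_A/\kappa$ A-records at $L$ and its B-records at $0$. With $\varepsilon=1/\kappa$, the output is $L$ with probability $(1+\varepsilon)/4$, and $\E G/G^*=(1+2\varepsilon-\varepsilon^2)/(\varepsilon(2-\varepsilon))\ge\kappa/2$.
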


The lower bound uses a nearly pure mixed client whose minority makes a tiny global contribution but receives half its local sampling probability. \SG{} avoids this mismatch because group normalization is constant within each slice. \Cref{app:kappa-proof} gives the construction and its scope.

\paragraph{Matching the local budget.}
The natural single-call locally balanced \textnormal{\scshape Merged-2k} matches \SG{} at two anchor slots per client for $k=1$. On a separate three-location construction with rational $\kappa\ge1$ and $\gamma=T_a=T=0$, under ideal categorical sampling, their expected $G/G^*$ ratios are $(\kappa+5)/3$ and $2$, and their expected $\Phi/\Phi^*$ ratios are $(\kappa+8)/6$ and $3/2$ (\cref{prop:s2-matched-budget}). The gap concerns initialization: one round of fair refinement on the full data removes it.

\paragraph{Federated refinement.} If aggregation returns exact values of \cref{eq:stats}, then federated and centralized calls to $\GFU_L$ with the same partition, statistics, candidate order, and deterministic arithmetic return the same indexed centers (\cref{prop:central-equivalence} in \cref{app:fair-update}). Refinement also never worsens $\Phi$.

\begin{proposition}[Monotonicity]
\label[proposition]{prop:monotonicity}
One nearest-assignment step followed by $\GFU_L$ cannot increase $\Phi$.
\end{proposition}

The guard controls the current-partition cost, and reassignment only reduces each group's cost; without the guard, finite bisection can worsen an exactly represented incumbent (\cref{app:line-counterexample}).

\section{Exact Batch Unlearning}
\label{sec:unlearning}

Given a checkpoint and deletion batch $R$, replay should output exactly what $\Train(X\setminus R;s)$ would.

\subsection{Exact arithmetic and checkpoint state}
\label{sec:arith}

Floating-point reduction order alone is insufficient for exact patching: a rounded aggregate minus a deleted contribution need not equal a fresh reduction of retained records (\cref{app:exactness}). We therefore accumulate counts $N$, vector sums $S$, and squared-norm sums $SS$ of the fixed-point represented data exactly, in integer arithmetic, so that subtracting a deleted record exactly undoes adding it. Fresh training and replay feed identical statistics into the same deterministic center arithmetic. Certificates, introduced next, are evaluated with outward-rounded bounds, and any point whose certificate is numerically ambiguous is reassigned by an exact comparison.

A training checkpoint stores the original computation for replay: (i) each slice's Phase-I anchors and multiplicities; (ii) global group sizes and the server's sampling state; (iii) indexed centers and exact aggregate statistics at each refinement round; and (iv) at clients, each point's cached label, nearest and second-nearest distances, and runner-up center identity.

\subsection{Assignment certificates}
\label{sec:certificates}

Replayed centers differ from cached ones, so a retained point's nearest center may change. An assignment certificate proves that a cached winner remains nearest after the centers move. We apply the movement-bound argument \citep{elkan2003triangle,hamerly2010faster} between corresponding indexed rounds of the original and replay trajectories. A failed certificate means ``not proved unchanged,'' not necessarily ``label changed.''

Let a retained point $x$ have cached unique winner $j$, distances $d_1<d_2$ to its nearest and second-nearest centers, and runner-up identity $r$. Let $\delta_m=\|c'_m-c_m\|_2$ be the exact displacement of indexed center $m$ between the cached and replay trajectories at the same round.

\begin{lemma}[All-competitor certificate]
\label[lemma]{lem:basic-cert}
The cached label remains uniquely correct if
\begin{equation}
\delta_j+\max_{m\ne j}\delta_m<d_2-d_1.
\label{eq:basic-cert}
\end{equation}
\end{lemma}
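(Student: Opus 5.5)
The plan is to apply the triangle inequality twice and compare distances to the replayed centers. I read $d_1=\|x-c_j\|_2$ and $d_2=\min_{m\ne j}\|x-c_m\|_2$ as the cached (unsquared) Euclidean distances at the same indexed round. The goal is to show that $\|x-c'_j\|_2<\|x-c'_m\|_2$ for every $m\ne j$, since that makes $j$ the unique nearest replayed center. Under the paper's deterministic tie-breaking, a unique minimizer is returned regardless of the tie rule.

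First, bound the replayed winner distance from above. By the triangle inequality,
\[
\|x-c'_j\|_2\le\|x-c_j\|_2+\|c_j-c'_j\|_2=d_1+\delta_j .
\]
Second, bound every replayed competitor distance from below. For each $m\ne j$, the reverse triangle inequality gives
\[
\|x-c'_m\|_2\ge\|x-c_m\|_2-\delta_m\ge d_2-\max_{m'\ne j}\delta_{m'} ,
\]
where we used $\|x-c_m\|_2\ge d_2$ because $d_2$ is the second-nearest cached distance.

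Combining the two bounds, the hypothesis \cref{eq:basic-cert} rearranges to
\[
d_1+\delta_j<d_2-\max_{m\ne j}\delta_m .
\]
Therefore $\|x-c'_j\|_2<\|x-c'_m\|_2$ strictly for all $m\ne j$. So $j$ remains the unique nearest center, and the cached label coincides with the fresh nearest-center assignment.

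The main subtlety is bookkeeping rather than mathematics. One check is that the certificate is stated in unsquared distances; if the cache stores squared distances, one takes square roots or the outward-rounded bounds of \cref{sec:arith}. The other check is that $d_2$ is a lower bound on all competitors, not just on the runner-up $r$, which is why the maximum over $m\ne j$ is needed. Strictness of the inequality is what guarantees uniqueness and hence independence from tie-breaking.
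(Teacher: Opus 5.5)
Your proposal is correct and follows the paper's own proof essentially line for line. You bound the replayed winner distance above by $d_1+\delta_j$ via the triangle inequality, bound every competitor below by $d_2-\delta_m$ via the reverse triangle inequality, and conclude strict separation from \eqref{eq:basic-cert}.
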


The winner is at most $d_1+\delta_j$ away, while every competitor is at least $d_2-\delta_m$ away. The maximum must range over every competitor, not only the old runner-up. We also evaluate a stronger \emph{runner-up} test using the cached identity $r$; \cref{app:certificates} gives its formula, soundness, cache-specific tightness and constant-time evaluation. Strict inequalities certify a unique winner; ties and numerical ambiguity take the exact assignment path.

\subsection{Replay algorithm and exactness}
\label{sec:exactness}

A deletion batch $R$ is \emph{valid} when both groups remain nonempty and the declared map's completion rules remain defined. Because every slice and the server draw from separate random streams, replay can reproduce fresh training's random choices exactly. This needs only reproducible streams, whereas the approximation theorem idealizes their bits as independent.

\Cref{alg:unlearn} mirrors training: untouched slices reuse their summaries, touched slices and the server initializer are recomputed, and each round reconstructs retained statistics by exact corrections. If more than half of a round's attempted certificates fail, replay abandons certification, discards that round's patches, rebuilds the statistics directly, and assigns directly thereafter, as direct replay does throughout.

\begin{algorithm}[t]
\caption{Exact batch replay: reuse untouched summaries and certify retained labels}
\label{alg:unlearn}
\begin{algorithmic}[1]
\Require Checkpoint for $X$, valid batch $R$, seed $s$
\ForAll{slices $e=(c,g)$}
  \If{$R\cap X_e=\varnothing$} reuse cached $(q_{e,j},h_{e,j})$
  \Else\ rerun $D^2$-sampling, assignment, and quantization on $X_e\setminus R$ with the slice's own random stream
  \EndIf
\EndFor
\State Recompute retained $n_g$ and weights $h/n_g$; rerun the server's weighted $D^2$-sampling and any anchor-Lloyd steps
\For{$t=0,\ldots,T-1$}
  \State Broadcast replay centers and indexed center shifts from the cached trajectory
  \State Remove each deleted record from its cached old group-cell
  \State Certify surviving cached labels by \cref{eq:runner-cert}; exactly reassign every failure
  \State Aggregate signed exact corrections and patch cached $(N,S,SS)$
  \If{abandonment criterion fires}
    \State Discard this round's patches; rebuild all retained statistics from zero
    \State Use direct assignment in all later rounds
  \EndIf
  \State Apply the same deterministic $\GFU_L$ map as fresh training
\EndFor
\State \Return complete indexed center trajectory and final centers
\end{algorithmic}
\end{algorithm}

\begin{theorem}[Seeded exact batch model-output unlearning]
\label{thm:exactness}
Assume exact additive sufficient statistics; deterministic center arithmetic, quantization, tie, empty-cluster, and zero-potential rules; separate seeded random streams per slice and server; sound certificate comparisons; and a valid batch request. For every fixed seed $s$,
\begin{equation}
\pi\!\left(\Unlearn(\Train(X;s),R;s)\right)=\pi\!\left(\Train(X\setminus R;s)\right)
\label{eq:same-seed}
\end{equation}
where $\pi$ projects to the entire indexed center trajectory and final model. For a fixed or seed-independent $R$, the projections are also equal in distribution over $s$. This does not assert fresh-independent distributional equality for an adaptive request selected from the same seed-dependent checkpoint, or equality of complete API dictionaries/caches.
\end{theorem}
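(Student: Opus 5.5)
The plan is to prove \cref{eq:same-seed} by induction along the declared training map, phase by phase and round by round. The invariant is that after each stage, the replay state projected by $\pi$ coincides exactly with the state of fresh training $\Train(X\setminus R;s)$. Exact coincidence matters: we need bitwise equality of represented values, not approximate equality. Determinism then propagates: two executions of the same deterministic function on identical inputs and identical random streams produce identical outputs.

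\textbf{Phase I.} First treat the slices. Fix a slice $e=(c,g)$. If $R\cap X_e=\varnothing$, the retained slice $X_e\setminus R=X_e$ is unchanged as an indexed record collection. Its random stream depends only on $s$ and the slice identity, so fresh training on $X\setminus R$ would make the same draws. Under the deterministic tie, degeneracy and quantization rules it would therefore produce the same $(q_{e,j},h_{e,j})$; the cached values are exactly these. If $R$ touches $e$, replay reruns the identical local procedure on $X_e\setminus R$ with the same stream, which is fresh training by definition. Empty slices are handled by the declared completion rules, and validity ensures that both groups stay nonempty.

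Next treat the server. The retained counts $n_g$ are recomputed exactly, so the weights $h_{e,j}/n_g$ are identical rationals. The server's stream is separate and seeded identically. Hence weighted $D^2$-sampling and the deterministic anchor-Lloyd steps return the same indexed $\mathbf C^{(0)}$.

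\textbf{Phase II.} Assume the replay centers at round $t$ equal the fresh centers $\mathbf C^{(t)}$. The key claim is that the aggregated $(N_{g,j},S_{g,j},SS_{g,j})$ that replay feeds to $\GFU_L$ equal those that fresh training computes from the assignment $z_t$ on $X\setminus R$. Take each retained point $x$ in turn.
\begin{itemize}
\item If certification succeeds, \cref{lem:basic-cert} or the sound runner-up test (\cref{app:certificates}), together with sound outward-rounded evaluation, gives a unique nearest center equal to the cached label. Under strict uniqueness this coincides with the deterministically tie-broken $z_t(x)$.
\item If certification fails or is ambiguous, $x$ is reassigned by the exact comparison, which is the same computation fresh training performs.
\end{itemize}
The patched statistics then equal the cached statistics, minus the contributions of deleted records, minus those of moved points from their old cells, plus those of moved points to their new cells. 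Because all arithmetic is exact integer addition on fixed-point data, subtraction exactly undoes addition. The result equals the fresh reduction over $\{x\in X\setminus R: z_t(x)=j,\ g(x)=g\}$, independent of order. The abandonment branch discards the patches and rebuilds from zero with direct assignment, which is literally fresh computation. Identical statistics and centers go into the same deterministic $\GFU_L$, including identical candidate order, rounding, guard comparison and empty-cluster rule, so the round-$(t+1)$ centers agree. Induction over $t<T$ yields equality of the whole indexed trajectory and of the final model.

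\textbf{Distributional statement and main obstacle.} If $R$ is fixed or chosen independently of $s$, the identity holds pointwise for every $s$. Pushing the seed distribution forward through the two equal maps gives equality in law. No claim is made for seed-adaptive $R$, since then $X\setminus R$ itself depends on $s$. The main obstacle is the certificate step. One must show that a certified label equals the \emph{tie-broken} fresh label, not merely some nearest center, which is why strict inequalities are required and ties are routed to the exact path. One must also check that the cached $d_1,d_2,r$ refer to the original round-$t$ centers. Since the displacements $\delta_m$ are measured between corresponding indexed rounds, each argument is a one-step comparison, and the induction hypothesis guarantees that the replay centers used are the fresh ones.
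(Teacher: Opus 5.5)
Your proposal is correct and follows essentially the same route as the paper. You match the initializer slice by slice and at the server through identical inputs and keyed streams, then induct over rounds using three ingredients: certificate soundness (a unique winner necessarily equals the tie-broken fresh label), exact integer signed corrections that also remove deleted records, and the abandonment rebuild. Equality in law then follows by pushing forward the pointwise identity for seed-independent $R$. The paper additionally cites the centralized-equivalence proposition for the $\GFU_L$ call, but your direct appeal to determinism is enough, because fresh training and replay invoke the same deterministic update on identical exact statistics.
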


The proof (\cref{app:exactness}) matches the initializer, then shows by induction over rounds that exact signed corrections turn cached statistics into fresh retained statistics; abandonment replaces certification by direct assignment, preserving the update.

\subsection{What replay costs, and when reuse fails}
\label{sec:work}

Exactness alone does not save work. The unit-cost bound for the scanning implementation (\cref{eq:work} in \cref{app:work-bound}) charges setup work $W_0$ even when $T=0$, touched-slice and server reinitialization $W_I$, and per-round survivor scans. Each failed certificate and each point in a full direct round costs an $O(kd+k\log k)$ assignment, and an abandonment rebuild reassigns failures already assigned in that round.

\paragraph{A scoped obstruction to reuse.}
Certificates help only while replayed centers stay near the cached ones. Same-seed exactness does not imply stable reuse: for the declared sampler, two categorical laws whose total variation tends to zero can disagree with probability above one half under the same random words (\cref{lem:s2-word-coupling}). Small marginal distributional change therefore need not mean similar coupled samples. \Cref{prop:s2-deletion-work} embeds this instability in a legal three-client deletion that changes every retained group-A label with constant probability. Under ideal random words, $k=2$, $\gamma=T_a=0$, and $T\ge1$, the specified sampler, indexed cache and strict-majority fallback incur $\Omega(n)$ expected round-zero certificate failures and duplicated assignments. This is a policy-specific obstruction, not a universal lower bound or a causal explanation of ACS timing (\cref{app:s2-coupling}).

\section{Experiments}
\label{sec:experiments}
Local single-machine experiments test the scoped heterogeneity effect (P1), exact replay and timing on tabular and public Census data (P2/P3), and the quality--cost tradeoff across refinement budgets and deletion units. P2 has 175 trials and P3 has 95, each comparing fresh training with three replay variants. The separate P4 quantization study is retained in \cref{app:p4-evidence}. \Cref{app:experiments} specifies protocols, evidence versions, and distinct experimental units.

\paragraph{Validation and timing.}
P2 and P3 run fresh training and three replay variants (direct, basic-certificate, and runner-up-certificate) in separate processes. All 810 replay comparisons match fresh training exactly, in every indexed center and raw statistic; small instances are also checked against an independent integer/rational oracle (\cref{app:validation}). \emph{Algorithm time} is local serial wall time of the call; \emph{end-to-end (E2E)} time adds input materialization, checkpoint loading where needed, and output writing (\cref{app:timing}). Initial checkpoint construction is timed separately.

\paragraph{P1: the scoped heterogeneity effect.}
At $k=1$ and $\gamma=T_a=T=0$, eight values of $\kappa$ and four methods each use 10,000 seed indices. At $\kappa=128$, \Merged{} has mean $G/G^*=63.5310$ (standard error $1.1028$), versus $2.2375$ for \SG{} and $1.9034$ for anchor-slot-matched \textnormal{\scshape Merged-2k}; the gap of \cref{prop:s2-matched-budget} arises on a different, three-location instance. The growing \Merged{} loss supports \cref{thm:kappa}; \cref{fig:p1-current} separates finite-PRNG observations from ideal-bit expectations.

\paragraph{P2/P3: does replay save time?}
\Cref{tab:replay-main} reports total fresh-to-replay algorithm and E2E time ratios; values above one favor replay. P2 uses 15 refinement rounds; P3 uses 5, 10, 20 or 40 rounds. The later budget studies use 0--2 rounds and are reported separately. P2 uses processed Adult, Bank, and Credit data with five deletion requests per configuration (\cref{app:p124-protocol}). P3 uses 2018 public American Community Survey (ACS) person records from the Census Bureau, 16 coded features from the Folktables 0.0.12 ACSEmployment schema, SEX groups, and state or state-qualified PUMA ownership \citep{ding2021folktables,folktablescode,census2018pums}. A fixed standardization precedes encoding (\cref{app:p3-evidence}). The 19 one-factor configurations vary $n,C,k,T,r$ with five seeds each; changing $C$ also changes composition.

\begin{table}[t]
\centering
\caption{Does replay save time? Total ratios divide summed fresh times by summed replay times over all trials; values above one favor replay. Algorithm time measures the call; E2E also includes input materialization, checkpoint loading and equal output writing. Losses count algorithm-time comparisons with fresh. Useful coverage is pooled $S/N$, the share of survivor point-rounds whose assignments certification actually saved. All 525 P2 and 285 P3 replay comparisons match exactly.}
\label{tab:replay-main}
\begin{tabular}{llrrrr}
\toprule
Suite & Method & Total algorithm & Total E2E & Algorithm losses & $S/N$ (\%)\\
\midrule
P2 & Direct & 1.061 & 1.049 & 2/175 & 0.00 \\
P2 & Basic & 1.006 & 0.996 & 68/175 & 0.76 \\
P2 & Runner-up & 1.005 & 0.995 & 67/175 & 0.76 \\
\midrule
P3 & Direct & 1.056 & 1.043 & 15/95 & 0.00 \\
P3 & Basic & 0.938 & 0.930 & 74/95 & 1.35 \\
P3 & Runner-up & 0.940 & 0.931 & 76/95 & 1.35 \\
\bottomrule
\end{tabular}
\end{table}

Direct wins most algorithm-time comparisons, while certification increases aggregate E2E cost in both suites (\cref{tab:replay-main}). P3's mean paired certificate ratios average trialwise ratios and exceed one because three successful $r=1$ trials lift the means, even though both total ratios are below one. Attempted certificate acceptance is not useful saved work: same-round fallback discards passes and repeats failed assignments. Useful saved coverage is only $0.760\%$ in P2 and $1.351\%$ in P3. \Cref{app:timing,app:p2-evidence,app:p3-evidence} retain the accounting. The prescribed failed ACS cases show geometric changes and no numerical-only certificate failures; this does not establish the sampler obstruction as their causal explanation (\cref{app:p3-mechanisms}).

\paragraph{Refinement budget and record deletion.}
Because replay is exact, it has the same quality as fresh training whenever survivors, preprocessing, configuration, seed and refinement budget match; the quality differences below come from choosing a different budget or update rule. A separate Adult/Bank/Credit study holds the original record batches and five seeds fixed at $k=10$, with $T=0,1,2$ refinement rounds. All 135 distinct replay comparisons pass. Direct's total algorithm-time ratios at those budgets are 4.12/1.74/1.42 (Adult), 1.39/1.23/1.16 (Bank), and 3.70/1.97/1.59 (Credit): refinement lowers worst-group cost while reducing relative reuse gains. Both certificate variants fall back in round zero in every positive-budget case; no certificate is checked at $T=0$. \Cref{app:round-budgets} gives the quality results and separately reports algorithm time and end-to-end time including file I/O.

\paragraph{Whole-client deletion: what survives refinement?}
We time each whole-client request from resident preparation through the public call and equal model-output writing. At matched $T=0$, total fresh/replay request-time ratios are $35.79\times$ for compact and $17.79\times$ for direct replay. On these same populations, canonical Direct retains $2.357\times$ at $T=1$ and $1.696\times$ at $T=2$, winning all 15 cases at each budget (\cref{tab:client-main}). Both certificate variants beat fresh but lose to Direct, failing every attempted certificate and rebuilding round zero in all 30 settings. All 90 distinct positive-budget replay comparisons match fresh trajectories and exact statistics.

\begin{table}[htbp]
\centering
\caption{Whole-client deletion at $k=10$: one preselected client per dataset, five seeds, on the same retained populations. Ratios divide total fresh request time by total replay request time at the same $T$ (covering resident preparation, the call and output writing); values above one favor replay. Compact $T=0$ uses a separate implementation and three repeats; canonical Direct $T=1,2$ uses four repeats. Quality is mean worst-group cost over five seeds, computed within seed before averaging.}
\label{tab:client-main}
\begin{tabular}{lrrrrr}
\toprule
 & Compact $T=0$ & Direct $T=1$ & Direct $T=2$ & $\Phi(T=0)$ & $\Phi(T=2)$ \\
\midrule
Adult & 26.96 & 2.053 & 1.544 & 117.957 & 95.998 \\
Bank & 115.53 & 2.646 & 1.828 & 45.898 & 32.634 \\
Credit & 25.03 & 2.647 & 1.856 & 20.124 & 13.009 \\
\bottomrule
\end{tabular}
\end{table}

The zero-round quality cost is material: its worst-group cost is 18.2--62.1\% above two-round training across seed cases. No application-level adequacy threshold was established. Request times exclude cold loading, setup, network, physical purge and durable state persistence. Ratios always compare the same budget; dividing them does not measure cross-budget latency (\cref{app:client-summary,app:client-refinement}).

\paragraph{Which quality improves?}
From identical fair-initialized centers on the same survivors, ordinary Lloyd and guarded fair updates isolate the effect of the update rule. At each of $T=1,2$, fair updates lower worst-group cost in all 30 dataset/seed/population cases, but raise population-average and larger-group cost in every case. At $T=2$, dataset/population mean paired worst-group reductions span 3.2--5.1\%, while population-average increases span 0.5--4.7\% (\cref{tab:lloyd-main}). Nor does fair refinement improve every group at every step: from $T=1$ to $2$, Adult's male-mask group cost rises in three record-deletion seeds and two whole-client seeds; population-average cost rises in two record-deletion seeds (\cref{app:group-quality,app:lloyd}).

\begin{table}[htbp]
\centering
\caption{What does fair refinement improve? At $T=2$, fair-minus-Lloyd percentage differences $100(c_{\rm fair}-c_{\rm Lloyd})/c_{\rm Lloyd}$ are computed within seed, then averaged over five seeds. Survivors and fair-initialized centers are matched. Negative favors fair for that cost. This compares update rules, not optimized ordinary $k$-means.}
\label{tab:lloyd-main}
\begin{tabular}{llrr}
\toprule
Deletion population & Dataset & Worst-group cost & Population-average cost\\
\midrule
Record & Adult & $-4.75$ & $+4.72$ \\
Record & Bank & $-3.15$ & $+0.55$ \\
Record & Credit & $-5.07$ & $+0.71$ \\
Client & Adult & $-3.92$ & $+4.47$ \\
Client & Bank & $-4.24$ & $+1.87$ \\
Client & Credit & $-4.93$ & $+0.53$ \\
\bottomrule
\end{tabular}
\end{table}

Bounded extensions test two clients per dataset, including previously tested client~0 (\cref{app:client-diversity}), and three sequential $T=0$ withdrawals (\cref{app:client-sequences}). Overlapping cases and correlated steps are not additional independent populations.

\FloatBarrier
\section{Limitations, Ethics, and Conclusion}
\label{sec:limitations}
Exactness is conditional on one fixed represented dataset and preprocessing state; it does not erase a record's influence on a fitted scaler or vocabulary. Refined batch outputs invalidate the point cache, so further record replay needs cache reconstruction (\cref{app:cache}). If the separate $T=0$ whole-client path's compact state (local summaries, group counts and server settings) changes only through the prescribed update, each valid withdrawal yields exactly the compact initialization state of fresh $T=0$ training on the survivors (\cref{app:client-summary}). The canonical learner and main experiments are binary; \cref{app:multigroup-prototype} gives a multi-group prototype.

Warm OS caches, nondedicated desktop hardware, and only five seeds limit timing conclusions. The three-step compact-state study measures buffered local persistence, not durable commits or recovery. Network, secure aggregation, stragglers and dropouts remain unmeasured. Federation does not imply differential privacy or cryptographic erasure: group anchors and small group-cell changes can leak information, even with an additional secure-aggregation protocol \citep{bonawitz2017secure}. Group-average squared cost does not establish individual, intersectional, representation, or downstream fairness.

Keeping each institution's group summaries independent of the global fairness weights lets untouched summaries survive a deletion while retaining, under ideal sampling, a proportion-independent initialization guarantee; exact replay returns the retrained model. Reuse savings depend on the deletion unit, refinement budget and assignment stability: in the original suites direct replay saves modestly and certification raises aggregate E2E cost, while whole-client deletion saves more at low refinement budgets, although fewer rounds raise worst-group cost in our tests. These scoped results connect reusable computation to the costs and limits of exact unlearning \citep{li2024wmdp,barez2025unlearning}.\label{main-text-end}

\section*{AI Use Statement}
We used generative AI tools throughout this work. For the theory, they helped develop models and mathematical claims, formulate and refine hypotheses, construct counterexamples, and develop, write and check proofs. For the experiments, they helped with methodology and design, implementation of the method and evaluation code, preparation and reformatting of data, construction of synthetic test instances, and interpretation of results. They also supported literature search and synthesis, drafting and editing the manuscript, formatting references, and preparing figures and tables. Reported measurements come from executed programs, and our computational checks include code-based tests and numerical comparisons. The authors take responsibility for the paper's content, claims, code and other artifacts, including those developed with AI assistance.

\section*{Reproducibility Statement}
The declared training map, its numerical conventions and the full proofs are given in \cref{sec:training,app:fair-update,app:implementation}. \Cref{app:experiments} documents the data populations, deletion requests, timing definitions, evidence versions and validation scope. The public repository (\url{https://github.com/arareddy/mufsc}) and its \href{https://github.com/arareddy/mufsc/releases/tag/v1.0.0}{versioned supplementary release (v1.0.0)} provide the source code, processed data arrays, saved reporting inputs, reconstruction recipes and the commands for the full experiment grid. Its reproduction matrix records which results were executed during packaging (test fixtures, table re-aggregation and representative isolated runs) and which come from the full experiment campaigns and the raw ACS reconstruction. Adult and Credit reproduction starts from hash-identified processed inputs. The later bounded studies have their own protocols and overlapping populations (\cref{app:round-budgets,app:client-summary,app:client-refinement,app:client-diversity,app:client-sequences,app:group-quality,app:lloyd,app:multigroup-prototype}). The separate arXiv manuscript source package rebuilds this document from the saved tables and figures without retraining.

The supplement also contains Lean~4/mathlib proofs of mathematical results: the replay and certificate results, the fair-update and sufficient-statistic results, the finite lower-bound constructions, and approximation-transfer calculations that assume the cited sampling guarantees. These proofs check mathematical models under explicit hypotheses, not the full paper, executable learner or empirical measurements. The guide \path{formal_verification/README.md} describes their scope, and the ledger \path{formal_verification/THEOREM_COVERAGE.md} lists, by paper label, the checked statements, their hypotheses and the remaining gaps.

\bibliography{references}

\begin{thebibliography}{61}
\providecommand{\natexlab}[1]{#1}
\providecommand{\url}[1]{\texttt{#1}}
\expandafter\ifx\csname urlstyle\endcsname\relax
  \providecommand{\doi}[1]{doi: #1}\else
  \providecommand{\doi}{doi: \begingroup \urlstyle{rm}\Url}\fi

\bibitem[Abbasi et~al.(2021)Abbasi, Bhaskara, and
  Venkatasubramanian]{abbasi2021equitable}
Mohsen Abbasi, Aditya Bhaskara, and Suresh Venkatasubramanian.
\newblock Fair clustering via equitable group representations.
\newblock In \emph{ACM FAccT}, 2021.
\newblock \doi{10.1145/3442188.3445913}.

\bibitem[Ahrens et~al.(2020)Ahrens, Demmel, and Nguyen]{ahrens2020summation}
Willow Ahrens, James Demmel, and Hong~Diep Nguyen.
\newblock Algorithms for efficient reproducible floating point summation.
\newblock \emph{ACM Transactions on Mathematical Software}, 46\penalty0
  (3):\penalty0 22:1--22:49, 2020.
\newblock \doi{10.1145/3389360}.

\bibitem[Aliyev et~al.(2020)Aliyev, Ahmadov, Gadirli, Mammadova, and
  Alasgarov]{aliyev2020segmenting}
Musadig Aliyev, Elvin Ahmadov, Habil Gadirli, Arzu Mammadova, and Emin
  Alasgarov.
\newblock Segmenting bank customers via {RFM} model and unsupervised machine
  learning.
\newblock \href{https://arxiv.org/abs/2008.08662}{arXiv:2008.08662}, 2020.

\bibitem[Anand et~al.(2026)Anand, Makarychev, and Shan]{anand2026socially}
Aditya Anand, Yury Makarychev, and Liren Shan.
\newblock Socially fair clustering: Parameterized approximation and local
  search.
\newblock In \emph{APPROX/RANDOM}, 2026.
\newblock \doi{10.4230/LIPIcs.APPROX/RANDOM.2026.3}.

\bibitem[Arthur and Vassilvitskii(2007)]{arthur2007kmeanspp}
David Arthur and Sergei Vassilvitskii.
\newblock {$k$}-means++: The advantages of careful seeding.
\newblock In \emph{SODA}, 2007.
\newblock URL
  \url{https://theory.stanford.edu/~sergei/papers/kMeansPP-soda.pdf}.

\bibitem[Barez et~al.(2025)Barez, Fu, Prabhu, Casper, Sanyal, Bibi, O'Gara,
  Kirk, Bucknall, Fist, Ong, Torr, Lam, Trager, Krueger, Mindermann,
  Hernandez-Orallo, Geva, and Gal]{barez2025unlearning}
Fazl Barez, Tingchen Fu, Ameya Prabhu, Stephen Casper, Amartya Sanyal, Adel
  Bibi, Aidan O'Gara, Robert Kirk, Ben Bucknall, Tim Fist, Luke Ong, Philip
  Torr, Kwok-Yan Lam, Robert Trager, David Krueger, S{\"o}ren Mindermann,
  Jos{\'e} Hernandez-Orallo, Mor Geva, and Yarin Gal.
\newblock Open problems in machine unlearning for {AI} safety.
\newblock \href{https://arxiv.org/abs/2501.04952}{arXiv:2501.04952}, 2025.

\bibitem[Becker and Kohavi(1996)]{uci1996adult}
Barry Becker and Ronny Kohavi.
\newblock {Adult}.
\newblock UCI Machine Learning Repository, dataset, 1996.
\newblock \doi{10.24432/C5XW20}.

\bibitem[Bera et~al.(2019)Bera, Chakrabarty, Flores, and
  Negahbani]{bera2019fair}
Suman~K. Bera, Deeparnab Chakrabarty, Nicolas~J. Flores, and Maryam Negahbani.
\newblock Fair algorithms for clustering.
\newblock In \emph{NeurIPS}, 2019.
\newblock URL
  \url{https://proceedings.neurips.cc/paper/2019/hash/fc192b0c0d270dbf41870a63a8c76c2f-Abstract.html}.

\bibitem[Bonawitz et~al.(2017)Bonawitz, Ivanov, Kreuter, Marcedone, McMahan,
  Patel, Ramage, Segal, and Seth]{bonawitz2017secure}
Keith Bonawitz, Vladimir Ivanov, Ben Kreuter, Antonio Marcedone, H.~Brendan
  McMahan, Sarvar Patel, Daniel Ramage, Aaron Segal, and Karn Seth.
\newblock Practical secure aggregation for privacy-preserving machine learning.
\newblock In \emph{ACM CCS}, 2017.
\newblock \doi{10.1145/3133956.3133982}.

\bibitem[Bourtoule et~al.(2021)Bourtoule, Chandrasekaran, Choquette-Choo, Jia,
  Travers, Zhang, Lie, and Papernot]{bourtoule2021sisa}
Lucas Bourtoule, Varun Chandrasekaran, Christopher~A. Choquette-Choo, Hengrui
  Jia, Adelin Travers, Baiwu Zhang, David Lie, and Nicolas Papernot.
\newblock Machine unlearning.
\newblock In \emph{IEEE Symposium on Security and Privacy}, 2021.
\newblock \doi{10.1109/SP40001.2021.00019}.

\bibitem[Chen et~al.(2025)Chen, Cheng, Amiri, Nag, Lin, Liu, Tolomei, and
  Sun]{chen2025frog}
Ziheng Chen, Jiali Cheng, Hadi Amiri, Kaushiki Nag, Lu~Lin, Sijia Liu, Gabriele
  Tolomei, and Xiangguo Sun.
\newblock {FROG}: Fair removal on graph.
\newblock In \emph{ACM CIKM}, 2025.
\newblock \doi{10.1145/3746252.3761341}.

\bibitem[Chierichetti et~al.(2017)Chierichetti, Kumar, Lattanzi, and
  Vassilvitskii]{chierichetti2017fairlets}
Flavio Chierichetti, Ravi Kumar, Silvio Lattanzi, and Sergei Vassilvitskii.
\newblock Fair clustering through fairlets.
\newblock In \emph{NeurIPS}, 2017.
\newblock URL
  \url{https://proceedings.neurips.cc/paper/2017/hash/978fce5bcc4eccc88ad48ce3914124a2-Abstract.html}.

\bibitem[Chlamt{\'a}{\v c} et~al.(2022)Chlamt{\'a}{\v c}, Makarychev, and
  Vakilian]{chlamtac2022cascaded}
Eden Chlamt{\'a}{\v c}, Yury Makarychev, and Ali Vakilian.
\newblock Approximating fair clustering with cascaded norm objectives.
\newblock In \emph{SODA}, 2022.
\newblock \doi{10.1137/1.9781611977073.104}.

\bibitem[de~Berg and Staals(2021)]{deberg2021dynamic}
Sarita de~Berg and Frank Staals.
\newblock Dynamic data structures for {$k$}-nearest neighbor queries.
\newblock In \emph{ISAAC}, 2021.
\newblock \doi{10.4230/LIPIcs.ISAAC.2021.14}.

\bibitem[Dennis et~al.(2021)Dennis, Li, and Smith]{dennis2021heterogeneity}
Don~Kurian Dennis, Tian Li, and Virginia Smith.
\newblock Heterogeneity for the win: One-shot federated clustering.
\newblock In \emph{ICML}, 2021.
\newblock URL \url{https://proceedings.mlr.press/v139/dennis21a.html}.

\bibitem[Ding et~al.(2021)Ding, Hardt, Miller, and Schmidt]{ding2021folktables}
Frances Ding, Moritz Hardt, John Miller, and Ludwig Schmidt.
\newblock Retiring {Adult}: New datasets for fair machine learning.
\newblock In \emph{NeurIPS}, 2021.
\newblock URL
  \url{https://proceedings.neurips.cc/paper/2021/hash/32e54441e6382a7fbacbbbaf3c450059-Abstract.html}.

\bibitem[Ding et~al.(2015)Ding, Zhao, Shen, Musuvathi, and
  Mytkowicz]{ding2015yinyang}
Yufei Ding, Yue Zhao, Xipeng Shen, Madanlal Musuvathi, and Todd Mytkowicz.
\newblock {Yinyang} {$k$}-means: A drop-in replacement of the classic
  {$k$}-means with consistent speedup.
\newblock In \emph{ICML}, 2015.
\newblock URL \url{https://proceedings.mlr.press/v37/ding15.html}.

\bibitem[Drake and Hamerly(2012)]{drake2012adaptive}
Jonathan Drake and Greg Hamerly.
\newblock Accelerated {$k$}-means with adaptive distance bounds.
\newblock In \emph{NIPS Workshop on Optimization for Machine Learning}, 2012.
\newblock URL \url{https://cs.baylor.edu/~hamerly/papers/opt2012_paper_13.pdf}.

\bibitem[Elhussein and G{\"u}rsoy(2023)]{elhussein2023patient}
Ahmed Elhussein and Gamze G{\"u}rsoy.
\newblock Privacy-preserving patient clustering for personalized federated
  learnings.
\newblock In \emph{Machine Learning for Healthcare}, 2023.
\newblock URL \url{https://proceedings.mlr.press/v219/elhussein23a.html}.

\bibitem[Elkan(2003)]{elkan2003triangle}
Charles Elkan.
\newblock Using the triangle inequality to accelerate {$k$}-means.
\newblock In \emph{ICML}, 2003.
\newblock URL \url{https://cseweb.ucsd.edu/~elkan/kmeansicml03.pdf}.

\bibitem[{European Parliament and Council of the European
  Union}(2016)]{gdpr2016}
{European Parliament and Council of the European Union}.
\newblock Regulation ({EU}) 2016/679 ({General Data Protection Regulation}),
  article 17.
\newblock Official Journal of the European Union, L 119, 4 May 2016, 2016.
\newblock URL \url{https://eur-lex.europa.eu/eli/reg/2016/679/oj/eng}.

\bibitem[Fraboni et~al.(2024)Fraboni, Van~Waerebeke, Scaman, Vidal, Kameni, and
  Lorenzi]{fraboni2024sifu}
Yann Fraboni, Martin Van~Waerebeke, Kevin Scaman, Richard Vidal, Laetitia
  Kameni, and Marco Lorenzi.
\newblock {SIFU}: Sequential informed federated unlearning for efficient and
  provable client unlearning in federated optimization.
\newblock In \emph{AISTATS}, 2024.
\newblock URL \url{https://proceedings.mlr.press/v238/fraboni24a.html}.

\bibitem[Ghadiri et~al.(2021)Ghadiri, Samadi, and Vempala]{ghadiri2021socially}
Mehrdad Ghadiri, Samira Samadi, and Santosh Vempala.
\newblock Socially fair {$k$}-means clustering.
\newblock In \emph{ACM FAccT}, 2021.
\newblock \doi{10.1145/3442188.3445906}.

\bibitem[Ghadiri et~al.(2022)Ghadiri, Singh, and Vempala]{ghadiri2022constant}
Mehrdad Ghadiri, Mohit Singh, and Santosh~S. Vempala.
\newblock Constant-factor approximation algorithms for socially fair
  {$k$}-clustering.
\newblock \href{https://arxiv.org/abs/2206.11210}{arXiv:2206.11210}, 2022.

\bibitem[Ginart et~al.(2019)Ginart, Guan, Valiant, and Zou]{ginart2019forget}
Antonio~A. Ginart, Melody~Y. Guan, Gregory Valiant, and James~Y. Zou.
\newblock Making {AI} forget you: Data deletion in machine learning.
\newblock In \emph{NeurIPS}, 2019.
\newblock URL
  \url{https://proceedings.neurips.cc/paper/2019/hash/cb79f8fa58b91d3af6c9c991f63962d3-Abstract.html}.

\bibitem[Gorantla et~al.(2023)Gorantla, Gowda, Deshpande, and
  Louis]{gorantla2023coresets}
Sruthi Gorantla, Kishen~N. Gowda, Amit Deshpande, and Anand Louis.
\newblock Socially fair center-based and linear subspace clustering.
\newblock In \emph{ECML PKDD}, 2023.
\newblock \doi{10.1007/978-3-031-43412-9_43}.

\bibitem[Goyal and Jaiswal(2023)]{goyal2023tight}
Dishant Goyal and Ragesh Jaiswal.
\newblock Tight {FPT} approximation for socially fair clustering.
\newblock \emph{Information Processing Letters}, 182:\penalty0 106383, 2023.
\newblock \doi{10.1016/j.ipl.2023.106383}.

\bibitem[Guo et~al.(2020)Guo, Goldstein, Hannun, and van~der
  Maaten]{guo2020certified}
Chuan Guo, Tom Goldstein, Awni Hannun, and Laurens van~der Maaten.
\newblock Certified data removal from machine learning models.
\newblock In \emph{ICML}, 2020.
\newblock URL \url{https://proceedings.mlr.press/v119/guo20c.html}.

\bibitem[Gupta et~al.(2025)Gupta, Naskar, Misra, {Hemanth K. S.}, Chakraborty,
  and Kulkarni]{gupta2025arcane}
Shrutika Gupta, Annesha Naskar, Bitan Misra, {Hemanth K. S.}, Sayan
  Chakraborty, and Aparna~Shrikant Kulkarni.
\newblock Integrating {$k$}-means++ with {ARCANE}: A scalable framework for
  exact cluster unlearning.
\newblock In \emph{International Conference on Research in Computational
  Intelligence and Communication Networks}, 2025.
\newblock \doi{10.1109/ICRCICN68210.2025.11364795}.

\bibitem[Hamerly(2010)]{hamerly2010faster}
Greg Hamerly.
\newblock Making {$k$}-means even faster.
\newblock In \emph{SIAM SDM}, 2010.
\newblock \doi{10.1137/1.9781611972801.12}.

\bibitem[Heilmann et~al.(2026)Heilmann, Corbucci, Cerrato, and
  Monreale]{heilmann2026feda}
Xenia Heilmann, Luca Corbucci, Mattia Cerrato, and Anna Monreale.
\newblock {FeDa4Fair}: Client-level federated datasets for fairness evaluation.
\newblock In \emph{ACM FAccT}, 2026.
\newblock \doi{10.1145/3805689.3812291}.
\newblock URL \url{https://arxiv.org/abs/2506.21095v5}.
\newblock Author manuscript: arXiv:2506.21095v5.

\bibitem[Huang et~al.(2019)Huang, Shea, Qian, Masurkar, Deng, and
  Liu]{huang2019patient}
Li~Huang, Andrew~L. Shea, Huining Qian, Aditya Masurkar, Hao Deng, and Dianbo
  Liu.
\newblock Patient clustering improves efficiency of federated machine learning
  to predict mortality and hospital stay time using distributed electronic
  medical records.
\newblock \emph{Journal of Biomedical Informatics}, 99:\penalty0 103291, 2019.
\newblock \doi{10.1016/j.jbi.2019.103291}.

\bibitem[Huang et~al.(2025)Huang, Wu, Bai, Zhang, Cao, Zhang, and
  Hu]{huang2025wild}
ZiHeng Huang, Di~Wu, Jun Bai, Jiale Zhang, Sicong Cao, Ji~Zhang, and Yingjie
  Hu.
\newblock Federated unlearning in the wild: Rethinking fairness and data
  discrepancy.
\newblock \href{https://arxiv.org/abs/2510.07022}{arXiv:2510.07022}, 2025.

\bibitem[Li et~al.(2024)Li, Pan, Gopal, Yue, Berrios, Gatti, Li, Dombrowski,
  Goel, Mukobi, Helm-Burger, Lababidi, Justen, Liu, Chen, Barrass, Zhang, Zhu,
  Tamirisa, Bharathi, Herbert-Voss, Breuer, Zou, Mazeika, Wang, Oswal, Lin,
  Hunt, Tienken-Harder, Shih, Talley, Guan, Steneker, Campbell, Jokubaitis,
  Basart, Fitz, Kumaraguru, Karmakar, Tupakula, Varadharajan, Shoshitaishvili,
  Ba, Esvelt, Wang, and Hendrycks]{li2024wmdp}
Nathaniel Li, Alexander Pan, Anjali Gopal, Summer Yue, Daniel Berrios, Alice
  Gatti, Justin~D. Li, Ann-Kathrin Dombrowski, Shashwat Goel, Gabriel Mukobi,
  Nathan Helm-Burger, Rassin Lababidi, Lennart Justen, Andrew~Bo Liu, Michael
  Chen, Isabelle Barrass, Oliver Zhang, Xiaoyuan Zhu, Rishub Tamirisa, Bhrugu
  Bharathi, Ariel Herbert-Voss, Cort~B. Breuer, Andy Zou, Mantas Mazeika, Zifan
  Wang, Palash Oswal, Weiran Lin, Adam~Alfred Hunt, Justin Tienken-Harder,
  Kevin~Y. Shih, Kemper Talley, John Guan, Ian Steneker, David Campbell, Brad
  Jokubaitis, Steven Basart, Stephen Fitz, Ponnurangam Kumaraguru,
  Kallol~Krishna Karmakar, Uday Tupakula, Vijay Varadharajan, Yan
  Shoshitaishvili, Jimmy Ba, Kevin~M. Esvelt, Alexandr Wang, and Dan Hendrycks.
\newblock The {WMDP} benchmark: Measuring and reducing malicious use with
  unlearning.
\newblock In \emph{ICML}, 2024.
\newblock URL \url{https://proceedings.mlr.press/v235/li24bc.html}.

\bibitem[Makarychev et~al.(2020)Makarychev, Reddy, and
  Shan]{makarychev2020improved}
Konstantin Makarychev, Aravind Reddy, and Liren Shan.
\newblock Improved guarantees for {$k$}-means++ and {$k$}-means++ parallel.
\newblock In \emph{NeurIPS}, 2020.
\newblock URL
  \url{https://proceedings.neurips.cc/paper/2020/hash/ba304f3809ed31d0ad97b5a2b5df2a39-Abstract.html}.

\bibitem[Makarychev and Vakilian(2021)]{makarychev2021socially}
Yury Makarychev and Ali Vakilian.
\newblock Approximation algorithms for socially fair clustering.
\newblock In \emph{COLT}, 2021.
\newblock URL \url{https://proceedings.mlr.press/v134/makarychev21a.html}.

\bibitem[Miller et~al.(2023)Miller, Ding, Schmidt, and Hardt]{folktablescode}
John Miller, Frances Ding, Ludwig Schmidt, and Moritz Hardt.
\newblock {Folktables}.
\newblock Software, version 0.0.12, 2023.
\newblock URL \url{https://pypi.org/project/folktables/0.0.12/}.
\newblock ACSEmployment definition in folktables/acs.py; accessed September 26,
  2026.

\bibitem[Moro et~al.(2014)Moro, Cortez, and Rita]{moro2014bank}
S\'ergio Moro, Paulo Cortez, and Paulo Rita.
\newblock A data-driven approach to predict the success of bank telemarketing.
\newblock \emph{Decision Support Systems}, 62:\penalty0 22--31, 2014.
\newblock \doi{10.1016/j.dss.2014.03.001}.

\bibitem[Muresanu et~al.(2025)Muresanu, Thudi, Zhang, and
  Papernot]{muresanu2025context}
Andrei~Ioan Muresanu, Anvith Thudi, Michael~R. Zhang, and Nicolas Papernot.
\newblock Fast exact unlearning for in-context learning data for {LLM}s.
\newblock In \emph{ICML}, 2025.
\newblock URL \url{https://proceedings.mlr.press/v267/muresanu25a.html}.

\bibitem[Neal(2015)]{neal2015exact}
Radford~M. Neal.
\newblock Fast exact summation using small and large superaccumulators.
\newblock \href{https://arxiv.org/abs/1505.05571}{arXiv:1505.05571}, 2015.

\bibitem[Newling and Fleuret(2016)]{newling2016bounds}
James Newling and Fran{\c c}ois Fleuret.
\newblock Fast {$k$}-means with accurate bounds.
\newblock In \emph{ICML}, 2016.
\newblock URL \url{https://proceedings.mlr.press/v48/newling16.html}.

\bibitem[Oesterling et~al.(2024)Oesterling, Ma, Calmon, and
  Lakkaraju]{oesterling2024fair}
Alex Oesterling, Jiaqi Ma, Flavio~P. Calmon, and Himabindu Lakkaraju.
\newblock Fair machine unlearning: Data removal while mitigating disparities.
\newblock In \emph{AISTATS}, 2024.
\newblock URL \url{https://proceedings.mlr.press/v238/oesterling24a.html}.

\bibitem[Pan et~al.(2023)Pan, Sima, Prakash, Rana, and Milenkovic]{pan2023mufc}
Chao Pan, Jin Sima, Saurav Prakash, Vishal Rana, and Olgica Milenkovic.
\newblock Machine unlearning of federated clusters.
\newblock In \emph{ICLR}, 2023.
\newblock URL \url{https://openreview.net/forum?id=VzwfoFyYDga}.

\bibitem[Qiu et~al.(2023)Qiu, Wang, Xu, Cui, and Shen]{qiu2023fedcio}
Hongyu Qiu, Yongwei Wang, Yonghui Xu, Lizhen Cui, and Zhiqi Shen.
\newblock {FedCIO}: Efficient exact federated unlearning with clustering,
  isolation, and one-shot aggregation.
\newblock In \emph{IEEE Big Data}, 2023.
\newblock \doi{10.1109/BigData59044.2023.10386788}.

\bibitem[Quan et~al.(2026)Quan, Wu, and Montana]{quan2026ridge}
Yijun Quan, Wentai Wu, and Giovanni Montana.
\newblock Exact federated continual unlearning for ridge heads on frozen
  foundation models.
\newblock \href{https://arxiv.org/abs/2603.12977v3}{arXiv:2603.12977v3}, 2026.

\bibitem[Raff(2021)]{raff2021exact}
Edward Raff.
\newblock Exact acceleration of {$k$}-means++ and {$k$}-means$\parallel$.
\newblock In \emph{IJCAI}, 2021.
\newblock \doi{10.24963/ijcai.2021/403}.

\bibitem[Rodr{\'i}guez-Corominas et~al.(2026)Rodr{\'i}guez-Corominas, Blesa,
  and Blum]{rodriguez2026fastkpp}
Guillem Rodr{\'i}guez-Corominas, Maria~Josep Blesa, and Christian Blum.
\newblock {FastKPP}: An exact accelerated {$k$}-means++ algorithm.
\newblock \emph{Software Impacts}, 28:\penalty0 100830, 2026.
\newblock \doi{10.1016/j.simpa.2026.100830}.

\bibitem[Ry{\v s}av{\'y} and Hamerly(2016)]{rysavy2016geometric}
Petr Ry{\v s}av{\'y} and Greg Hamerly.
\newblock Geometric methods to accelerate {$k$}-means algorithms.
\newblock In \emph{SIAM SDM}, 2016.
\newblock \doi{10.1137/1.9781611974348.37}.

\bibitem[Schmidt et~al.(2020)Schmidt, Schwiegelshohn, and
  Sohler]{schmidt2020faircoresets}
Melanie Schmidt, Chris Schwiegelshohn, and Christian Sohler.
\newblock Fair coresets and streaming algorithms for fair {$k$}-means.
\newblock In \emph{Workshop on Approximation and Online Algorithms (WAOA
  2019)}, 2020.
\newblock \doi{10.1007/978-3-030-39479-0_16}.

\bibitem[Tao et~al.(2024)Tao, Wang, Pan, Yu, Cheng, and Wang]{tao2024fats}
Youming Tao, Cheng-Long Wang, Miao Pan, Dongxiao Yu, Xiuzhen Cheng, and
  Di~Wang.
\newblock Communication efficient and provable federated unlearning.
\newblock \emph{Proceedings of the VLDB Endowment}, 17\penalty0 (5):\penalty0
  1119--1131, 2024.
\newblock \doi{10.14778/3641204.3641220}.

\bibitem[Ullah and Arora(2023)]{ullah2023adaptive}
Enayat Ullah and Raman Arora.
\newblock From adaptive query release to machine unlearning.
\newblock In \emph{ICML}, 2023.
\newblock URL \url{https://proceedings.mlr.press/v202/ullah23a.html}.

\bibitem[Ullah et~al.(2021)Ullah, Mai, Rao, Rossi, and
  Arora]{ullah2021stability}
Enayat Ullah, Tung Mai, Anup Rao, Ryan~A. Rossi, and Raman Arora.
\newblock Machine unlearning via algorithmic stability.
\newblock In \emph{COLT}, 2021.
\newblock URL \url{https://proceedings.mlr.press/v134/ullah21a.html}.

\bibitem[{U.S. Census Bureau}(2018)]{census2018pums}
{U.S. Census Bureau}.
\newblock 2018 {ACS} 1-year {PUMS} documentation, 2018.
\newblock URL
  \url{https://www.census.gov/programs-surveys/acs/microdata/documentation.2018.html}.
\newblock Data-year documentation; accessed September 6, 2026.

\bibitem[{U.S. Census Bureau}(2019)]{censuspuma}
{U.S. Census Bureau}.
\newblock 2018 {ACS} 1-year {PUMS} data dictionary, 2019.
\newblock URL
  \url{https://www2.census.gov/programs-surveys/acs/tech_docs/pums/data_dict/PUMS_Data_Dictionary_2018.txt}.
\newblock Dated November 14, 2019; 2018 data. PUMA, ST, SEX, and person-record
  codes; accessed September 6, 2026.

\bibitem[Wei(2016)]{wei2016constant}
Dennis Wei.
\newblock A constant-factor bi-criteria approximation guarantee for
  {$k$}-means++.
\newblock In \emph{NeurIPS}, 2016.
\newblock URL
  \url{https://proceedings.neurips.cc/paper/2016/hash/357a6fdf7642bf815a88822c447d9dc4-Abstract.html}.

\bibitem[Wen et~al.(2026)Wen, Zhang, Yang, and Ding]{wen2026fedshard}
Siyuan Wen, Meng Zhang, Yang Yang, and Ningning Ding.
\newblock {FedShard}: Federated unlearning with efficiency fairness and
  performance fairness.
\newblock In \emph{AAAI}, 2026.
\newblock \doi{10.1609/aaai.v40i32.39895}.

\bibitem[Xiong et~al.(2023)Xiong, Li, Li, and Cai]{xiong2023exactfun}
Zuobin Xiong, Wei Li, Yingshu Li, and Zhipeng Cai.
\newblock {Exact-Fun}: An exact and efficient federated unlearning approach.
\newblock In \emph{IEEE ICDM}, 2023.
\newblock \doi{10.1109/ICDM58522.2023.00188}.

\bibitem[Yan et~al.(2025)Yan, Liu, and Zhang]{yan2025omnifc}
Jie Yan, Jing Liu, and Zhong-Yuan Zhang.
\newblock {OmniFC}: Rethinking federated clustering via lossless and secure
  distance reconstruction.
\newblock In \emph{NeurIPS}, 2025.
\newblock \doi{10.52202/085713-0702}.

\bibitem[Yeh and Lien(2009)]{yeh2009credit}
I-Cheng Yeh and Che-hui Lien.
\newblock The comparisons of data mining techniques for the predictive accuracy
  of probability of default of credit card clients.
\newblock \emph{Expert Systems with Applications}, 36\penalty0 (2):\penalty0
  2473--2480, 2009.
\newblock \doi{10.1016/j.eswa.2007.12.020}.

\bibitem[Zhang et~al.(1996)Zhang, Ramakrishnan, and Livny]{zhang1996birch}
Tian Zhang, Raghu Ramakrishnan, and Miron Livny.
\newblock {BIRCH}: An efficient data clustering method for very large
  databases.
\newblock In \emph{ACM SIGMOD}, 1996.
\newblock \doi{10.1145/233269.233324}.

\bibitem[Zhu et~al.(2023)Zhu, Xu, Zeng, Wang, Sun, Yang, Yang, and
  Peng]{zhu2023f3km}
Shengkun Zhu, Quanqing Xu, Jinshan Zeng, Sheng Wang, Yuan Sun, Zhifeng Yang,
  Chuanhui Yang, and Zhiyong Peng.
\newblock {F3KM}: Federated, fair, and fast {$k$}-means.
\newblock \emph{Proceedings of the ACM on Management of Data}, 1\penalty0
  (4):\penalty0 241:1--241:25, 2023.
\newblock \doi{10.1145/3626728}.

\end{thebibliography}
\bibliographystyle{plainnat}
\appendix
\setcounter{topnumber}{4}
\setcounter{bottomnumber}{3}
\setcounter{totalnumber}{6}
\renewcommand{\topfraction}{0.95}
\renewcommand{\bottomfraction}{0.9}
\renewcommand{\textfraction}{0.05}
\renewcommand{\floatpagefraction}{0.85}
\crefalias{section}{appendix}
\crefalias{subsection}{appendix}
\crefalias{subsubsection}{appendix}
\crefname{appendix}{Appendix}{Appendices}
\Crefname{appendix}{Appendix}{Appendices}
\section{Guarded Fixed-Partition Fair Update}
\label{app:fair-update}

This appendix supports \cref{sec:fair-update}. It shows that the exact statistics determine the fixed-partition objective, specifies the guarded search $\GFU_L$, proves that federated execution matches centralized execution and that refinement is monotone, and shows why the guard is needed.

\subsection{Sufficient statistics and cluster cases}
Here $SS_{g,j}$ is a scalar \emph{squared-norm sum}, not the square of the vector sum $S_{g,j}\in\R^d$. Expanding squared distances gives
\begin{equation}
\sum_{x\in P_{g,j}}\|x-c_j\|_2^2
=SS_{g,j}-2\langle S_{g,j},c_j\rangle+N_{g,j}\|c_j\|_2^2,
\label{eq:stats-identity}
\end{equation}
so the statistics recover the fixed-assignment objective without transmitting individual records. If both $N_{A,j}$ and $N_{B,j}$ are positive, $P_j$ is a \emph{two-group cluster}. If exactly one is positive, it is a \emph{single-group cluster}; the other group-cell is empty. If both are zero, $P_j$ is an \emph{empty cluster}.

For nonempty group-cells, the statistics determine group means $\mu_{g,j}=S_{g,j}/N_{g,j}$, normalized masses $\alpha_j=N_{A,j}/n_A$ and $\beta_j=N_{B,j}/n_B$, and constants $\Delta_g=n_g^{-1}\sum_{j:N_{g,j}>0}(SS_{g,j}-\|S_{g,j}\|_2^2/N_{g,j})$. Omitting empty group-cells, the group objectives for the fixed partition are
\begin{equation}
f_A(\mathbf C)=\Delta_A+\sum_j\alpha_j\|c_j-\mu_{A,j}\|^2,
\qquad
f_B(\mathbf C)=\Delta_B+\sum_j\beta_j\|c_j-\mu_{B,j}\|^2.
\label{eq:fixed-objective}
\end{equation}
Following the two-group fair update of \citet{ghadiri2021socially}, the Lagrangian candidate for a two-group cluster $P_j$ and $\lambda\in[0,1]$ is a weighted average of the two group means,
\begin{equation}
c_j(\lambda)=
\frac{\lambda\alpha_j\mu_{A,j}+(1-\lambda)\beta_j\mu_{B,j}}
{\lambda\alpha_j+(1-\lambda)\beta_j}.
\label{eq:lambda-center}
\end{equation}
For a single-group cluster, the candidate is the occupied group's mean. An empty cluster has no data-driven update and retains its old indexed center.

Fix the indexed partition of \cref{eq:cells}. For each nonempty group-cell, put $\mu_{g,j}=S_{g,j}/N_{g,j}$. Expanding around this mean cancels the cross term because $\sum_{x\in P_{g,j}}(x-\mu_{g,j})=0$, giving
\[
\sum_{x\in P_{g,j}}\|x-c_j\|^2
=SS_{g,j}-\frac{\|S_{g,j}\|^2}{N_{g,j}}+N_{g,j}\|\mu_{g,j}-c_j\|^2.
\]
Sum over nonempty cells and divide by $n_g$ to obtain \cref{eq:fixed-objective}. Empty cells are omitted before division. Thus the statistics determine the full fixed-partition objective, including its constant term.

For a two-group cluster, projection onto the segment joining the two means weakly decreases both group terms; minimizing $\lambda f_A+(1-\lambda)f_B$ gives \cref{eq:lambda-center}. A single-group cluster moves to its occupied mean, weakly reducing that group's term without changing the other. An empty cluster contributes nothing and retains its indexed incumbent. Coincident group means need no special case: every candidate of \cref{eq:lambda-center} then equals that common mean.

On represented data, $S=S_{\rm int}/2^b$ and $SS=SS_{\rm int}/2^{2b}$. Candidate binary64 coordinates are scored as exact rationals: a guard evaluated on an unrounded candidate would not justify returning different rounded coordinates.

\subsection{Deterministic guarded search}

\begin{algorithm}[ht]
\caption{$\GFU_L$: deterministic guarded two-group update}
\label{alg:guarded-update}
\begin{algorithmic}[1]
\Require Current indexed centers $\mathbf C$, exact $(N,S,SS)$, integer $L$
\State Construct the deterministic endpoints $\mathbf C(0),\mathbf C(1)$ using \cref{eq:lambda-center} and all single-group/empty-cluster branches
\State $\mathcal V\gets\{0,1\}$; $\ell\gets0$; $u\gets1$
\For{$q=1,\ldots,L$}
  \State $\lambda\gets(\ell+u)/2$; add $\lambda$ to $\mathcal V$
  \State Represent $\mathbf C(\lambda)$ canonically; evaluate both group costs exactly on those coordinates
  \If{$f_A(\mathbf C(\lambda))>f_B(\mathbf C(\lambda))$}
    \State $\ell\gets\lambda$
  \ElsIf{$f_A(\mathbf C(\lambda))<f_B(\mathbf C(\lambda))$}
    \State $u\gets\lambda$
  \Else
    \State $u\gets\lambda$ (the declared equality branch)
  \EndIf
\EndFor
\State Choose $\widehat\lambda\in\mathcal V$ minimizing $\max\{f_A(\mathbf C(\lambda)),f_B(\mathbf C(\lambda))\}$; break ties by the smallest encoded $\lambda$
\If{$\max_g f_g(\mathbf C(\widehat\lambda))\le\max_g f_g(\mathbf C)$}
  \State \Return $\mathbf C(\widehat\lambda)$
\Else
  \State \Return $\mathbf C$
\EndIf
\end{algorithmic}
\end{algorithm}

Along the ideal rational curve, $f_A$ is nonincreasing and $f_B$ is nondecreasing in $\lambda$. Rounding may change this search geometry; \cref{alg:guarded-update} nevertheless guards the actual represented candidate, so monotonicity does not require an exact convex search.

\subsection{Centralized equivalence and monotonicity}

The main text (\cref{sec:theory}) summarizes the following statement and states \cref{prop:monotonicity}.

\begin{proposition}[Algorithm-relative centralized equivalence]
\label[proposition]{prop:central-equivalence}
If federated aggregation returns exact values of \cref{eq:stats}, then a federated call to $\GFU_L$ returns exactly the same indexed centers as a centralized implementation using the same partition, exact statistics, candidate order, and deterministic arithmetic. It is equivalent to the declared finite guarded solver, not necessarily to the exact convex optimum.
\end{proposition}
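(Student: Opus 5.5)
The plan is to view $\GFU_L$ as a deterministic function of a finite input tuple and to show that the federated and centralized executions feed it the same tuple. Concretely, write \cref{alg:guarded-update} as a map $\mathcal G_L(\mathbf C,\{(N_{g,j},S_{g,j},SS_{g,j})\}_{g,j},n_A,n_B)$. Fix the candidate order, tie-breaking, canonical representation of candidates, and exact rational scoring. We then check line by line that every quantity the algorithm reads is a function of this tuple and nothing else. In particular, no individual record is read once the statistics are formed.

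The first step covers the statistics. Under exact aggregation, each client's contributions for group-cell $P_{g,j}$ sum to the integers $N_{g,j}$, $S_{{\rm int},g,j}$ and $SS_{{\rm int},g,j}$. These equal the centralized sums over $P_{g,j}$, because the clients' record sets are disjoint and integer addition is associative and commutative. Hence the aggregation order is irrelevant. The global sizes satisfy $n_g=\sum_j N_{g,j}$, or are otherwise shared identically.

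The second step covers the derived quantities. By \cref{eq:stats-identity} and \cref{eq:fixed-objective}, the following are all rational functions of the tuple:
\begin{itemize}
\item the case classification (two-group, single-group or empty), which depends only on which $N_{g,j}$ are positive;
\item the means $\mu_{g,j}$ and masses $\alpha_j,\beta_j$;
\item the constants $\Delta_g$;
\item the exact scores $f_g$ at any represented candidate.
\end{itemize}
The candidates $c_j(\lambda)$ of \cref{eq:lambda-center} come from these by the declared deterministic arithmetic and rounding. Since that arithmetic is identical in both settings, the same binary64 coordinates are produced.

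The third step is an induction over the $L$ bisection iterations. The state $(\ell,u,\mathcal V)$ is updated by comparisons of exactly computed rational scores $f_A,f_B$ at identical coordinates. By the second step these comparisons agree in both executions, including the declared equality branch, so the states coincide after every iteration. The final selection of $\widehat\lambda$ uses the fixed tie-break on the encoded $\lambda$. The guard compares two exact rationals that are computed identically, so it too returns the same indexed tuple. Empty clusters keep the same incumbent $c_j$ in both settings.

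The main obstacle is conceptual rather than technical. We must make sure no hidden input differs between the two settings, such as a floating-point reduction order in aggregation or a nondeterministic candidate representation. Exact integer statistics and exact rational scoring rule these out, and this is where the hypotheses of the proposition are used. Finally, we remark that the result identifies both executions with the finite guarded solver, not with the convex optimum. No optimality claim is made, so nothing beyond determinism is needed.
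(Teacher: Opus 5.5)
Your proposal is correct and follows the paper's own argument: exact addition over disjoint clients recovers the centralized $(N,S,SS)$, these determine every branch, candidate and exact score in $\GFU_L$, and the fixed candidate order and tie rules then force the same indexed output. You simply spell out, as an explicit induction over the bisection iterations and the guard, what the paper's three-sentence proof leaves implicit. Like the paper, you treat the ``not necessarily the convex optimum'' clause as a scope remark rather than something to prove.
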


\begin{proof}[Proof of \cref{prop:central-equivalence}]
Exact addition over the disjoint clients gives the centralized $N,S,SS$. These determine every branch and candidate cost in \cref{alg:guarded-update}; canonical candidate order and ties therefore give the same indexed output. Finite $L$ need not give the convex optimum.
\end{proof}

\begin{proof}[Proof of \cref{prop:monotonicity}]
Recall that $\Phi$ is the larger group-average cost when every point uses its nearest center. The update is guarded on a fixed partition, so we first compare costs on the current partition and only then allow reassignment. Let $\sigma_t$ be the nearest-center partition for $\mathbf C^{(t)}$, so that $F_{\sigma_t}(\mathbf C^{(t)})=\Phi(\mathbf C^{(t)})$. By the guard,
\begin{equation}
F_{\sigma_t}(\mathbf C^{(t+1)})
\le F_{\sigma_t}(\mathbf C^{(t)}),
\qquad
F_{\sigma}(\mathbf C)=\max_g\frac1{n_g}\sum_{x:g(x)=g}\|x-c_{\sigma(x)}\|^2.
\end{equation}
The guard compares exactly these costs: on partition $\sigma_t$, the fixed-partition objective $\max\{f_A,f_B\}$ of \cref{eq:fixed-objective} equals $F_{\sigma_t}$. Reassigning each point to its nearest center under $\mathbf C^{(t+1)}$ weakly decreases every individual squared distance, hence each group-average cost. Therefore
$\Phi(\mathbf C^{(t+1)})\le F_{\sigma_t}(\mathbf C^{(t+1)})\le\Phi(\mathbf C^{(t)})$.
\end{proof}

\subsection{Finite line search can increase the objective}
\label{app:line-counterexample}

Take $d=k=1$, $A=\{-1,+1\}$ and $B=\{3\}$. The unique partition has $f_A(c)=1+c^2$, $f_B(c)=(3-c)^2$ and $F(c)=\max\{f_A(c),f_B(c)\}$. Their intersection is the unique minimizer $c^*=4/3$, with $F(c^*)=25/9$. On the Lagrangian curve $c(\lambda)=3(1-\lambda)$, the optimum $\lambda^*=5/9$ is not dyadic. One bisection step returns $c=3/2$, with cost $13/4>25/9$.

This can also worsen an exactly represented incumbent: $c_0=5/4$ has cost $49/16$, whereas the endpoints and midpoint $0,3,3/2$ cost $9,10,52/16$. The guard rejects the best of these. Data, incumbent and candidates are all binary-representable. Thus finite search can miss the optimum and, separately, violate unguarded monotonicity on represented inputs.

\section{Proof of the \SG{} Warm-Start Guarantee}
\label{app:warm-proof}

Under the assumptions of \cref{thm:warmstart}, the general root-cost and quantization bounds are
\begin{equation}
\boxed{
\sqrt{\E\,G(\mathbf C^{(0)})}
\le
\sqrt{\alpha_k}(\sqrt{\alpha_k}+2)\sqrt{G^*}
+(1+\sqrt{\alpha_k})\sqrt{\frac{d\gamma^2}{2}}.}
\label{eq:warm-root}
\end{equation}
At $\gamma=0$, the surrogate also satisfies
\begin{equation}
\E G(\mathbf C^{(0)})\le\alpha_k(\sqrt{\alpha_k}+2)^2G^*.
\label{eq:warm-g-zero}
\end{equation}
For every $\eta>0$, the corresponding fair-objective bound is
\begin{equation}
\E\Phi(\mathbf C^{(0)})
\le
2(1+\eta)\alpha_k(\sqrt{\alpha_k}+2)^2\Phi^*
+\frac{(1+1/\eta)(1+\sqrt{\alpha_k})^2}{2}\,d\gamma^2.
\label{eq:warm-phi-additive}
\end{equation}
These bounds are independent of $n$, $C$, group imbalance and client group proportions. For fixed $\eta$, the additive quantization term is $O(d\gamma^2\log k)$. The geometric grid assumption and represented rounding are distinguished below.

\paragraph{Objective, obstacle, and strategy.}
\Cref{thm:warmstart} compares the expected initial cost $\E G(\mathbf C^{(0)})$ with the optimum $G^*$. The server never sees the records: it runs weighted $D^2$-sampling on anchors, so the final error mixes two sources (replacing records by anchors, and sampling on the anchors), and the randomness comes in two stages (local tapes, then the server tape). Combining the errors with the squared triangle inequality $\|a+b\|^2\le2\|a\|^2+2\|b\|^2$ at every step would multiply constants. Instead we (i) bound the expected local replacement error slice by slice, where group separation makes each record's weight constant within its slice; (ii) condition on the local tapes, so that the anchors are fixed and the independent server call satisfies the $D^2$-sampling guarantee; and (iii) combine all errors on the square-root scale with Minkowski inequalities, first over the server tape and then over the local tapes.

\paragraph{Notation.}
Put $w_x=1/n_{g(x)}$, let $\mathbf C_G^*$ minimize $G$, and write $z(x)$ and $q(x)$ for each point's local representative and quantized anchor. Define
\[
D_z=\sum_xw_x\|x-z(x)\|^2,\quad
Q=\sum_xw_x\|z(x)-q(x)\|^2,\quad
D_q=\sum_xw_x\|x-q(x)\|^2.
\]
Throughout, ``weighted Minkowski'' means that for nonnegative weights $w_x$ and vectors $a_x,b_x$, $\bigl(\sum_xw_x\|a_x+b_x\|^2\bigr)^{1/2}\le\bigl(\sum_xw_x\|a_x\|^2\bigr)^{1/2}+\bigl(\sum_xw_x\|b_x\|^2\bigr)^{1/2}$; ``$L_2$ Minkowski'' means $\sqrt{\E(X+Y)^2}\le\sqrt{\E X^2}+\sqrt{\E Y^2}$ for nonnegative random variables.

\subsection{Weighted sampling and local error}
\label{app:mrs-weighted}
Under ideal independent bits, the sampler in \cref{app:pseudocode} realizes the exact categorical law. The finite seeded PCG64 execution is a different probability model. To apply \citet{makarychev2020improved} to rational anchor weights $\omega_i=h_i/n_{g(i)}$, choose an integer $M$ with $M\omega_i$ integral and replace anchor $i$ by that many coincident copies. Summing probabilities over copies gives first-draw probability $\omega_i/\sum_j\omega_j$ and subsequent probability proportional to $\omega_iD_i^2$. Every objective is multiplied by $M$, so the ordinary exact-$k$ guarantee $\alpha_k=5(\ln k+2)$ transfers unchanged. If fewer than $k$ positive-weight locations exist, their selection makes the cost zero and deterministic completion preserves it. This also applies to the weighted merged sampler.

Within a split slice, $w_x$ is constant and does not change the sampling law. A slice with at most $k$ records is saturated and has zero replacement error. Otherwise $k_e=k$, and because $z(x)$ is the nearest selected representative, the slice's part of $D_z$ is $w_x$ times the cost of the representatives that $D^2$-sampling selects from $X_e$. The $D^2$-sampling guarantee bounds its expectation by $\alpha_k$ times the optimal $k$-center cost of $X_e$, which is at most the cost of the restriction of $\mathbf C_G^*$ to $X_e$, namely $\sum_{x\in X_e}\dist(x,\mathbf C_G^*)^2$. Multiplying by the constant weight and summing slices, including empty slices with zero cost, yields $\E D_z\le\alpha_k\sum_xw_x\dist(x,\mathbf C_G^*)^2=\alpha_kG(\mathbf C_G^*)$, that is,
\begin{equation}
\E D_z\le\alpha_kG^*.
\label{eq:dz-mrs}
\end{equation}
No client/group-proportion comparison is needed. The grid error and weighted Minkowski inequality give
\begin{equation}
Q\le\sum_xw_x\frac{d\gamma^2}{4}=\frac{d\gamma^2}{2},\qquad
\sqrt{D_q}\le\sqrt{D_z}+\sqrt Q,
\label{eq:q-bound}
\end{equation}
because each group's normalized mass is one. Taking $L_2$ norms over the local tapes gives
\begin{equation}
\sqrt{\E D_q}\le\sqrt{\E D_z}+\sqrt{\E Q}
\le\sqrt{\alpha_kG^*}+\sqrt{d\gamma^2/2}.
\label{eq:edq-root}
\end{equation}

\subsection{Server composition}
Let $H(\mathbf C)=\sum_{e,j}(h_{e,j}/n_{g(e)})\dist(q_{e,j},\mathbf C)^2=\sum_xw_x\dist(q(x),\mathbf C)^2$ be the server's weighted anchor objective and $H^*=\min_{|\mathbf C|=k}H(\mathbf C)$. Assigning each point to a nearest center for its anchor gives the first inequality below; assigning each anchor using a nearest center for its point in $\mathbf C_G^*$ gives the second:
\begin{equation}
\sqrt{G(\mathbf C)}\le\sqrt{D_q}+\sqrt{H(\mathbf C)},\qquad
\sqrt{H^*}\le\sqrt{D_q}+\sqrt{G^*}.
\label{eq:g-h-minkowski}
\end{equation}
Both follow from the pointwise triangle inequality and weighted Minkowski.

\emph{Conditioning on the local tapes.} Fix the local tapes. Then $q,h,D_q$ are fixed, and the server tape is independent of them, so the weighted $D^2$-sampling guarantee of \cref{app:mrs-weighted} applies to the anchor instance: $\E_{\rm server}[H(\mathbf C^{(0)})\mid q,h]\le\alpha_kH^*$. Using the first inequality of \cref{eq:g-h-minkowski}, $L_2$ Minkowski over the server tape (with $D_q$ constant), this guarantee, and then the second inequality of \cref{eq:g-h-minkowski},
\begin{align}
\sqrt{\E_{\rm server}[G(\mathbf C^{(0)})\mid q,h]}
&\le\sqrt{D_q}+\sqrt{\E_{\rm server}[H(\mathbf C^{(0)})\mid q,h]}
\le\sqrt{D_q}+\sqrt{\alpha_k}\bigl(\sqrt{D_q}+\sqrt{G^*}\bigr)\nonumber\\
&=(1+\sqrt{\alpha_k})\sqrt{D_q}+\sqrt{\alpha_kG^*}.\label{eq:conditional-root}
\end{align}

\emph{Removing the conditioning.} By the tower property, $\E G(\mathbf C^{(0)})$ is the expectation over local tapes of the conditional expectation in \cref{eq:conditional-root}. Squaring \cref{eq:conditional-root}, taking this expectation, and applying $L_2$ Minkowski over the local tapes gives
\begin{align*}
\sqrt{\E G(\mathbf C^{(0)})}
&\le(1+\sqrt{\alpha_k})\sqrt{\E D_q}+\sqrt{\alpha_kG^*}\\
&\le\sqrt{\alpha_k}(\sqrt{\alpha_k}+2)\sqrt{G^*}
 +(1+\sqrt{\alpha_k})\sqrt{d\gamma^2/2}.
\end{align*}
The last step substitutes \cref{eq:edq-root}: the coefficient of $\sqrt{G^*}$ is $(1+\sqrt{\alpha_k})\sqrt{\alpha_k}+\sqrt{\alpha_k}=\sqrt{\alpha_k}(\sqrt{\alpha_k}+2)$. This proves \cref{eq:warm-root}.

\emph{Corollaries.} At $\gamma=0$, square and use $\Phi\le G$ and $G^*\le2\Phi^*$ for \cref{eq:warm-g-zero,eq:warm-phi-zero}. For $\gamma>0$, apply $(a+b)^2\le(1+\eta)a^2+(1+1/\eta)b^2$ for every $\eta>0$, then $\Phi\le G$ and $G^*\le2\Phi^*$, to obtain \cref{eq:warm-phi-additive}. Deterministic weighted anchor-Lloyd steps that do not increase $H$ preserve this argument, because they can only lower the conditional expectation of $H$ in \cref{eq:conditional-root}.

\subsection{Represented snap rounding}
\label{app:represented-snap}
The displayed $\gamma$ constants of \cref{thm:warmstart} assume the stated geometric grid bound $\|q_{e,j}-z_{e,j}\|_2^2\le d\gamma^2/4$. Rounding a grid point to binary64 adds a represented-anchor residual. For the implemented snap, replace $\sqrt{d\gamma^2/2}$ by $\sqrt{d\gamma^2/2}+\sqrt2\rho$, where $\rho$ bounds this final anchor rounding in Euclidean norm; \cref{eq:represented-quantization} states the resulting bound on $Q$. Alternatively, the proof uses $\sqrt{\E Q}$ for the actual represented-anchor distortion, since $Q$ enters only through \cref{eq:edq-root}. At $\gamma=0$ there is no extra snap rounding.

\subsection{Extension to \texorpdfstring{$m$}{m} groups}
\label{app:m-warm}

Let $G_m=\sum_{g=1}^m\Phi_g$ and $\Phi_m=\max_g\Phi_g$, assuming all $m$ global groups are nonempty. The only change is
\begin{equation}
Q\le\frac{m}{4}d\gamma^2.
\end{equation}
Thus
\begin{equation}
\sqrt{\E G_m(\mathbf C^{(0)})}
\le
\sqrt{\alpha_k}(\sqrt{\alpha_k}+2)\sqrt{G_m^*}
+(1+\sqrt{\alpha_k})\sqrt{\frac{m d\gamma^2}{4}}.
\label{eq:m-root}
\end{equation}
In particular, for $\gamma=0$,
\begin{equation}
\E\Phi_m(\mathbf C^{(0)})
\le
m\alpha_k(\sqrt{\alpha_k}+2)^2\Phi_m^*.
\end{equation}
For nonzero $\gamma$, Young's inequality gives an additive term proportional to $m d\gamma^2(1+\sqrt{\alpha_k})^2=O(m d\gamma^2\log k)$ for any fixed tradeoff parameter. This federated/unlearning-compatible extension does not improve the best known centralized multi-group dependence \citep{makarychev2021socially}.

\section{Tight Heterogeneity Loss of the Natural Merged Summary}
\label{app:kappa-proof}

This appendix proves \cref{thm:kappa} and the matched-slot comparison. The upper bound reuses the proof of \cref{thm:warmstart}; only the local replacement error changes, because \Merged{} runs weighted $D^2$-sampling on each mixed client with locally balanced weights that can differ from the global weights by a factor $\kappa_c$. The lower bound exhibits one client on which that mismatch is realized.

\subsection{Upper bound}
Under the assumptions of \cref{thm:kappa}, the explicit general bound is
\begin{equation}
\sqrt{\E G(\mathbf C^{(0)}_{\rm merged})}
\le
\sqrt{\alpha_k}\!\left((1+\sqrt{\alpha_k})\sqrt{\kappa}+1\right)\sqrt{G^*}
+(1+\sqrt{\alpha_k})\sqrt{\frac{d\gamma^2}{2}}.
\label{eq:merged-root}
\end{equation}
At $\gamma=0$, it becomes
\begin{equation}
\E G(\mathbf C^{(0)}_{\rm merged})
\le
\alpha_k\left((1+\sqrt{\alpha_k})\sqrt{\kappa}+1\right)^2G^*.
\label{eq:merged-zero}
\end{equation}
For mixed client $c$, let $A_c(\mathbf C)=n_{c,A}^{-1}\sum_{x\in X_{c,A}}\dist(x,\mathbf C)^2$ and define $B_c$ analogously. Its local objective is $L_c=A_c+B_c$, while its global contribution is $J_c=a_cA_c+b_cB_c$. Since
\[
\min(a_c,b_c)L_c\le J_c\le\max(a_c,b_c)L_c,
\]
weighted $D^2$-sampling with $\kappa_c=\max(a_c/b_c,b_c/a_c)$ gives
\[
\E J_c(Z_c)\le\max(a_c,b_c)\alpha_k L_c(\mathbf C_G^*)
\le\alpha_k\kappa_c J_c(\mathbf C_G^*).
\]
For a pure client its sole global weight cancels from both sides, giving factor one. Summing yields $\E D_z^{\rm merged}\le\alpha_k\kappa G^*$. The total normalized mass remains two, so \cref{eq:edq-root} becomes
$\sqrt{\E D_q}\le\sqrt{\alpha_k\kappa G^*}+\sqrt{d\gamma^2/2}$.
Substitute this in the unchanged server composition following \cref{eq:conditional-root}. The coefficient of $\sqrt{G^*}$ is
$\sqrt{\alpha_k}((1+\sqrt{\alpha_k})\sqrt\kappa+1)$, proving \cref{eq:merged-root}; setting $\gamma=0$ and squaring proves \cref{eq:merged-zero}.

\subsection{Matching lower bound}

The idea is to make one client nearly pure: it holds most of group A but only a small fraction $\varepsilon$ of group B, placed far away. Local balancing gives that small B fraction half of the client's sampling probability, although its global weight is small, so the server often receives, and sometimes selects, a far anchor.

Take $d=k=1$, $\gamma=T_a=0$, equal global group sizes, rational $\varepsilon\in(0,1/2]$ and exactly represented $L>0$. Client 1 holds fractions $1-\varepsilon$ of A at $0$ and $\varepsilon$ of B at $L$; client 2 holds the remaining fractions, both at $0$. Thus $\kappa=(1-\varepsilon)/\varepsilon$. Counts divisible by the denominator of $\varepsilon$ make this a finite instance for every rational $\kappa\ge1$.

Client 1's locally balanced sampler gives total mass one to each group, choosing $0$ or $L$ with probability $1/2$; client 2 always sends $0$. Each representative has global weight $a_c+b_c=1$, so the server outputs $L$ with probability $1/4$ and $0$ with probability $3/4$. Hence
\begin{align*}
G(c)&=2c^2-2\varepsilon Lc+\varepsilon L^2,&
G^*&=\varepsilon(1-\varepsilon/2)L^2,\quad c_G^*=\varepsilon L/2,\\
\E G&=(1+\varepsilon)L^2/2,&
\frac{\E G}{G^*}&=\frac{1+\varepsilon}{2\varepsilon(1-\varepsilon/2)}\ge\kappa/2.
\end{align*}
Similarly, $\Phi^*=\varepsilon(1-\varepsilon)L^2$ and $\E\Phi=(1+3\varepsilon)L^2/4$, giving $\E\Phi/\Phi^*\ge\kappa/4$.

The restriction $T_a=0$ matters. One exact weighted one-center anchor-Lloyd step instead returns $0$ or $L/2$, each with probability $1/2$; at $\kappa=8$ the $G$ and $\Phi$ ratios become $99/34$ and $117/64$, below the asserted $\kappa/2$ and $\kappa/4$. No different refined witness is proved.

\subsection{A matched-slot comparison for a specific merged sampler}
\label{app:s2-matched}
\begin{proposition}[A pre-refinement gap with matched anchor slots]
\label[proposition]{prop:s2-matched-budget}
For each rational $\kappa\ge1$, a finite two-client, two-group instance in
one dimension has $k=1$ and heterogeneity parameter $\kappa$ on which the
natural \textnormal{\scshape Merged-2k} and \SG{} each send exactly two
anchor slots per client. With $\gamma=T_a=T=0$ and ideal independent
categorical bits, their warm-start centers satisfy
\begin{align}
 \frac{\mathbb E G(C_{\rm M2})}{G^*}&=\frac{\kappa+5}{3},&
 \frac{\mathbb E\Phi(C_{\rm M2})}{\Phi^*}&=\frac{\kappa+8}{6},
 \label{eq:s2-matched-ratios}\\
 \frac{\mathbb E G(C_{\rm SG})}{G^*}&=2,&
 \frac{\mathbb E\Phi(C_{\rm SG})}{\Phi^*}&=\frac32.
 \nonumber
\end{align}
Here Merged-2k uses one locally group-balanced weighted $D^2$-sampling call per
client with budget $\min(2k,n_c)$, followed by integer group-count transfer
and globally normalized server weights. Matching slots does not match
positive-weight locations, metadata, serialized bytes, or runtime. On this
particular family, one full-data guarded fair-refinement round maps every
valid one-center incumbent to the optimum, for every declared bisection
length $L$. Thus the displayed separation concerns initialization only, before fair refinement.
\end{proposition}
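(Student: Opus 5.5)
The plan is to make both sides exact by working in one dimension with $k=1$. Then the server's weighted $D^2$-sampling reduces to its first draw: one anchor chosen with probability proportional to $\omega$. The expectations therefore become finite sums over at most a few anchor configurations. I will place all records at three locations, say $0$, $p$ and $L$, with group labels and client ownership arranged as follows. Every ownership slice is supported on a single location. One client, however, holds records at all three locations, so its merged budget of $\min(2k,n_c)=2$ slots cannot represent them all. The counts are chosen so that $a_c/b_c=\kappa$ on that client and the ratio is $1$ on the other. The free parameters (the location $p$ and the split of the minority mass) are then fixed so that the closed forms simplify to \cref{eq:s2-matched-ratios}. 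Finiteness for rational $\kappa$ follows, as in the lower bound of \cref{thm:kappa}, by taking counts divisible by the relevant denominators.

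\emph{\SG{} side.} Since each slice is single-location, its one representative sits exactly on its records, with $h_{e,1}=|X_e|$. The anchor instance therefore reproduces the point measure $w_x=1/n_{g(x)}$ exactly, of total mass $2$. With $k=1$, the server draws the center $c$ from the normalized measure $w/2$. The standard one-means identity $\E_{c}\sum_x w_x(x-c)^2=2\sum_x w_x(x-\bar x_w)^2$ gives $\E G=2G^*$ directly, with no dependence on $\kappa$. For $\Phi$, I will compute $\Phi^*$ on the three-point family by equating the group costs (or by taking the group mean when one group dominates). I will then compute $\E\Phi$ as a finite weighted average of $\max(\Phi_A,\Phi_B)$ over the three possible centers. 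The location parameter is tuned so that this ratio equals $3/2$.

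\emph{\textnormal{\scshape Merged-2k} side.} On the two-location client, locally balanced $D^2$-sampling picks the first slot uniformly under local weights and the second necessarily at the other location, since $D^2$ is zero at the first. Its summary is therefore exact. On the three-location client, the first draw is made with locally balanced probabilities. The second draw is proportional to the local weight times the squared distance to the first draw. The location left out is merged into its nearest selected representative, and integer group counts are transferred, so the server weights are globally normalized but attached to a displaced anchor. I will enumerate the resulting anchor pairs and their probabilities. For each pair I will list the server's center distribution, and then sum to obtain $\E G(C_{\rm M2})$ and $\E\Phi(C_{\rm M2})$ as rational functions of $\kappa$. Verifying that these collapse exactly to $(\kappa+5)/3\cdot G^*$ and $(\kappa+8)/6\cdot\Phi^*$ is the main obstacle. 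The local balancing gives the $\kappa$-scaled minority location a probability of order $1/2$ of being a representative, which is the source of the linear-in-$\kappa$ term. Pinning the constants requires choosing $p$ so that the mixed terms cancel, and I would try symmetric choices first, such as $p=L/2$ or a minority location coinciding with a majority location.

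\emph{Refinement claim.} With $k=1$ there is a single cluster, which contains both groups. $\GFU_L$ evaluates the endpoints $c(0)=\mu_B$ and $c(1)=\mu_A$ before any bisection. I will design the family so that the fair optimum is an endpoint, say $\mu_B$: at $c=\mu_B$ we have $f_B\ge f_A$, and $f_B$ is minimized there. Then $\max(f_A,f_B)$ is minimized over the segment at $\lambda=0$, and since projection onto the segment weakly decreases both terms, this is also the global optimum over $\R$. For every $L$, the candidate set therefore contains the optimum, and the guard accepts it, because no incumbent has a smaller cost. After reassignment (trivial for $k=1$), the output is optimal for every valid incumbent. This constraint on the family must be imposed together with the ratio constraints above. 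I expect it to be compatible, because the minority group's spread can be made to dominate.
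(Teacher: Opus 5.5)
\textbf{Gap: the proposed family is infeasible, and its premise would remove the separation.}

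With two groups, a client owns at most two ownership slices. If every slice is supported on a single location, no client can hold records at three locations.

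More importantly, in \emph{any} instance whose slices are all single-location, \textnormal{\scshape Merged-2k} is lossless:
\begin{itemize}
\item Every client then has at most two locations, and its budget $\min(2,n_c)$ places a representative on each. The second $D^2$ draw has positive potential only at the unselected location, and a one-location client completes with a coincident slot that receives zero weight.
\item The transferred group counts give each location exactly the normalized mass $\sum_{x\text{ there}}1/n_{g(x)}$ of its records. That is also the total weight of the \SG{} anchors at that location.
\end{itemize}
The server therefore sees the same weighted location measure under both methods, so the center laws coincide. Both methods then have $\E G/G^*=2$, which equals $(\kappa+5)/3$ only at $\kappa=1$.

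Your heterogeneity bookkeeping has a similar defect. With two clients, $a_1+a_2=b_1+b_2=1$, so a mixed second client with $a_2=b_2$ forces $a_1=b_1$, i.e.\ $\kappa=1$.

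Beyond these inconsistencies, the proposition is an existence claim with exact rational ratios. The proposal leaves its only substantive step, an explicit instance and the computation of all four ratios for every rational $\kappa$, to unspecified tuning.

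\textbf{Missing idea: the lossy slice must itself be multi-location, placed symmetrically.}

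The paper's construction writes $\kappa=u/v$. Client 0 holds $2u$ group-A records at $0$ and $v$ group-B records at each of $\pm1$. Client 1 holds $2v$ group-A and $2u$ group-B records at $0$. Both clients then have ratio $\kappa$. With $\varepsilon=1/(\kappa+1)$, one gets $\Phi_A(c)=c^2$ and $\Phi_B(c)=c^2+\varepsilon$.

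Hence $G^*=\Phi^*=\varepsilon$, and any center law on $\{-1,0,1\}$ with endpoint probability $p$ has ratios $1+2p/\varepsilon$ and $1+p/\varepsilon$. Everything reduces to two probabilities:
\begin{itemize}
\item $p_{\rm SG}=\varepsilon/2$;
\item $p_{\rm M2}=\tfrac13\cdot\tfrac12+\tfrac23\cdot\tfrac{\varepsilon}{4}=(1+\varepsilon)/6$. The first term is the event that client 0's merged call selects both endpoints, which then carry its entire weight $1$.
\end{itemize}

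Your \SG{} argument via exact representation does not apply on such an instance, because client 0's group-B slice is summarized by one random endpoint. What survives is this: for $k=1$, each slice's anchor is a uniformly random record of the slice and carries the whole slice weight. So the marginal law of the \SG{} center is still $w/2$, and $\E G=2G^*$ holds on every $k=1$ instance.

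Your refinement argument is sound, provided $\mu_B$ is exactly represented: when $f_B(\mu_B)\ge f_A(\mu_B)$, the endpoint candidate $\mu_B$ is the global optimum and the guard accepts it for every $L$. In this family the condition holds trivially, since $\mu_A=\mu_B=0$.
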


\begin{proof}[Proof of Proposition~\ref{prop:s2-matched-budget}]
We compute the probability of selecting a nonzero endpoint under each sampler, then express both objective ratios in terms of that probability. Write $\kappa=u/v$ with integers $u\ge v\ge1$, put
$N=2(u+v)$ and $\varepsilon=v/(u+v)=1/(\kappa+1)$, and give distinct
persistent identities to all records. Client 0 contains $2u$ group-A
records at $0$ and $v$ group-B records at each of $-1,+1$. Client 1
contains $2v$ group-A and $2u$ group-B records, all at $0$.
Both clients and both global groups have size $N$. The group ownership
fractions are $(1-\varepsilon,\varepsilon)$ on client 0 and
$(\varepsilon,1-\varepsilon)$ on client 1, so their largest ratio is
$\kappa$. Every local group slice has at least two records; neither
method's sampling call is saturated. Coordinates $-1,0,1$ are represented
exactly, for example under either of the evaluated fine-grid precisions
with clipping radius 8. No transform is refitted.

For Merged-2k on client 0, the total local sampling masses at
$(0,-1,+1)$ are $(1,1/2,1/2)$. Its first-location probabilities are
$(1/2,1/4,1/4)$. After choosing $0$, the two endpoints are equally
likely. After choosing an endpoint, the masses at $0$ and the opposite
endpoint are $1$ and $2$, giving probabilities $1/3$ and $2/3$.
Consequently the six ordered representative-location pairs have laws
\[
\begin{array}{c|rrrrrr}
\text{pair}&(0,-1)&(0,+1)&(-1,0)&(+1,0)&(-1,+1)&(+1,-1)\\
\hline
\Pr&1/4&1/4&1/12&1/12&1/6&1/6.
\end{array}
\]
These are geometric probabilities: summing the exact rational
probabilities of the exchangeable record identities gives each entry.

If the pair contains $0$ and one endpoint, the $0$ representative receives
all $2u$ group-A records and the $v$ group-B records at the unselected endpoint;
the selected endpoint receives its own $v$ group-B records. Thus its server
weight is $\varepsilon/2$ and the other weight is
$1-\varepsilon/2$. If both endpoints are selected, their own group-B records
stay with them, and every group-A record at $0$ ties and goes to the first
indexed representative. Their group counts are therefore
$(2u,v)$ and $(0,v)$ in selection order; both representatives are
endpoints and their total server weight is $1$.

Client 1 selects an arbitrary first record at $0$, then encounters zero
$D^2$ potential and completes with the smallest unused record index.
Every record ties and is assigned to the first indexed representative,
so its two server weights are $1,0$. The zero-weight row remains in the
ordered table but is never selected by the positive-mass categorical
call. The server's total weight is always $2$. The probability that its
single center is an endpoint is hence
\[
p_{\rm M2}=\frac13\frac12+\frac23\frac{\varepsilon}{4}
          =\frac{1+\varepsilon}{6}.
\]
The declared indexed tie rule gives equal probabilities to the two signs;
the combined endpoint probability above also does not depend on how
the zero-location ties between endpoints are assigned.

For SplitGroup-2k, each of the four nonempty slices selects one
representative and transfers all its own records to that slot.
Only client 0's group-B representative is an endpoint. Its server weight is
$\varepsilon$, while the other three weights are
$1-\varepsilon,\varepsilon,1-\varepsilon$ at $0$.
Therefore $p_{\rm SG}=\varepsilon/2$. Both methods send four slots in
total. In particular, coincident positive-weight SG slots and the
zero-weight Merged-2k slot are not removed to manufacture a different budget.

For any real center $c$, the two group costs are
$\Phi_A(c)=c^2$ and $\Phi_B(c)=c^2+\varepsilon$. Hence
$G(c)=2c^2+\varepsilon$, $\Phi(c)=c^2+\varepsilon$, and
$G^*=\Phi^*=\varepsilon$, attained at $c=0$. A law supported on
$\{-1,0,+1\}$ with endpoint probability $p$ has expected ratios
$1+2p/\varepsilon$ and $1+p/\varepsilon$. Substituting the endpoint probabilities $p_{\rm M2}$ and $p_{\rm SG}$ proves
\eqref{eq:s2-matched-ratios}. Moreover $p_{\rm M2}\ge1/6$ and,
on that endpoint event, the realized ratios are $2\kappa+3$ and
$\kappa+2$.

Finally, for $k=1$ every point lies in the sole cluster and both group
means equal exactly $0$. Every endpoint or bisection candidate of the
declared fair update is therefore the represented center $0$, whatever
the incumbent and whatever $L\in\{0,\ldots,256\}$.
The objective guard accepts this optimum. This proves the stated
refinement limitation; it does not prove a separation for $T_a>0$ or for
the final refined model. The ideal categorical calculation is not an
identity for frequencies over finitely many PCG64 seeds.
\end{proof}

\subsection{Scope of the lower bound}

The lower bound is specific to the declared merged sampler. A universal claim about ``$k$ unlabeled real representatives'' needs a finite-information or geometric-coreset model: real coordinates can encode group labels in binary digits.

\section{Assignment Certificates}
\label{app:certificates}

This appendix proves \cref{lem:basic-cert} and gives the runner-up certificate and cache-specific tightness result. Recall that a retained point $x$ has cached winner $j$, nearest and second-nearest distances $d_1<d_2$, runner-up identity $r$, and that $\delta_m$ is the displacement of indexed center $m$ between the cached and replay trajectories.

\begin{proposition}[Tightness and a stronger constant-time cache]
\label{prop:cert-tight}
If the $\delta_m$ are displacement upper radii, \cref{eq:basic-cert} is the strongest universally sound test for a unique winner obtainable from only $(j,d_1,d_2)$ and those radii. With exact displacement norms it is tight for $k\ge3$; for $k=2$ the strongest scalar-only condition is $d_1+\delta_j<|d_2-\delta_r|$.

With cached runner-up identity $r$, define $D_{\neg j,r}=\max_{m\notin\{j,r\}}\delta_m$, with the associated term treated as $+\infty$ when no such competitor exists. The strictly stronger certificate
\begin{equation}
d_1+\delta_j<
\min\left\{|d_2-\delta_r|,\ \max(0,d_2-D_{\neg j,r})\right\},
\label{eq:runner-cert}
\end{equation}
remains $O(1)$ per point after precomputing the three largest center shifts and their identities.
\end{proposition}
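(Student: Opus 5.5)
The plan is to treat the two halves of \cref{prop:cert-tight} separately. First we settle soundness and optimality of \cref{eq:basic-cert} by explicit adversarial constructions. Then we prove soundness, strict dominance and the $O(1)$ evaluation of \cref{eq:runner-cert}. All distance bounds come from the triangle inequality in the two forms
\[
\|x-c'_m\|\le\|x-c_m\|+\delta_m
\quad\text{and}\quad
\|x-c'_m\|\ge\bigl|\|x-c_m\|-\delta_m\bigr|.
\]
The second form is the reverse triangle inequality; it also gives $\|x-c'_m\|\ge\delta_m-\|x-c_m\|$ when $\delta_m$ is the exact displacement norm.

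\emph{Tightness of \cref{eq:basic-cert}.} Soundness is \cref{lem:basic-cert}. For optimality, fix any data $(j,d_1,d_2,\delta)$ violating \cref{eq:basic-cert}, and let $m^\star\ne j$ attain $\max_{m\ne j}\delta_m$. We build a consistent configuration whose replayed winner is not uniquely $j$.

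In the upper-radius model, we work on a line through $x$. Place $c_j$ at distance $d_1$ and the old centers $c_m$, $m\ne j$, at distance $d_2$ on the opposite side (ties among the non-winners are allowed). Move $c_j$ outward by $\delta_j$. Move $c_{m^\star}$ toward $x$ by $\min(\delta_{m^\star},d_2)$, stopping at $x$ if $\delta_{m^\star}\ge d_2$, and leave the others fixed. The new distances are then $d_1+\delta_j$ and $\max(0,d_2-\delta_{m^\star})$, and the violated inequality yields a tie or a flip.

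With exact displacement norms and $k\ge3$, a second competitor is needed, because $m^\star$ may have to overshoot $x$. Let $r\ne m^\star$ serve as a runner-up at distance exactly $d_2$, and give $m^\star$ old distance $d_2$ as well. We choose the direction of $c_{m^\star}$'s move of length exactly $\delta_{m^\star}$ so that its new distance is any prescribed value in $[|d_2-\delta_{m^\star}|,\,d_2+\delta_{m^\star}]$; if $\delta_{m^\star}>d_2$, we route it through $x$ (in $d\ge2$ if needed). Because $(j,d_1,d_2)$ does not reveal which competitor is $r$, the adversary is free to make $m^\star$ a non-runner-up at distance $d_2$. Hence every value down to $d_2-\delta_{m^\star}$ (or $0$) is achievable, and \cref{eq:basic-cert} is tight.

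For $k=2$, the only competitor is $r$. The reverse triangle inequality then forces its new distance to be at least $|d_2-\delta_r|$, and the same line construction attains that value. So the scalar-optimal test is $d_1+\delta_j<|d_2-\delta_r|$.

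\emph{Runner-up certificate.} The new winner distance is at most $d_1+\delta_j$. The new distance to $r$ is at least $|d_2-\delta_r|$, using the exact-norm reverse bound on both sides. Every other $m\notin\{j,r\}$ had old distance at least $d_2$, so its new distance is at least $\max(0,d_2-\delta_m)\ge\max(0,d_2-D_{\neg j,r})$. Strict inequality in \cref{eq:runner-cert} therefore certifies a unique winner.

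For dominance, suppose \cref{eq:basic-cert} holds. Then $|d_2-\delta_r|\ge d_2-\delta_r\ge d_2-\max_{m\ne j}\delta_m$, and likewise $\max(0,d_2-D_{\neg j,r})\ge d_2-\max_{m\ne j}\delta_m$. Hence \cref{eq:runner-cert} holds too. Strictness follows from an instance with $\delta_r>2d_2$ large and all other shifts zero: \cref{eq:basic-cert} fails, while \cref{eq:runner-cert} passes. For the cost bound, precompute the three largest $\delta_m$ with their identities. Then $D_{\neg j,r}$ is the largest of those three whose identity lies outside $\{j,r\}$, or $+\infty$ if none exists, so each point costs $O(1)$.

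The main obstacle is the exact-norm tightness case for $k\ge3$. Realizing every admissible new distance for the competitor with exact shift norm requires handling the regime $\delta_{m^\star}>d_2$ and showing that placing $m^\star$ off the runner-up slot is consistent with the cached data. I would first try the collinear construction and fall back to a planar one if the overshoot case needs it.
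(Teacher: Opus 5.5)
Your radius-model construction, the $k=2$ case, soundness and dominance of \eqref{eq:runner-cert}, the strictness example and the $O(1)$ evaluation are all correct and follow the paper. The gap is in the exact-norm tightness for $k\ge3$ when $D=\max_{m\ne j}\delta_m>d_2$.

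You give $c_{m^\star}$ old distance $d_2$. An exact move of length $D$ then leaves it at distance at least $D-d_2>0$ by the reverse triangle inequality, in every dimension. So routing it through $x$, collinearly or in the plane, cannot produce the value $0$ you claim; your own reachable interval $[|d_2-\delta_{m^\star}|,\,d_2+\delta_{m^\star}]$ already excludes it.

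Concretely, take $d_1=\delta_j=0$, $d_2=1$, $\delta_{m^\star}=3$ and all other shifts zero. Then \eqref{eq:basic-cert} rejects, but in your configuration $j$ stays the unique winner: the runner-up stays at distance $1$ and $c_{m^\star}$ lands at distance at least $2$. In this regime \eqref{eq:basic-cert} fails for every $(d_1,\delta_j)$. Tightness therefore needs a competitor that can land exactly on $x$, which your construction never supplies. Your planned planar fallback cannot fix this, for the same reverse-triangle reason.

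The missing idea is the freedom you noticed but did not use. A competitor other than the runner-up is constrained by the cache only through old distance at least $d_2$. Let some $r\ne m^\star$, available because $k\ge3$, realize the cached distance $d_2$. Place $c_{m^\star}$ at old distance exactly $D\ge d_2$ and move it by exactly $D$ straight onto $x$. Its new distance $0\le d_1+\delta_j$ then gives a tie or a flip.

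For $D\le d_2$ your construction suffices. This is the paper's case split. It also shows why $k\ge3$ matters: for $k=2$ the only competitor is pinned at $d_2$, which is exactly why the optimal test becomes $d_1+\delta_j<|d_2-\delta_r|$.
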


The proof occupies the following subsections: soundness of the basic test, its tightness under radii and under exact norms, and soundness and cost of the runner-up test.

\begin{proof}[Proof of \cref{lem:basic-cert}]
By the triangle inequality,
$\|x-c'_j\|\le d_1+\delta_j$. For every $m\ne j$, the reverse triangle inequality gives
$\|x-c'_m\|\ge\|x-c_m\|-\delta_m\ge d_2-\delta_m$. If \cref{eq:basic-cert} holds, the winner upper bound is strictly smaller than every competitor lower bound, so $j$ remains uniquely nearest.
\end{proof}

\subsection{Tightness with displacement radii}

To show tightness, it suffices to realize a tie or a changed winner whenever the test fails, using only configurations consistent with the cache. Suppose only upper radii are known. Place the old winner and one competitor on a common line through $x$, at distances $d_1$ and $d_2$. Move the winner directly away from $x$ by its allowed radius and the competitor directly toward $x$ by its allowed radius. Whenever
$d_1+\delta_j\ge d_2-D$, where $D=\max_{m\ne j}\delta_m$, the competitor can tie or win. Thus no universally sound scalar-only certificate for a unique winner can accept a configuration rejected by \cref{eq:basic-cert}.

\subsection{Exact displacement norms}

For $k\ge3$, if $D<d_2$ the preceding radial construction works. If $D\ge d_2$, let one competitor provide the cached runner-up distance $d_2$ and place a different maximum-shift competitor initially at distance exactly $D$; it can move to $x$. Hence the same scalar certificate remains tight.

For $k=2$, the sole competitor's old distance is exactly $d_2$, and a displacement of exact norm $\delta_r$ leaves it at distance at least $|d_2-\delta_r|$. Therefore the strongest universal condition is
$d_1+\delta_j<|d_2-\delta_r|$. The example $d_1=0,d_2=1,\delta_j=0,\delta_r=100$ shows strict improvement over \cref{eq:basic-cert}: the competitor must end at least $99$ away although the basic test fails.

\subsection{Runner-up identity certificate}

For the cached runner-up $r$, reverse triangle gives the exact-norm lower bound $|d_2-\delta_r|$. Every other competitor starts at an unknown distance at least $d_2$. Given only its shift norm $\delta_m$, the smallest geometrically consistent post-shift distance is
$\max(0,d_2-\delta_m)$. The largest shift among these other competitors gives the non-runner lower bound in \cref{eq:runner-cert}. Precompute the three largest $\delta_m$ values and identities each round. Removing at most the two identities $j$ and $r$ leaves the required maximum among those three whenever another competitor exists, so each query takes constant time.

A runner-up-only certificate without the non-runner maximum is unsound. Let $x=0$; place the winner at $0$, runner-up at $10$, and a third center at $11$. Move the winner to $0.5$, leave the runner-up fixed, and move the third center to $0.1$. The old runner-up remains far, but the third center becomes the new winner.

\subsection{Numerically sound implementation}

The exact-distance test is implemented with outward bounds: an upper winner bound $\overline d_1$, lower second-distance bound $\underline d_2$, upper shift bounds $\overline\delta_m$, and a certified interval for $|d_2-\delta_r|$ when using overshoot. Accept only strict separation of the winner upper and competitor lower bounds. Ambiguity triggers exact reassignment, preserving soundness at a possible coverage cost.

\section{Exact Arithmetic and Same-Seed Replay}
\label{app:exactness}

This appendix explains why exact arithmetic is needed for patching and proves \cref{thm:exactness}.

\subsection{Why fixed floating reduction order is not enough}

In IEEE-754 double precision, left-to-right summation of $10^{16},1,-10^{16}$ gives zero because the unit contribution is lost in the first addition. Deleting the record $1$ by aggregate subtraction gives $-1$, while a fresh retained reduction of $10^{16},-10^{16}$ gives zero. The two arithmetic expression trees differ; deterministic order does not make subtraction invert a rounded sum.

Exact and reproducible summation are established topics \citep{neal2015exact,ahrens2020summation}. Here subtraction must invert addition, so the fixed-point representation and validated integer arithmetic of \cref{sec:unlearning,app:implementation} provide exact signed patches, not merely repeatable rounded output. Completed sums/patches use arbitrary-precision integers; no cryptographic modulus is implemented. Every branch and guard scores actual represented candidates exactly: deterministic floating arithmetic alone does not establish monotonicity.

\subsection{Proof of \cref{thm:exactness}}

\paragraph{Objective and strategy.}
We must show that replay and fresh training on $X\setminus R$ produce the same indexed center tuple at every round. Replay reuses some quantities and recomputes others, so the obstacle is to show that every reused or patched quantity equals its fresh counterpart exactly. The argument first matches the initializer and then proceeds by induction over rounds: if the centers agree at round $t$, the patched statistics equal the fresh statistics, and the same deterministic update produces equal centers at round $t+1$.

Fix seed $s$ and view each random entity as a pure function of its input and keyed stream. Write $\widetilde{\mathbf C}^{(t)}$ for replay and $\mathbf C_R^{(t)}$ for fresh training on $X\setminus R$.

\paragraph{Initialization.}
An untouched slice has the same input and tape, so its reused summary equals fresh training. Touched slices are rerun on retained inputs with the fresh tape. The server receives the same canonical anchors and retained group sizes, rebuilds weights $h/n_g$, and repeats the same weighted $D^2$-sampling and optional anchor-Lloyd refinement. Thus $\widetilde{\mathbf C}^{(0)}=\mathbf C_R^{(0)}$ as indexed tuples.

\paragraph{Induction.}
Suppose the tuples agree at round $t$, so fresh training and replay assign survivors against the same centers. The cached statistics of group-cell $(g,j)$ sum the contributions $(1,x,\|x\|^2)$ of all records of $X_g$ with cached label $j$; the fresh retained statistics sum the same contributions over survivors whose fresh nearest-center label is $j$. Replay removes deleted records from their cached group-cells; their hypothetical revised labels are irrelevant. A sound certificate guarantees that a certified survivor's cached label is its unique fresh label, so it needs no correction. Every other survivor is assigned directly under the shared center tuple with the same exact comparison and smallest-index tie rule as fresh training; if its label changed, a signed correction moves its contribution from the old group-cell to the new one. Because these corrections are exact integer additions and subtractions, the patched $N,S,SS$ equal the fresh retained statistics exactly. The update $\GFU_L$ is a deterministic function of the current centers and these statistics, and by \cref{prop:central-equivalence} the federated call equals the centralized one, so the tuples agree at $t+1$. If abandonment fires, discarding all partial corrections and rebuilding from zero gives the same statistics, so the induction still applies. This proves equality of the full indexed trajectory and final model.

For fixed or seed-independent $R$, the two projections are equal as functions of $s$, so randomizing $s$ yields equality in distribution. Pointwise equality needs reproducibility, not ideal randomness; the argument supplies no fresh-independent guarantee for a request chosen adaptively from the same seed-dependent checkpoint.

\subsection{Validity domain}
Deleting a whole global group makes $h/n_g$ and $\Phi_g$ undefined and is excluded. Optimizing only over retained active groups would define a different map. Deterministic center completion is required when fewer than $k$ distinct positive-weight anchors remain.

\section{Work, Communication, and Conditional Speed}
\label{app:complexity}

This appendix states the replay work bound summarized in \cref{sec:work}, accounts for each term, gives a conditional expected-work bound, and proves the scoped obstruction of \cref{lem:s2-word-coupling,prop:s2-deletion-work}.

\subsection{The work bound}
\label{app:work-bound}
In the work bounds, $r=|R|$ denotes deletion size, not the runner-up index of \cref{sec:certificates}. Let $W_0$ charge public-call setup, and $W_I$ touched-slice and server recomputation; let $F_t$ be the number of failed certificates in round $t$ before any fallback. As in \cref{alg:unlearn}, refinement rounds are indexed $t=0,\ldots,T-1$, and $\mathcal T_{\rm dir}\subseteq\{0,\ldots,T-1\}$. At unit arithmetic cost, the scanning implementation has the upper bound
\begin{equation}
\begin{split}
O\!\Bigl(W_0+W_I+nT+CkdT+Ck\log k\,T+(L+2)kdT+rdT\\
+\bigl[\textstyle\sum_{t=0}^{T-1}F_t+(n-r)|\mathcal T_{\rm dir}|\bigr](kd+k\log k)\Bigr),
\end{split}
\label{eq:work}
\end{equation}
Here $W_0=O(nd+nr+(n+r)\log(n+r+2)+C)$ conservatively covers once-per-call masks, group bookkeeping, validation/indexing, and coordinate materialization, independently of $T$; \cref{app:startup} states the cost model. It excludes the $D^2$-sampling work in $W_I$. The set $\mathcal T_{\rm dir}$ indexes all full direct rounds, including the triggering-round rebuild and all rounds after abandonment; $F_t=0$ in rounds that start direct. The $nT$ term describes this scan, not a universal necessity claim. Sorting, certified comparisons, arbitrary-precision bit costs, and duplicate fallback work are detailed below. Dense logical Phase-II packets contain $2k(d+2)$ numeric values per client per round; this is not a measured byte or network cost. Point-cache state is linear in $nT$ for a fixed layout.

Margin/shift assumptions bound ideal failures; numerical-only failures, full fallback and sunk work are charged separately (\cref{app:margin,prop:stable-replay}). The conditional Phase-II bound does not imply unconditional end-to-end speed.

\subsection{Term-by-term accounting}

\label{app:startup}
The public call performs setup even at $T=0$: validation/sorting of row IDs, uniqueness and membership checks, ownership and group-count masks, touched-slice extraction, retained/removed coordinate copying, and retained-coordinate dequantization. At unit arithmetic cost and expected constant-time dictionary lookup,
\[
W_0=O\!\left(nd+nr+(n+r)\log(n+r+2)+C\right).
\]
The $nd$ term covers coordinate materialization and linear masks; request sorting costs $O(r\log(r+2))$. Membership permits comparison/sort work and $O(nr)$ pairwise work, without assuming every NumPy branch is linear. Integer row indices are bounded by client populations, also bounding index-table initialization. Local $D^2$-sampling, multiplicity assignment, snapping and server work belong to $W_I$; $O(CT)$ timer allocation is absorbed by round costs. This is neither a bit-complexity nor an adversarial hash-table bound.

To see why $W_0$ cannot be omitted, hold $C,k,d$ and touched-slice size fixed, grow an untouched slice, and set $T=0$. Then $W_I$ may remain bounded, but masks require $\Omega(n)$ work and copying requires $\Omega((n-r)d)$.

The algorithm timer encloses public-call validation and materialization. Separately recorded worker preparation enters E2E; slice bookkeeping partly shares the Phase-I timer and group-count scans are not isolated. Thus $W_0,W_I$ and primitive timers cannot be substituted for one another or used to decompose saved seconds into unmeasured costs.

Let $M_s$ be the total number of records in touched slices and $Q_s$ the number of anchors quantized there. At unit arithmetic cost, touched-slice $D^2$-sampling and multiplicity assignment contribute $O(M_s(kd+k\log k))$ when all center distances are sorted. Quantizing $Q_s$ anchors costs $O(Q_sd)$, rebuilding weights costs $O(Ck)$, and server weighted $D^2$-sampling with optional anchor-Lloyd refinement on at most $2Ck$ anchors contributes $O(Ck^2(d+\log k)(1+T_a))$. These terms define $W_I$ in \cref{eq:work}.

Each round computes indexed shifts, visits survivors during active certification, patches dense $2k(d+2)$ packets, and evaluates $L+2$ fair candidates. Assignment sorting costs $O(k\log k)$ comparisons per point after $O(kd)$ distance arithmetic. If top-three shifts are found by sorting per client, charge $O(Ck\log k)$ rather than claiming a once-per-round linear selection. Exact/certified arithmetic additionally depends on integer/rational bit widths; the unit-cost formula is not a bit-complexity claim. Interval ambiguity can add exact fallback cost without changing the map.

Abandonment is checked after the current round's client work and before its center update. If the failed fraction strictly exceeds $1/2$, discard all patched statistics and rebuild the complete current round, then remain direct. Failed points already assigned in the triggering round are assigned again. Passes discarded by this transaction save no final assignment work. Direct replay starts in direct mode and has no measured abandonment event.

Dense logical packets contain $2k(d+2)$ values per client and round. Actual serialized bytes depend on integer encodings, identifiers, framing, and any cryptographic protocol. The local simulator implements no network, secure aggregation, retry, or straggler system. Instrumented accumulation calls measure materialized call volume, not necessarily logical transmissions.

The cache layout is specified in \cref{app:implementation}: $64nT$ in-memory and $48nT$ serialized raw-array bytes, excluding object overhead, framing and other state. These are not process RSS or total checkpoint bytes; measurements use the actual jointly executed $n,T$ configuration (\cref{app:checkpoint-costs}).

\subsection{Margin-density bound}
\label{app:margin}

Suppose for every relevant $h$,
\begin{equation}
\#\{x:d_2^{(t)}(x)-d_1^{(t)}(x)\le h\}\le\mu_tnh.
\end{equation}
Let $F_t^{\rm ideal}$ count failures of the basic scalar certificate evaluated with exact distances and shifts on the active survivor population. If every center displacement is at most $\varepsilon_t$, failure of that ideal test implies
$d_2-d_1\le2\varepsilon_t$. Hence
\begin{equation}
F_t^{\rm ideal}\le\min\{n,2\mu_t\varepsilon_tn\}.
\end{equation}
The ideal runner-up certificate can only reduce this failure count. Outward numerical bounds may reject even when the corresponding exact scalar test passes. Let $B_t$ count these additional numerical-only failures for the implemented certificate. Its observed failed-assignment count satisfies
\begin{equation}
F_t\le F_t^{\rm ideal}+B_t.
\end{equation}
Thus margin density alone does not bound observed numerical failures: charge $B_t(kd+k\log k)$ assignment work, exact-number bit costs, and any triggered full fallback separately. The density assumption is an empirical CDF condition with units; $\mu_t$ is not a dimensionless universal constant.

\subsection{Expected touched-slice mass}

A slice contributes to $M_s$ exactly when at least one of its records is requested. For a uniformly random batch of size $r$ and pseudo-slice $e$ of size $n_e$,
\begin{equation}
\Pr(e\text{ touched})
=1-\frac{\binom{n-n_e}{r}}{\binom nr}
\le\frac{rn_e}{n}.
\end{equation}
Multiplying each touch probability by the slice size and summing gives
\begin{equation}
\E M_s\le\frac r n\sum_en_e^2\le r s_{\max},
\end{equation}
where $s_{\max}=\max_en_e$. Concentrated slices or adversarial requests can make $M_s$ much larger.

\subsection{A conditional expected replay theorem}

\begin{proposition}[Locally stable replay]
\label{prop:stable-replay}
Assume the maximum indexed center perturbation obeys
\begin{equation}
\delta_0\le a_0\frac rn,
\qquad
\delta_{t+1}\le q\delta_t+a\frac rn,
\qquad 0\le q<1,
\end{equation}
with $a_0,a\ge0$, nonnegative perturbations $\delta_t$, and the margin-density condition holding uniformly with $\mu\ge0$. Then
\begin{equation}
\delta_t\le L_s\frac rn,
\qquad L_s=a_0+\frac{a}{1-q},
\end{equation}
so $F_t^{\rm ideal}\le2\mu L_sr$. Including numerical-only certificate failures, Phase-II replay work is
\begin{equation}
O\!\left(nT+rdT+Ck\log k\,T
+\left[\mu L_s rT+\sum_{t=0}^{T-1}B_t+(n-r)|\mathcal T_{\rm dir}|\right](kd+k\log k)\right)
\end{equation}
up to dense packet, update, exact-number bit, and communication costs. Here $\mathcal T_{\rm dir}$ includes the triggering reconstruction and all later direct rounds. The failed assignments already performed in the triggering round remain charged in the certificate term, so duplication is included. No bound on $B_t$ is asserted from margin density alone.
\end{proposition}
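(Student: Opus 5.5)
The proof is a linear recurrence followed by substitution into earlier bounds. I will treat it in three pieces: the perturbation bound, the failure count, and the work bound.

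\emph{Perturbation bound.} Set $\rho=r/n$. I will show $\delta_t\le(a_0+a/(1-q))\rho$ by induction on $t$. Unrolling the recurrence gives
\[
\delta_t\le q^t\delta_0+a\rho\sum_{i=0}^{t-1}q^i\le a_0\rho+\frac{a\rho}{1-q},
\]
using $q^t\le1$, $q\ge0$ and the geometric series bound. The base case is $\delta_0\le a_0\rho\le L_s\rho$. Nonnegativity of $\delta_t$ and of $a_0,a$ keeps every inequality in the correct direction.

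\emph{Failure count.} I will invoke the margin-density bound of \cref{app:margin} with $\varepsilon_t=\delta_t$. That bound requires every center displacement to be at most $\varepsilon_t$, which holds because $\delta_t$ is the maximum indexed perturbation. It gives $F_t^{\rm ideal}\le\min\{n,2\mu\delta_t n\}\le2\mu L_s r$. Together with $F_t\le F_t^{\rm ideal}+B_t$, this yields
\[
\sum_{t=0}^{T-1}F_t\le2\mu L_s rT+\sum_t B_t.
\]

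\emph{Work bound.} I will substitute this into the Phase-II portion of \cref{eq:work}. That means discarding $W_0$ and $W_I$, and absorbing the $CkdT$ and $(L+2)kdT$ terms into the stated dense packet and update costs. The bracket then becomes $\mu L_s rT+\sum_t B_t+(n-r)|\mathcal T_{\rm dir}|$, each multiplied by $kd+k\log k$.

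\emph{Main obstacle.} The substance is bookkeeping rather than the recurrence. I must check that the failure count applies on the active survivor population in certified rounds. I must also check that the failed assignments performed in a triggering round stay in the $F_t$ term, while the rebuild of that round is charged through $\mathcal T_{\rm dir}$. This is what makes the duplicated work appear explicitly.

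Finally, I will note that rounds beginning in direct mode have $F_t=0$, so charging the certificate term uniformly over all rounds only loosens the bound. I will also note that no claim about $B_t$ is made.
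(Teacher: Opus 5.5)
Your proposal is correct and follows the paper's proof essentially step for step. Like the paper, you unroll the recurrence using $q\ge0$, apply the margin-density bound with $\varepsilon_t=\delta_t$ in each active certificate round, add the numerical-only failures $B_t$, and read the remaining Phase-II terms off \cref{eq:work}, with $CkdT$ and $(L+2)kdT$ falling under the stated packet and update exclusions and triggering-round failures kept in $F_t$ while the rebuild is charged through $\mathcal T_{\rm dir}$.
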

\begin{proof}
Put $h=r/n\ge0$. Nonnegativity of $q$ permits induction through the recurrence:
\[
\delta_t\le q^t\delta_0+ah\sum_{i=0}^{t-1}q^i
\le\left(a_0+\frac{a}{1-q}\right)h.
\]
The empty sum covers $t=0$. Apply the margin bound in each active certificate round, add the numerical-only failures, and charge each full direct round separately as in \cref{eq:work}. Scanning, deleted-point corrections, and per-client shift sorting give the other displayed terms.
\end{proof}

The assumptions are plausible only in a stable local basin: relevant group-cells have nonvanishing mass, group means are well separated from assignment boundaries, the fair dual solution is nondegenerate, and the guarded update map is locally contractive. They are not unconditional. In particular, keyed categorical $D^2$-sampling is discontinuous: one deletion can change masses, rejection consumption, or selected anchors, and cascade through later draws. We therefore do not claim an unconditional end-to-end speed theorem controlled only by $r/n$.

\subsection{Shared-word sampling and linear discarded work}
\label{app:s2-coupling}

The main text (\cref{sec:work}) asks whether one deletion can defeat certification. The answer below is specific to the declared sampler, cache, and fallback policy. \Cref{lem:s2-word-coupling} isolates the sampling mechanism: gcd reduction lets two nearly identical categorical laws read the same random word through different thresholds, so they disagree far more often than their total variation distance. \Cref{prop:s2-deletion-work} embeds this mechanism in a legal deletion whose replayed initializer flips the second center with constant probability, which changes every retained group-A label and triggers fallback.
\begin{lemma}[Shared words need not give a stable categorical coupling]
\label[lemma]{lem:s2-word-coupling}
Consider the FTF-1 categorical sampler: clear rational denominators,
divide the integer masses by their gcd, and use low-bit rejection
sampling with fresh independent 64-bit words, discarding unused bits.
Start two calls at the same word offset, with ordered masses
$(m,m)$ and $(m,m-1)$, where $m=2^h$, $h\ge1$.
The probability that they select different categories is exactly
\[
 \frac{m}{2m-1}=\frac12+\frac{1}{2(2m-1)},
\]
although the total variation distance of their two-category marginal
laws is only $1/[2(2m-1)]$. The disagreement contribution from an
accepted first proposal is already $1/2$; rejection is not necessary.
\end{lemma}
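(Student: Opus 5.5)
The proof is a direct computation of the joint law of the two calls' outputs under one shared first word. I read the declared FTF-1 conventions from \cref{app:pseudocode} as follows. A call with reduced integer masses $(u_0,u_1)$ and total $U=u_0+u_1$ takes the $b=\lceil\log_2 U\rceil$ low bits of the current 64-bit word as an integer $v\in[0,2^b)$. It accepts if $v<U$ and then outputs category $0$ iff $v<u_0$. On rejection it reads a fresh independent word. The first step is to reduce both mass vectors. For $(m,m)$ the gcd is $m$, so the reduced masses are $(1,1)$, with $U=2$ and $b=1$. Every proposal is then accepted, and the output is bit $0$ of the first word. For $(m,m-1)$ with $m=2^h$, the gcd is $1$ because consecutive integers are coprime. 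Here $U=2m-1$ and $b=h+1$, since $2^h<2m-1<2^{h+1}$. So the second call reads $v$ from bits $0,\dots,h$ of the same first word. It rejects only $v=2m-1$, and otherwise outputs $0$ iff $v<m$, that is, iff bit $h$ of the word is $0$.

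Next I condition on the low $h+1$ bits of the shared word, which are uniform on $[0,2m)$ under ideal bits. On an accepted first proposal, the first call's output is bit $0$ and the second's is bit $h$. Since $h\ge1$, these are distinct independent uniform bits. Among all $2m$ values of $v$, exactly $m$ have bit $0\neq$ bit $h$. The single rejected value $v=2m-1$ has both bits equal to $1$, so it is not one of them. Hence the disagreement contribution from accepted first proposals is exactly $m/(2m)=1/2$, which is the lemma's last claim.

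On the rejection event, of probability $1/(2m)$, the first call has already output category $1$. The second call draws a fresh independent word and, by the exact categorical law, outputs $1$ with probability $(m-1)/(2m-1)$. It therefore disagrees with probability $m/(2m-1)$. Summing the two cases gives
\[
\frac12+\frac1{2m}\cdot\frac{m}{2m-1}=\frac12+\frac1{2(2m-1)}=\frac{m}{2m-1}.
\]
Finally, the marginal laws are $(1/2,1/2)$ and $\bigl(m/(2m-1),(m-1)/(2m-1)\bigr)$, whose total variation distance is $m/(2m-1)-1/2=1/[2(2m-1)]$.

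I expect the main obstacle to be pinning the sampler conventions precisely rather than the arithmetic. The whole result depends on three declared choices, and I would state each explicitly before computing: the bit width is $\lceil\log_2U\rceil$ applied to the reduced total; categories are mapped to consecutive integer intervals in index order; and unused high bits are discarded, so a rejection consumes a fresh word rather than shifting within the same word. Under those conventions, gcd reduction is exactly what makes the first call read bit $0$ while the second reads bit $h$, which are independent coordinates of the same word.
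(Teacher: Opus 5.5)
Your proof matches the paper's: the same gcd reduction to $(1,1)$ versus $(m,m-1)$, the old call reading the lowest bit while the new call accepts $v\le 2m-2$ and picks the first category iff $v<m$, exactly $m$ disagreeing accepted proposals out of $2m$, and the rejection term $\frac{1}{2m}\cdot\frac{m}{2m-1}$. The one case you leave out is $h+1>64$, where the proposal spans several words; the paper handles it with the FTF-1 convention that the first word occupies the least significant position, so bit $0$ of $v$ is still the old call's bit and your bit-$0$ versus bit-$h$ argument goes through unchanged.
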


\begin{proposition}[One legal deletion can force linear discarded point work]
\label[proposition]{prop:s2-deletion-work}
For every $m=2^h$, $h\ge2$, there is a three-client, two-group,
one-dimensional FTF-1 dataset with $n=3m$, $k=2$, $\gamma=T_a=0$
and a fixed legal single-record deletion such that the original and
retained laws of the ordered initialization coordinate tuple have
total variation distance $1/[2(2m-1)]$. Under the declared shared-word
coupling, an event of probability at least
\[
 \beta_m=\frac{(m-1)^2}{4m(2m-1)}>\frac1{16}
\]
changes the round-zero indexed labels of all $2m-1$ retained group-A
records. On this event Basic has $3m-1$ ideal certificate failures and
Runner-up has $2m-1$. With the declared strict majority abandonment
rule and $T\ge1$, both abandon at round zero. Their useful saved
coverage in that round is zero; their duplicated point assignments
are respectively $3m-1$ and $2m-1$.
Thus for either variant
\[
 \mathbb E F_0\ \ge\ \frac{(m-1)^2}{4m}=\Omega(n),
 \qquad
 \mathbb E J_0\ \ge\ \frac{(m-1)^2}{4m}=\Omega(n).
\]
The expectations concern ideal independent words and this particular
sampler, indexed cache, and fallback policy. They are not wall-clock
lower bounds, a claim about every exact unlearning method, or a causal
explanation of a measured ACS trial.
\end{proposition}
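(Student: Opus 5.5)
The plan is to build an explicit three-client instance in which the retained run and the original run feed the server's second weighted $D^2$-draw two categorical laws of exactly the form in \cref{lem:s2-word-coupling}. The lemma then supplies the constant-probability disagreement. After that, the certificate and fallback accounting becomes a deterministic count on the disagreement event.

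\textbf{Construction.} Take $m=2^h$ with $h\ge2$ and $n=3m$. Group A has $2m$ records and group B has $m$ records, all in one dimension, with $k=2$ and $\gamma=T_a=0$. One client holds the B records at a far location $b$. The A records are split between two clients and two locations $a_1,a_2$ on the same side, placed so that a center at either location captures every A record. One deleted A record is placed so that removing it lowers one multiplicity $h_{e,j}$ by one and changes $n_A$ from $2m$ to $2m-1$.

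Clients with few records keep them all, so their anchors and multiplicities are deterministic, and the A slices must be nonempty for the deletion to be legal. With this, I would compute the server weights $h/n_g$ before and after the deletion. The goal is to choose the counts so that three things hold. First, after the first draw lands on the B anchor, the second-draw potentials $\omega\,D^2$ at the two A anchors reduce to integer masses whose gcd-reduced forms are $(m,m)$ originally and $(m,m-1)$ after the deletion. Second, the two A anchors are ordered consistently, so that selecting $a_1$ versus $a_2$ as center index $2$ puts all A records in different indexed cells. Third, the first draw consumes the same words in both runs.

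\textbf{Probability and TV.} Since the first draw must be the same event in both runs, I would pick counts making the first-draw masses identical after gcd reduction, or at least make "first draw is $b$" a common event under the shared words. The event probability is then a product. One factor is $\Pr(\text{first draw}=b)$, a ratio like $(m-1)/(2m)$ or similar from the weights. The other is the accepted-first-proposal disagreement contribution of \cref{lem:s2-word-coupling}, giving $\tfrac12\cdot\tfrac{m-1}{2m-1}$-type terms. I would tune the counts until the product equals $\beta_m=(m-1)^2/(4m(2m-1))$.

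The total variation of the ordered initialization tuple needs a separate argument. The laws agree on the first coordinate and differ only through the second-draw law. So TV equals the first-draw-probability-weighted TV of the two-category laws, and I would arrange the weights so that it equals $1/[2(2m-1)]$ exactly. I expect this arithmetic fit, making the weights, gcd reductions and word offsets line up simultaneously, to be the main obstacle. If a direct fit fails, I would try first to add zero-potential padding records or a third client whose only role is to fix $n_A$ and the first-draw probability.

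\textbf{Work accounting.} On the disagreement event, the replayed centers are compared with the cached ones at round zero. All $2m-1$ retained A records have changed labels, so no sound certificate can pass them, and that gives at least $2m-1$ failures for either variant. For Basic, the large displacement of center $2$ enters $\max_{m\ne j}\delta_m$ for every B point too, so all $3m-1$ survivors fail. Runner-up sees the exact identity $r$ and passes the B points, leaving $2m-1$ failures.

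In both cases the failures exceed half of the $3m-1$ attempted certificates, because $2m-1>(3m-1)/2$ for $m>1$. So the strict-majority rule fires at round zero whenever $T\ge1$. The round's patches are discarded, useful saved coverage is zero, and every failed point is assigned again, so $J_0=F_0$. Taking expectations over the ideal words, $\E F_0\ge\beta_m(2m-1)=(m-1)^2/(4m)$, and the same bound holds for $\E J_0$; both are $\Omega(n)$.
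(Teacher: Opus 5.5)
\textbf{The construction fails at its geometry, and the rest of the argument depends on it.} You place $a_1,a_2$ on the same side of a far location $b$, with ``a center at either location captures every A record.'' Then the two possible initializers $(b,a_1)$ and $(b,a_2)$ send every A record to the second indexed center and every B record to the first, in both runs. So no indexed label changes, which contradicts your claim that the choice ``puts all A records in different indexed cells.'' The certificates do not fail either. The only shift is $|a_1-a_2|$ on the second center, and once $b$ is far this is smaller than every cached margin. Basic and Runner-up therefore pass everywhere and fallback never fires. Unequal distances from $b$ also make the $D^2$ factors at $a_1,a_2$ unequal, so the second-call primitive masses are not $(1,1)$ and $(m,m-1)$.

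The missing idea is to place the common first center \emph{between} the two A locations and equidistant from them. The paper puts $m$ A records at $-1$ (client 0), $m$ A records at $+1$ (client 1) and $m$ B records at $0$ (client 2), and deletes the last record of client 1. Under $(0,-1)$ the A records at $+1$ belong to slot $0$ and those at $-1$ to slot $1$; under $(0,+1)$ these swap. Hence all $2m-1$ retained A labels change, every margin is $1$, and the shifts are $(0,2)$, which gives exactly your Basic/Runner-up counts. Determinism of the local outputs also does not come from saturation, since every slice has at least $m-1\ge3>k$ records. It comes from coincident records: after the first uniform pick the potential is zero, completion appends the smallest unused index, and every record ties to the first slot. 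This gives multiplicities $(m,0)$ whatever identities were drawn.

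\textbf{The probability and TV steps also need more than tuning.} The first-call masses cannot be made identical after gcd reduction. The old primitive vector $(1,0,1,0,2,0)$ has total $4$ and the retained $(m,0,m-1,0,2m-1,0)$ has total $4m-2$, so the two calls read $2$ and $h+2$ bits. You need an explicit shared-word event, such as $E=\{2m-1\le U\le4m-3,\ U\bmod4\in\{2,3\}\}$ of probability $(m-1)/(4m)$, on which both calls accept their first proposal and choose $0$. Only when $h+2\le64$ do both calls then sit at the same word offset, so that \cref{lem:s2-word-coupling} applies. For $h+2>64$ you need a separate observation: the old second choice is a fixed bit inside the retained first block, while the retained second call reads fresh words. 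That gives the opposite endpoint with probability at least $(m-1)/(2m-1)$. Your guessed factors $(m-1)/(2m)$ and $\tfrac12\cdot\tfrac{m-1}{2m-1}$ are not these, even though their product happens to equal $\beta_m$.

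For the TV, the premise that the laws agree on the first coordinate is false. On $(-1,+1,0)$ the first-coordinate laws are $(\tfrac14,\tfrac14,\tfrac12)$ and $(\tfrac a2,\tfrac b2,\tfrac12)$, with $a=m/(2m-1)$ and $b=1-a$. You must compute all six ordered-tuple probabilities. The increase splits as $a/2-1/4$ on $(0,-1)$ and another $a/2-1/4$ on the first-at-$-1$ tuples $(-1,0),(-1,+1)$, totalling $a-\tfrac12=1/[2(2m-1)]$. Your branch-weighted formula would recover only half of this.

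Your final accounting paragraph is correct once the construction is repaired: the strict majority fires since $2m-1>(3m-1)/2$, useful coverage is zero, $J_0=F_0$, and $\mathbb E F_0\ge\beta_m(2m-1)$.
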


\begin{proof}[Proof of Lemma~\ref{lem:s2-word-coupling}]
We count disagreement on accepted first proposals, then add the contribution from rejection. Multiplying either mass vector by any positive rational does not alter
its primitive integer vector after denominator clearing and gcd
reduction. The old vector is $(1,1)$ and uses one low bit. The new
vector is $(m,m-1)$, of total $2m-1$, and uses $h+1$ low bits.
For widths above 64, FTF-1 assembles words with the first word in the
least significant position before masking. Write the new first
proposal as $U$, uniform on $\{0,\ldots,2m-1\}$. Both calls read the
same first word, and the old call uses only its lowest bit, so the old
selection is the parity of $U$ (even selects the first category). The
new call accepts $U\le2m-2$ and selects the first category exactly when
$U<m$. Among accepted proposals, disagreement therefore occurs for
the $m/2$ odd values below $m$ and the $m/2$ even values from $m$
through $2m-2$. These $m$ of the $2m$ equally likely proposals
contribute $1/2$.
The only rejected value is $2m-1$, which makes the old category the
second one. The new call then uses untouched independent proposal
blocks and eventually selects the first category with probability
$m/(2m-1)$. The remaining disagreement contribution is
$(1/(2m))m/(2m-1)=1/[2(2m-1)]$, and the total is
$1/2+1/[2(2m-1)]=m/(2m-1)$. The marginal laws are
$(1/2,1/2)$ and $(m/(2m-1),(m-1)/(2m-1))$, giving the stated TV.
Calls with width zero draw no words; positive widths discard unused
high bits after every accepted or rejected proposal. This argument
starts both calls at the same offset and does not assert alignment
of later calls after a differing width or rejection history.
\end{proof}

\begin{proof}[Proof of Proposition~\ref{prop:s2-deletion-work}]
The construction keeps local representative coordinates fixed. A shared first center at zero can then be followed by opposite endpoints, changing retained assignments and triggering fallback. We verify this event before computing the marginal total variation. Give client 0 $m$ distinct group-A records at $-1$, client 1 $m$
distinct group-A records at $+1$, and client 2 $m$ distinct group-B
records at $0$. Within each client, order records by their persistent
identity. Delete the last record of client 1, independently of the
random words. The retained global groups have sizes $2m-1$ and $m$,
so the request is legal. The coordinates are exactly represented on
the declared fine grids, and every nonempty slice still has at least
three records, exceeding $k=2$.

Each slice chooses one record, encounters zero remaining potential,
and completes with the smallest unused index. Since both local
representatives coincide, every record is assigned to the first
indexed slot. The local geometric outputs are therefore fixed,
regardless of any change in selected record identities. In canonical
client and representative order the server coordinates are
$(-1,-1,+1,+1,0,0)$, with old weights
\[
 (1/2,0,1/2,0,1,0)
\]
and retained weights
\[
 (m/(2m-1),0,(m-1)/(2m-1),0,1,0).
\]
The first-call primitive vectors are $(1,0,1,0,2,0)$ and
$(m,0,m-1,0,2m-1,0)$. Zero-weight slots are retained and skipped by
the strict cumulative comparison, not collapsed into another category.

Let $U$ be the retained first proposal, uniform on
$\{0,\ldots,4m-1\}$, and consider
\[
 E=\{2m-1\le U\le4m-3,\quad U\bmod4\in\{2,3\}\}.
\]
It contains $m-1$ proposal values and has probability $(m-1)/(4m)$.
Both calls accept their first proposal and choose coordinate $0$ on
$E$: the old primitive vector has total $4$, so the old call reads the
two low bits $U\bmod4$ and maps $\{2,3\}$ to coordinate $0$, while the
retained vector has total $4m-2$ and maps $2m-1\le U\le4m-3$ to
coordinate $0$. Conditional on this common prefix, the old second primitive
masses at $(-1,+1)$ are $(1,1)$ and the retained ones are $(m,m-1)$.

If the retained first width $h+2\le64$, both calls have consumed
exactly one word. The next words are independent of $E$, and
Lemma~\ref{lem:s2-word-coupling} gives second-location disagreement
probability $m/(2m-1)$. If $h+2>64$, the retained first proposal
consumes $\ell=\lceil(h+2)/64\rceil\ge2$ words. The old second
selection uses the low bit of word 1 inside that first block (words
are numbered from 0). Conditional on the block it is fixed, whereas
the retained second call starts at word $\ell$ and uses independent
words. Its probability of choosing the opposite endpoint is at
least $(m-1)/(2m-1)$. Thus in both cases the probability of
$E$ and opposite second endpoints is at least $\beta_m$.
The inequality $\beta_m>1/16$ follows from
$2m^2-7m+4>0$ for $m\ge4$.

On this event the indexed initializer changes between $(0,-1)$ and
$(0,+1)$. The two indexed shifts are $(0,2)$, and every old winner
has margin $d_2-d_1=1$. Every retained group-A point changes its indexed
label, while all group-B points remain in slot 0. Basic fails for every
point. Runner-up fails for every group-A point; for a group-B point its left
side is $0$ and its reverse-triangle lower bound is $|1-2|=1$,
so it passes. For $k=2$ there is no further competitor: its
associated lower-bound term is $+\infty$. These strict inequalities
also have the stated decisions under the implemented outward bounds:
zero displacements/distances are exactly enclosed, the successful
comparison has unit slack, and failed ideal tests cannot pass the
sound implementation. The small controlled-word fixtures check both
evaluated $(b,L)$ settings.

Since $(2m-1)/(3m-1)>1/2$, either variant triggers the strict majority
rule. Every attempted failure has already been directly assigned;
the transaction is discarded and all $3m-1$ survivors are assigned
again. Basic discards zero passes and duplicates $3m-1$ assignments;
Runner-up discards $m$ passes and duplicates $2m-1$. Both have
$S_0=0$, and later rounds are direct with $A=P=S=J=0$ and
$N=3m-1$ per round. Multiplying the event probability by $2m-1$
proves the expectation bounds. Certificate scans, shift sorting,
arithmetic, and other overhead are additional costs, not included in
these assignment counts.

It remains to compute the total variation of the marginal initialization laws, separately from their shared-word coupling. Take the six ordered coordinate tuples
\[
(0,-1),(0,+1),(-1,0),(-1,+1),(+1,0),(+1,-1)
\]
as the sample space for TV. Put $a=m/(2m-1)$ and $b=1-a$.
Their retained probabilities are
\[
 \left(\frac a2,\frac b2,
 \frac{a}{2(1+4b)},\frac{2ab}{1+4b},
 \frac{b}{2(1+4a)},\frac{2ab}{1+4a}\right).
\]
The old vector is $(1/4,1/4,1/12,1/6,1/12,1/6)$.
For $1/2<a\le4/7$, precisely entries 1, 3 and 4 increase.
For example, the sign of entry 4's difference is the sign of
$(2a-1)(5-6a)$, which is positive on this interval. Entries 3 and 4
sum to $a/2$; their old sum is $1/4$. The total positive change is
therefore $a-1/2=1/[2(2m-1)]$, proving the TV claim.
\end{proof}

\section{Deterministic Multi-Group Refinement}
\label{app:multi-group}

This section gives the mathematical extension of the fixed-partition update to $m>2$ groups, following \citet[Section~3]{ghadiri2021socially}. Assume $\alpha_{g,j}\ge0$ and omit zero-mass group-cells before forming their means; a mean of an empty cell is never evaluated. The canonical map remains binary; \cref{app:multigroup-prototype} supplies a separately named implementation and bounded pilot. For $m>2$ and a fixed partition, write
\begin{equation}
f_g(\mathbf C)=\Delta_g+\sum_{j=1}^k\alpha_{g,j}\|c_j-\mu_{g,j}\|^2.
\end{equation}
The fixed-partition problem is the convex epigraph program
\begin{equation}
\min_{\mathbf C,\theta}\ \theta
\quad\text{s.t.}\quad f_g(\mathbf C)\le\theta\quad(g=1,\ldots,m).
\end{equation}
Its Lagrange dual is over $\lambda\in\Delta_m$, the simplex of nonnegative group weights summing to one. Let
\begin{equation}
A_j(\lambda)=\sum_g\lambda_g\alpha_{g,j},
\qquad
b_j(\lambda)=\sum_g\lambda_g\alpha_{g,j}\mu_{g,j}.
\end{equation}
For $A_j(\lambda)>0$, a minimizer of the Lagrangian at fixed $\lambda$ is $c_j(\lambda)=b_j(\lambda)/A_j(\lambda)$. This need not minimize the primal minimax objective. If $A_j(\lambda)=0$, nonnegativity implies $b_j(\lambda)=0$, and that weighted quadratic contributes zero. Use the sole occupied group mean if exactly one group occupies the cluster; otherwise retain the incumbent center (including a fully empty cluster). These choices minimize the zero weighted quadratic, without asserting primal optimality. The dual problem maximizes the concave objective over $\Delta_m$:
\begin{equation}
D(\lambda)=\sum_g\lambda_g\Delta_g+
\sum_j\left[
\sum_g\lambda_g\alpha_{g,j}\|\mu_{g,j}\|^2-
\begin{cases}\|b_j(\lambda)\|^2/A_j(\lambda),&A_j(\lambda)>0,\\0,&A_j(\lambda)=0.\end{cases}
\right].
\end{equation}
A separately specified extension can use deterministic dual supergradient ascent, minimization of $-D$, or a primal-dual method; the prototype uses bounded Frank--Wolfe search on this dual. None is added to FTF-1 here. Same-seed exactness requires only that replay reproduce the same deterministic solver; optimization accuracy is logically separate. An exact objective guard preserves monotonicity. For any feasible center tuple, $\max_g f_g(\mathbf C)-D(\lambda)\le\eta$ certifies an $\eta$-optimal fixed-partition update by weak duality, but the result should not be called the exact convex optimum unless an exact solver is actually used.

\section{Sequential Deletion and Cache Purging}
\label{app:sequential}
\label{app:cache}

This section concerns the original record-replay cache; \cref{app:client-summary} gives the separate compact-state invariant for whole-client deletion at $T=0$.

\Cref{thm:exactness} concerns the released model, not the per-point cache needed for a later deletion. After one batch, the center trajectory can match fresh retraining while certified survivors retain old nearest/second-nearest distances. Those values are not the fresh cache for the revised trajectory.

\begin{proposition}[Direct cache reconstruction upper bound]
\label{prop:cache-lower}
\label{prop:cache-upper}
A direct rebuild for $n-r$ retained records and $T$ rounds uses $(n-r)kT$ center--point distance queries, $O((n-r)kdT)$ distance arithmetic, and additional indexed nearest/second selection and storage. This is an upper bound for a specified construction, not a universal Euclidean lower bound.
\end{proposition}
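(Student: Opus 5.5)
The plan is to make the count constructive. I will specify the rebuild procedure, then charge each step.

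For each retained record $x\in X\setminus R$ and each round $t=0,\ldots,T-1$, the rebuild takes the replayed indexed centers $\widetilde{\mathbf C}^{(t)}$. By \cref{thm:exactness} these equal the fresh trajectory $\mathbf C_R^{(t)}$, so the rebuilt cache is the fresh cache. The rebuild queries all $k$ center--point squared distances $\|x-\tilde c^{(t)}_m\|_2^2$, one per index $m\in[k]$. That gives exactly $k$ queries per (record, round) pair. Summing over the $n-r$ survivors and $T$ rounds gives $(n-r)kT$ queries; this is a direct product count with no overlap.

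Next I charge arithmetic. Each squared distance in $\R^d$ costs $O(d)$ unit operations: $d$ subtractions, $d$ multiplications and $d-1$ additions. Summed over the queries, this gives $O((n-r)kdT)$. On the fixed-point grid the same count holds in unit-cost operations. Bit widths are excluded, matching the unit-cost convention of \cref{app:work-bound}.

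The remaining work, listed separately in the statement, is selecting the nearest and second-nearest indices with the smallest-index tie rule. This is a single pass over the $k$ values, costing $O(k)$ comparisons per pair. The rebuild then stores $(j,d_1,d_2,r)$ in the fixed cache layout of \cref{app:implementation}, which costs $O(nT)$ space.

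The only subtle point is scope, not computation. The statement claims an upper bound for this specified construction, not that any algorithm needs $(n-r)kT$ queries. I will therefore add a remark: with only $k$ centers, a data structure or pruning (for example Elkan-type bounds) could skip some queries. So no universal Euclidean lower bound is asserted, and none is needed.
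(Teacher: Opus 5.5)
Your proposal is correct and matches the paper's proof step for step. Both evaluate all $k$ squared distances for each retained record and round at $O(d)$ arithmetic each, select the first two entries in distance-then-index order (linear selection suffices instead of an $O(k\log k)$ sort), store the resulting bounds and IDs, and disclaim any universal lower bound because geometric search structures can skip queries; your appeal to \cref{thm:exactness} to identify the rebuilt cache with the fresh one is harmless but not needed for the count.
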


\begin{proof}
Evaluate all $k$ squared distances for each retained record and round, select the first two entries in distance-then-index order, and save their verified bounds and IDs. Each coordinate-based squared distance takes $O(d)$ arithmetic. A full sort adds $O(k\log k)$ comparisons per record; first-two selection can instead be linear. The direct construction proves the upper bound without any assumption of independent geometric distance-table entries.
\end{proof}

Dimension multiplies arithmetic per distance; it is not the number of distance queries. Geometric search data structures can exploit information absent from an arbitrary distance table \citep{deberg2021dynamic}. A universal $\Omega(nk)$ Euclidean query claim also needs a precise oracle model and a simultaneous realizability proof. For example, one-dimensional coordinate access allows sorted-center search. We make no scan-necessity or universal cache-rebuild lower-bound claim. The batch API outputs the model projection; rebuilding or purging a sequential checkpoint is a separate, explicitly charged operation. Stale caches and deleted-record tombstones are not fresh-equivalent state.

\section{Experimental Protocols and Supporting Evidence}
\label{app:experiments}
\label{app:experimental-evidence}
This appendix contains the protocols, evidence, and diagnostic cases needed to assess the claims of \cref{sec:experiments}. It first records evidence versions and validation scope (\cref{app:validation}) and the timing and work definitions shared by all panels (\cref{app:timing}), then gives cohort provenance (\cref{app:p124-protocol}) and each panel in turn. The public supplementary release (v1.0.0) provides implementation source, processed inputs, saved reporting data and reproduction commands. Its inventory distinguishes included evidence from separately retained historical archives; references to ``the artifact'' below follow that inventory. These records support detailed lookup and reproduction, while this appendix gives the methods, denominators, uncertainty, adverse outcomes and scope.

\subsection{Evidence versions and validation scope}
\label{app:validation}
P1/P4 retain the source and evidence version \texttt{corrected-v2-full-2bff36fecd98fbb3}. P2/P3 are newly measured with implementation \texttt{FTF-1D-descriptor-v1}, in runs \texttt{ftf1d-p2-full-20260919} and \texttt{ftf1d-p3-full-20260919}. The numerical FTF-1 map, tabular inputs and \texttt{acs2018-ftf1-p3-v1} population, ordered identities, requests and seeds are unchanged.

\emph{Implementation repair behind the P2/P3 rerun.} Pickle-restored arrays can carry equivalent but different descriptor objects. The shared statistic accumulator in \texttt{FTF-1D-descriptor-v1} canonicalizes valid native-int64 array descriptors after the original validation and overflow checks. Six predefined same-buffer checks on this runtime reproduce a roughly five- to six-fold change in the feature-sum reduction, reversible by restoring the alternate descriptor, without changing values, layout or precision. This is a local reduction measurement, not a full-algorithm speed factor or attribution of historical timings. All four methods use the same corrected path within existing timers; sampling, updates, certificates and fallback are unchanged. Historical timings remain a separate source/environment record.

Old/new process and full-grid checkpoint/method comparisons preserve exact numerical outputs, including P3 initializers and accounting. P1/P4 were not rerun; current numerical checks remain distinct from preserved historical stress and diagnostic evidence.

The current small-instance validation records 213 passing tests, including 17 descriptor regressions covering pickle restoration, strided/empty inputs and rejection semantics. Fourteen cases use six processes each: checkpoint creation, an integer/rational oracle that does not reuse production arithmetic, fresh training, and three replays (84 executions). Cases cover certificate underflow/cancellation, nearest-center ties, represented-candidate objectives, one represented-$X$ path, rational sampling/zero potential, empty slices/cells, saturation, invalid requests, safe integer bounds, and $(b,L)=(12,6),(16,20)$. Preserved earlier reports cover 33 exact-Fraction distance-filter stress tests through dimension 4096 and compare the compressed P1 evaluator with canonical $D^2$-sampling on 96 cases. These are bounded checks, not an all-input proof.

Each P2/P3 trial creates a clean checkpoint and runs fresh training and three replays in separate processes. All indexed center coordinates at every round and raw integer $N,S,SS$ match without tolerance or permutation: 525 P2 and 285 P3 comparisons. They share production kernels, so they are not independent arithmetic-oracle validations. P3's separate observer/worker validation uses 35 process executions, checks counters against independent small fixtures, and detects corrupted counters. The ideal categorical-law proof in \cref{app:implementation} is separate from empirical PRNG frequencies.

Panel counts use method/seed outcomes for P1, trial/method outputs for P2/P3, and settings for P4; shared-seed settings are not independent. The evaluated map is binary-group and simulated locally.

\subsection{Common output, timing, and work contracts}
\label{app:timing}
Fresh and replay return comparable model/witness schemas without a reusable point cache; checkpoint construction is charged separately. Direct replay reuses untouched summaries but assigns every survivor each round. Certificate replay follows \cref{alg:unlearn}, abandoning at a strictly greater than $1/2$ failed fraction. Direct makes no certificate attempts or abandonment transition.

The paired elapsed ratio is $r_{i,m}=t_{i,\rm fresh}/t_{i,m}$ (above one favors replay). Its arithmetic mean differs from the total-time ratio $\sum_i t_{i,\rm fresh}/\sum_i t_{i,m}$ and pooled relative reduction
\begin{equation}
100\frac{\sum_i(t_{i,\rm fresh}-t_{i,m})}{\sum_i t_{i,\rm fresh}}.
\label{eq:pooled-reduction}
\end{equation}
The latter is not $100(\overline r_m-1)$. Summaries and uncertainty describe this trial grid, not a deployment population.

Let $N=(n-r)T$ count eligible survivor point-rounds, $A$ attempted tests, $P$ passes, $S$ passes retained after same-round fallback, and $J$ failed assignments repeated in a full rebuild. Work is $N-S+J$; discarded passes are $P-S$. Acceptance $P/A$ and useful coverage $S/N$ have different denominators, with zero denominators reported as N/A. Per-trial means and pooled ratios, point-units and distance pairs, exact evaluations and seconds/bytes are distinct.

Algorithm elapsed time is local serial wall time. Modeled compute combines server time with per-round maximum client time, excluding network latency. Comparable end-to-end (E2E) adds recorded input/materialization, checkpoint loading where needed, and equal-schema witness output. Nested primitive timers cannot be summed. Offline quality, diagnostics, verification, and initial checkpoint construction are excluded from deletion-call algorithm/E2E timings. Warm OS caches and nondedicated desktop hardware limit latency interpretations. The new P2/P3 runs are sequential, with no concurrent experiment jobs, profiling, heavy validation or rendering; light document preparation and ordinary desktop activity remain possible. No network, secure aggregation, retries, synchronization, stragglers, or purge service is timed.

\subsection{P2/P4 cohort provenance and execution definitions}
\label{app:p124-protocol}
The executed \texttt{p124-unique-source-v2} arrays, ordered memberships, and requests are frozen by hash. Bank reconstruction matches every processed coordinate, mask, and all 20 clients. For Adult/Credit, supplied code documents intended recipes, but original raw-file hashes, fitted transforms, expanded Adult vocabulary, Credit worksheet order, and preprocessing dependency versions are unavailable. These are evaluations on archived processed inputs; recipe names do not establish raw-population or semantic-feature identity. The gaps do not invalidate observed replay comparisons or justify dropping trials, and processed-input reproduction needs none of the missing raw preprocessing state.

The supplied recipes specify the following geometry; only Bank's raw reconstruction is verified. Full recipes remain in the artifact. In Adult, the stored literal male mask maps to B and its complement to A; rows with missing fields and sex/fnlwgt columns are removed, income remains, remaining fields are dummy-encoded and standardized on the eligible population with population variance, without PCA. Bank uses married versus its complement, removes target y but retains marital status, dummy-encodes with \texttt{drop\_first=True}, and uses sample-standard-deviation scaling. Credit uses EDUCATION in $\{1,2\}$ versus other codes, removes ID/target, and retains numeric codes including EDUCATION and SEX, with the same sample scaling. Bank/Credit apply no row exclusion or imputation. The frozen populations/dimensions are in \cref{tab:corrected-cohorts}; the artifact preserves the full source recipes and unresolved ordered-feature identities. Categorical codes are not learned semantic distances.

\paragraph{Clients and representation.}
The supplied split initializes \texttt{default\_rng(42)}, permutes true-mask then false-mask indices, gives each of the first $C-1$ clients each group's floor share, and gives the last client both remainders. The continuing generator permutes each concatenated client list. Corrected loading verifies this order against stored coordinates/masks and selects distinct source keys once: 100 Adult, 20 Bank, 100 Credit. Seed 0 selects all clients, sorted before dense reindexing. Identity is pickle hash plus full-array row, not a native identifier; coincident vectors remain distinct. Fixed transforms precede encoding with $b=12$ and clipping bounds 175, 121, 74, respectively; the same represented $X$ serves initialization, refinement, and quality.

\paragraph{P2 requests and seeds.}
For dataset, $k$, and trial $i=0,\ldots,4$, training seed is $s=1000k+i$ and request seed $s+500000$. A PCG64 generator samples uniformly without replacement from client-sorted local rows, with no group/client quota, using $r=\max(20,\operatorname{round}(0.001n))$: 30 Adult, 45 Bank, 30 Credit. Each independent request starts from the original population, is sorted within client, and is shared by all methods. Since $r$ is below both group sizes, the first draw is valid. Request validation and role/entity-keyed tapes follow \cref{app:implementation}; empty local clients/slices are permitted.

\paragraph{P4 and baselines.}
P4 uses the full cohorts without deletion. MERGED-2K makes one sequential locally weighted call per client with budget $\min(2k,n_c)$ and weights $1/n_{c,g}$, weighted first and $D^2$ draws, then assigns unsnapped points to nearest indexed representatives and sends integer group counts. Server weights are $h_{c,j,A}/n_A+h_{c,j,B}/n_B$. Multiplicity precedes snapping; slot order is client then selection index. Zero potential completes unused indices in order; saturation returns ordered input without draws. The centralized reference uses pooled weights $1/n_g$. Every initializer receives the same 15 guarded rounds, with warm/final quality recorded and no local or anchor refinement ($T_a=0$).

\paragraph{Execution and analysis.}
The new P2 run uses one local worker on Apple M3 Max with 48\,GiB memory, arm64 macOS 27.0, Python 3.12.14 and NumPy 2.3.5; five numerical-library thread settings are fixed to one. P4 retains its earlier macOS 26.6.2 execution; its exact CPU model was not recorded. P2 rotates fresh/direct/basic/runner-up by $s\bmod4$ in separate processes after checkpoint creation; garbage collection precedes each timed call. P4 follows the frozen setting order; timing boundaries are in \cref{app:timing}.
For \texttt{summarize\_corrected\_p2.py}, PCG64 seed 978531 first draws 5,000 IID row replicates, then the reported 5,000 fixed-$k$ seed-block replicates from the continuing stream. Blocks retain shared seeds across datasets and are independently resampled at each $k$; intervals use linearly interpolated 2.5th/97.5th percentiles. The artifact index identifies both implementation versions, reporting code and raw-source gaps.

\subsection{P1: architecture-specific heterogeneity}
\label{app:p1-evidence}
The construction in \cref{app:kappa-proof} uses $\kappa\in\{1,2,4,8,16,32,64,128\}$, $\epsilon=1/(\kappa+1)$, integral group counts, scale 10, $k=1$, and $\gamma=T_a=T=0$, with 10,000 common seed indices per method/value. Merged-2k matches anchor slots, not bytes; this two-location experiment does not evaluate the analytical three-location result of \cref{prop:s2-matched-budget}. Centralized weighted $D^2$-sampling is a quality reference. Empirical PRNG moments and ideal-bit analytical expectations are distinct; a finite mean below an expectation bound is no contradiction. All 32 summary rows remain in \texttt{p1\_summary.csv}, with prior receipts for the seed-level archive.

\begin{table}[!htbp]
\centering\footnotesize
\caption{P1 at $\kappa=128$, 10,000 seed indices per method. SE describes across-seed variation under the declared deterministic PRNG protocol, with the ideal-bit expectation a separate theorem.}
\label{tab:corrected-p1}
\begin{tabular}{lrrr}
\toprule
Method & Mean $G/G^*$ & SE & Mean $\Phi/\Phi^*$ \\
\midrule
Merged-1k & 63.5310 & 1.1028 & 32.3935 \\
Merged-2k & 1.9034 & 0.1518 & 1.4593 \\
SplitGroup & 2.2375 & 0.1776 & 1.6270 \\
Central & 1.7749 & 0.1406 & 1.3948 \\
\bottomrule
\end{tabular}
\end{table}

\begin{figure}[!htbp]
\centering\includegraphics[width=\linewidth]{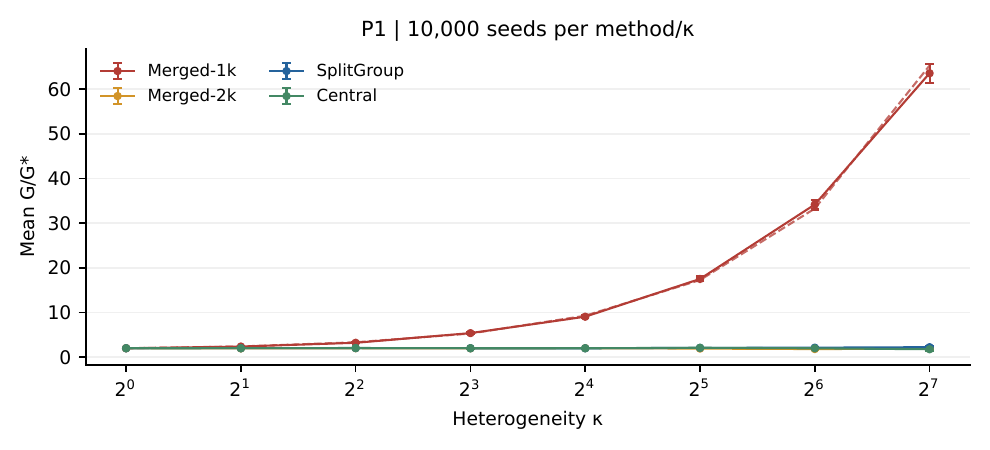}
\caption{P1: does locally balanced merging lose quality as the local/global mismatch $\kappa$ grows? Initialization quality on the lower-bound construction of \cref{app:kappa-proof}, with 10,000 seed indices per method and heterogeneity value. Dashed curves are ideal analytical expectations; error bars are descriptive normal intervals from across-seed standard errors. The increasing \Merged{} cost concerns the specified two-location construction, not every merged summary.}
\label{fig:p1-current}
\end{figure}

\subsection{P2: unique-source tabular cohorts}
\label{app:p2-evidence}
Adult, Bank Marketing, and Credit Default \citep{uci1996adult,moro2014bank,yeh2009credit} use fixed processed arrays with dimensions 103, 42, and 23. The unique-source manifest selects each physical source client once and preserves membership. \Cref{tab:corrected-cohorts} gives group counts and count-derived $\kappa$; no client is pure or at least 95\% single-group. Provenance, group definitions, encoding, and requests are specified once in \cref{app:p124-protocol}.

\begin{table}[!htbp]
\centering
\small
\caption{Source-cohort manifest, p124-unique-source-v2. All listed source clients are unique; source hashes match the audit and the full-data partition is byte-verified. No client is pure or at least 95\% single-group.}
\label{tab:corrected-cohorts}
\begin{tabular}{lrrrrrr}
\toprule
Dataset & Records & Clients & $d$ & Group 0 & Group 1 & $\kappa$ \\
\midrule
Adult & 30162 & 100 & 103 & 9782 & 20380 & 1.317781 \\
Bank & 45211 & 20 & 42 & 17997 & 27214 & 1.008094 \\
Credit & 30000 & 100 & 23 & 5385 & 24615 & 2.416867 \\
\bottomrule
\end{tabular}
\end{table}

Adult/Credit use $k\in\{4,5,\ldots,16\}$; Bank uses $k\in\{2,3,4,5,10,20,30,40,50\}$. Five independent requests per setting give 65, 65, and 45 trials, all with $T=15,b=12,L=6,T_a=\gamma=0$ and abandonment threshold 0.5. Shared seed indices across datasets are preserved in the uncertainty calculation.

\Cref{tab:corrected-p2,tab:corrected-p2-counters} report performance and assignment work; \cref{tab:corrected-p2-detail} gives timing-boundary detail. The 5,000 fixed-$k$ seed-block bootstrap replicates preserve shared seeds across datasets, rather than treating all rows as IID (\cref{app:p124-protocol}).

\begin{table}[!htbp]
\centering\footnotesize
\caption{P2, all 175 trial units. Ratios use output-matched serial elapsed time. Intervals are 5,000 fixed-$k$ seed-block bootstrap replicates, preserving shared seeds across datasets. Every replay trial saves exact indexed trajectories and raw statistics.}
\label{tab:corrected-p2}
\begin{tabular}{lrrrrr}
\toprule
Method & Mean & Mean 95\% CI & Median & Wins/fresh & Abandons \\
\midrule
Direct & 1.066 & [1.065, 1.068] & 1.067 & 173/175 & N/A \\
Basic & 1.025 & [1.004, 1.062] & 1.008 & 107/175 & 174/175 \\
Runner-up & 1.024 & [1.004, 1.061] & 1.007 & 108/175 & 174/175 \\
\bottomrule
\end{tabular}
\end{table}

\begin{table}[!htbp]
\centering\footnotesize
\caption{P2 cost detail, ratios of fresh/method time. End-to-end includes recorded input/checkpoint/materialization/witness-output cost; modeled time is compute only.}
\label{tab:corrected-p2-detail}
\begin{tabular}{lrrr}
\toprule
Method & Elapsed range & Model mean & E2E mean \\
\midrule
Direct & 0.960--1.138 & 1.002 & 1.052 \\
Basic & 0.938--4.222 & 0.985 & 1.010 \\
Runner-up & 0.934--4.187 & 0.986 & 1.009 \\
\bottomrule
\end{tabular}
\end{table}

\begin{table}[!htbp]
\centering\footnotesize
\caption{P2 point-round counters pooled over 175 trials. $N,A,P,S,J$ are defined in \cref{app:timing}; $P/A$ includes discarded passes, while useful coverage is $S/N$. Direct replay attempts no certificates.}
\label{tab:corrected-p2-counters}
\begin{tabular}{lrrrrrr}
\toprule
Method & $N$ & $A$ & $P$ & $S$ & $J$ & $P/A$ \\
\midrule
Direct & 89086500 & 0 & 0 & 0 & 0 & N/A \\
Basic & 89086500 & 6571424 & 677014 & 677014 & 5893934 & 10.3024\% \\
Runner-up & 89086500 & 6571424 & 677018 & 677014 & 5893930 & 10.3025\% \\
\bottomrule
\end{tabular}
\end{table}

Pooled algorithm-time reductions for Direct/Basic/Runner-up are $5.730\%,0.567\%,0.492\%$. Their E2E total-time ratios are $1.049,0.996,0.995$, with wins in $170,69,70$ of 175 trials, respectively; there are no ties. Thus the certificates' small algorithm-time gains do not extend to total E2E time. Both certificates abandon at round zero in 174 trials and never abandon in one. Runner-up's four extra attempted passes are discarded, leaving the same useful $S$. Basic has $J=5{,}893{,}934$ and $N-S+J=94{,}303{,}420$; runner-up has $J=5{,}893{,}930$ and $N-S+J=94{,}303{,}416$. These four work units do not determine elapsed-time ordering.

\subsection{P3: public ACS geographic federation}
\label{app:p3-evidence}
\paragraph{Population and preprocessing.}
We use the 2018 Census 1-Year person files for the 50 states, excluding DC and Puerto Rico \citep{census2018pums}. The eligible pool has 3,207,990 records: state code must match the file, SEX must be 1 (group A) or 2 (group B), and PUMA must be finite, positive, and integral. There is no age, employment, or income threshold. We use the Folktables framework \citep{ding2021folktables}; release 0.0.12 supplies the ACSEmployment schema and identity prefilter \citep{folktablescode}; no prediction outcome is used.

Feature order is \texttt{AGEP}, \texttt{SCHL}, \texttt{MAR}, \texttt{RELP}, \texttt{DIS}, \texttt{ESP}, \texttt{CIT}, \texttt{MIG}, \texttt{MIL}, \texttt{ANC}, \texttt{NATIVITY}, \texttt{DEAR}, \texttt{DEYE}, \texttt{DREM}, \texttt{SEX}, \texttt{RAC1P}. Age and 15 categorical codes remain numeric, without one-hot encoding or embeddings. Missing values become $-1$; remaining infinities and invalid native identifiers are rejected. NumPy 2.3.5 computes the full-pool mean and population standard deviation once from contiguous binary64 arrays in FIPS/state-file row order (zero standard deviation becomes one). Subsampling/deletion retain this transform. Encoding uses clip 8 and $b=12$; the full-pool maximum absolute standardized coordinate is 6.102270. This reproducible geometry does not establish a social interpretation for distances between categorical codes.

\paragraph{Ownership, sampling, and execution.}
Clients are states for $C=10,25,50$ and state-qualified PUMAs for $C=100,250$; PUMAs are Census geographic areas \citep{censuspuma}. A keyed PCG64 permutation of sorted geographic keys is sampled by prefix. Largest-remainder quotas allocate records proportionally to eligible client size, breaking ties by prefix position; sampling is without replacement and records retain source order. Thus varying $C$ also changes geography/composition. Native identity is (2018, state FIPS, SERIALNO, SPORDER), bound to source hash and row. All 95 requests are independent, noncumulative uniform subsets with fixed derived seeds. Each $r$ is below both group sizes, so the first draw is valid without conditioning. These cohorts do not use FeDa4Fair's controlled-bias partitions \citep{heilmann2026feda}.

\begin{table}[!htbp]
\centering
\footnotesize
\caption{ACS populations. PUMA identities are state-qualified. Group counts, client-size ranges, and exact count-derived heterogeneity are shown. P/NP counts pure and at least 95\% single-group clients; these are population descriptors, not performance explanations.}
\label{tab:p3-populations}
\begin{tabular}{rrlrrrrr}
\toprule
$n$ & $C$ & Client & $n_A$ & $n_B$ & $\kappa$ & Client size & P/NP \\
\midrule
100,000 & 10 & State & 48,728 & 51,272 & 1.105216 & 1446--41266 & 0/0 \\
100,000 & 25 & State & 49,022 & 50,978 & 1.127072 & 392--23054 & 0/0 \\
100,000 & 50 & State & 49,033 & 50,967 & 1.229392 & 179--11808 & 0/0 \\
100,000 & 100 & PUMA & 48,925 & 51,075 & 1.374730 & 383--2254 & 0/0 \\
100,000 & 250 & PUMA & 49,293 & 50,707 & 1.523855 & 150--884 & 0/0 \\
250,000 & 50 & State & 122,606 & 127,394 & 1.183868 & 447--29521 & 0/0 \\
500,000 & 50 & State & 244,723 & 255,277 & 1.194489 & 895--59043 & 0/0 \\
1,000,000 & 50 & State & 490,130 & 509,870 & 1.151322 & 1789--118085 & 0/0 \\
\bottomrule
\end{tabular}
\end{table}

\begin{table}[!htbp]
\centering
\footnotesize
\caption{Frozen P3 one-factor design. Every row uses seed indices 0--4 and all four methods; $d=16,b=12,L=6,T_a=0,\gamma=0$, abandonment threshold $0.5$. Designs 15 and 16 are the prespecified pilots. Repeated defaults across one-factor sweeps form a union of 19 configurations, not a factorial experiment.}
\label{tab:p3-designs}
\begin{tabular}{rrrrrrr}
\toprule
ID & $n$ & $C$ & $k$ & $T$ & $r$ & $r/n$ \\
\midrule
0 & 100,000 & 50 & 4 & 5 & 100 & 0.001 \\
1 & 250,000 & 50 & 4 & 5 & 250 & 0.001 \\
2 & 500,000 & 50 & 4 & 5 & 500 & 0.001 \\
3 & 1,000,000 & 50 & 4 & 5 & 1,000 & 0.001 \\
4 & 100,000 & 10 & 4 & 5 & 100 & 0.001 \\
5 & 100,000 & 25 & 4 & 5 & 100 & 0.001 \\
6 & 100,000 & 100 & 4 & 5 & 100 & 0.001 \\
7 & 100,000 & 250 & 4 & 5 & 100 & 0.001 \\
8 & 100,000 & 50 & 8 & 5 & 100 & 0.001 \\
9 & 100,000 & 50 & 16 & 5 & 100 & 0.001 \\
10 & 100,000 & 50 & 32 & 5 & 100 & 0.001 \\
11 & 100,000 & 50 & 64 & 5 & 100 & 0.001 \\
12 & 100,000 & 50 & 4 & 10 & 100 & 0.001 \\
13 & 100,000 & 50 & 4 & 20 & 100 & 0.001 \\
14 & 100,000 & 50 & 4 & 40 & 100 & 0.001 \\
15 & 100,000 & 50 & 4 & 5 & 1 & 0.00001 \\
16 & 100,000 & 50 & 4 & 5 & 10 & 0.0001 \\
17 & 100,000 & 50 & 4 & 5 & 1,000 & 0.01 \\
18 & 100,000 & 50 & 4 & 5 & 5,000 & 0.05 \\
\bottomrule
\end{tabular}
\end{table}

Methods run in separate processes with numerical-library thread limits of one and four-method order rotated by seed modulo four. The environment is Python 3.12.14/NumPy 2.3.5, macOS 27.0 arm64, Apple M3 Max with 48\,GiB unified memory. This is not a networked deployment. Designs 15/16 are the ten pilot trials, included once among the 95 trials.

\paragraph{Checkpoint and timing costs.}
\label{app:checkpoint-costs}
Across 95 checkpoints, mean construction time is 2.4395\,s and mean serialization is 0.0271\,s; mean/maximum checkpoint size is 103,342,382/504,266,791 bytes. These costs are separate from deletion-call timing. Mean insertion time, 0.0002\,s, measures dictionary insertion of already computed arrays; bound construction is charged inside checkpoint assignments. Model-only output with zero insertion does not avoid bound computation.

For fresh/direct/basic/runner-up, mean algorithm seconds are 2.4261/2.2982/2.5861/2.5823 and mean comparable E2E seconds are 2.4624/2.3598/2.6480/2.6440. Primitive timers nest and include observation wrappers; they do not isolate a causal explanation of runtime. Process RSS and evidence-output bytes are not cache-only storage or network traffic. Complete component and resource records remain in the indexed artifact tables.

\begin{table}[!htbp]
\centering
\footnotesize
\caption{Additional descriptive P3 suite summaries. End-to-end (E2E) means and total-time ratios are distinct. Losses are paired elapsed E2E losses against fresh. Acceptance percentages exclude undefined trial ratios from the mean; Def. gives the count. No IID interval is attached to this varied suite.}
\label{tab:p3-suite-extra}
\begin{tabular}{lrrrrrr}
\toprule
Method & Mean E2E & Total E2E & Losses & Mean $P/A$ & Pool $P/A$ & Def. \\
\midrule
Fresh & 1.000 & 1.000 & 0/95 & N/A & N/A & 0/95 \\
Direct & 1.078 & 1.043 & 23/95 & N/A & N/A & 0/95 \\
Basic & 1.252 & 0.930 & 78/95 & 3.16 & 8.38 & 95/95 \\
Runner-up & 1.228 & 0.931 & 79/95 & 3.16 & 8.38 & 95/95 \\
\bottomrule
\end{tabular}
\end{table}

\FloatBarrier
\paragraph{Variation and uncertainty.}
\Cref{tab:p3-algorithm-timing} reports all 19 designs' new algorithm ratios and intervals; \cref{tab:pilot-compact} retains all ten small-deletion seeds' algorithm and E2E ratios, including every loss. The artifact contains per-design E2E intervals and duplicate seed plots. Within each fixed design, 5,000 paired-seed percentile bootstrap draws use seed 20260906 and linear interpolation. No IID interval is attached to the varied suite; five-seed intervals do not establish reliability, causal parameter effects, or a universal deletion threshold.

All five acceptance ratios are defined in every design. For both certificates, designs other than 15 have zero mean/pooled acceptance and useful coverage, with all five trials abandoning. Design 15 has 60.00\% mean acceptance and mean/pooled useful coverage, but 88.23\% pooled acceptance; two trials abandon. Over the suite, each certificate abandons at round zero in 92/95 trials, later in none, and never in three. Transitions occur in 92/107 attempted and 92/750 total rounds, followed by 643 direct rounds; no passed assignments are discarded. 

\begin{table}[!htbp]
\centering
\footnotesize
\caption{P3 algorithm elapsed ratios Fresh/method for all 19 designs. Each cell's first line is the mean paired ratio [95\% percentile interval]; its second line is the ratio of total times [interval]. The paired bootstrap resamples five seed indices within each fixed design, 5,000 draws, seed 20260906. Intervals are descriptive small-sample uncertainty, not deployment reliability. Ratios above one favor replay.}
\label{tab:p3-algorithm-timing}
\begin{tabular}{rccc}
\toprule
ID & Direct & Basic & Runner-up \\
\midrule
0 & \shortstack{1.079 [1.072, 1.086]\\1.079 [1.071, 1.086]} & \shortstack{0.918 [0.913, 0.923]\\0.918 [0.913, 0.923]} & \shortstack{0.919 [0.913, 0.925]\\0.919 [0.913, 0.925]} \\
1 & \shortstack{1.026 [1.016, 1.035]\\1.026 [1.016, 1.034]} & \shortstack{0.882 [0.872, 0.890]\\0.882 [0.872, 0.890]} & \shortstack{0.876 [0.866, 0.884]\\0.876 [0.866, 0.884]} \\
2 & \shortstack{1.020 [1.007, 1.035]\\1.020 [1.007, 1.035]} & \shortstack{0.878 [0.867, 0.889]\\0.878 [0.867, 0.889]} & \shortstack{0.872 [0.861, 0.887]\\0.872 [0.861, 0.887]} \\
3 & \shortstack{1.011 [0.996, 1.029]\\1.011 [0.996, 1.029]} & \shortstack{0.871 [0.865, 0.877]\\0.871 [0.865, 0.877]} & \shortstack{0.872 [0.861, 0.882]\\0.872 [0.861, 0.882]} \\
4 & \shortstack{0.999 [0.984, 1.013]\\0.999 [0.984, 1.013]} & \shortstack{0.859 [0.844, 0.877]\\0.859 [0.844, 0.877]} & \shortstack{0.864 [0.853, 0.873]\\0.864 [0.853, 0.873]} \\
5 & \shortstack{1.036 [1.025, 1.049]\\1.036 [1.025, 1.049]} & \shortstack{0.898 [0.879, 0.915]\\0.898 [0.879, 0.915]} & \shortstack{0.874 [0.853, 0.892]\\0.874 [0.853, 0.892]} \\
6 & \shortstack{1.204 [1.197, 1.212]\\1.204 [1.197, 1.212]} & \shortstack{1.012 [1.002, 1.024]\\1.012 [1.002, 1.024]} & \shortstack{0.998 [0.995, 1.001]\\0.998 [0.995, 1.001]} \\
7 & \shortstack{1.300 [1.276, 1.326]\\1.300 [1.276, 1.326]} & \shortstack{1.049 [1.016, 1.075]\\1.048 [1.014, 1.075]} & \shortstack{1.057 [1.047, 1.067]\\1.057 [1.047, 1.067]} \\
8 & \shortstack{1.092 [1.076, 1.108]\\1.091 [1.076, 1.108]} & \shortstack{0.944 [0.933, 0.955]\\0.944 [0.933, 0.955]} & \shortstack{0.948 [0.932, 0.964]\\0.948 [0.932, 0.964]} \\
9 & \shortstack{1.125 [1.107, 1.144]\\1.125 [1.107, 1.144]} & \shortstack{0.975 [0.955, 0.991]\\0.974 [0.954, 0.991]} & \shortstack{0.977 [0.962, 0.992]\\0.977 [0.962, 0.992]} \\
10 & \shortstack{1.104 [1.075, 1.128]\\1.103 [1.074, 1.128]} & \shortstack{0.977 [0.964, 0.991]\\0.977 [0.964, 0.991]} & \shortstack{0.972 [0.941, 0.993]\\0.971 [0.940, 0.993]} \\
11 & \shortstack{1.113 [1.104, 1.121]\\1.113 [1.104, 1.121]} & \shortstack{0.985 [0.971, 0.998]\\0.984 [0.971, 0.998]} & \shortstack{0.988 [0.975, 0.999]\\0.988 [0.975, 0.999]} \\
12 & \shortstack{1.040 [1.029, 1.057]\\1.039 [1.029, 1.057]} & \shortstack{0.945 [0.940, 0.951]\\0.945 [0.940, 0.951]} & \shortstack{0.953 [0.945, 0.962]\\0.953 [0.945, 0.962]} \\
13 & \shortstack{1.023 [1.006, 1.035]\\1.022 [1.006, 1.035]} & \shortstack{0.970 [0.952, 0.991]\\0.970 [0.952, 0.991]} & \shortstack{0.982 [0.962, 1.001]\\0.982 [0.962, 1.001]} \\
14 & \shortstack{1.008 [0.990, 1.023]\\1.007 [0.989, 1.023]} & \shortstack{0.952 [0.927, 0.970]\\0.952 [0.926, 0.970]} & \shortstack{0.966 [0.951, 0.980]\\0.966 [0.951, 0.980]} \\
15 & \shortstack{1.368 [1.350, 1.384]\\1.367 [1.350, 1.384]} & \shortstack{10.205 [3.806, 16.604]\\2.581 [1.404, 16.525]} & \shortstack{9.208 [3.519, 14.899]\\2.539 [1.394, 14.834]} \\
16 & \shortstack{1.276 [1.252, 1.314]\\1.276 [1.252, 1.314]} & \shortstack{1.073 [1.055, 1.103]\\1.073 [1.055, 1.103]} & \shortstack{1.066 [1.041, 1.101]\\1.065 [1.041, 1.101]} \\
17 & \shortstack{0.991 [0.977, 1.000]\\0.990 [0.977, 1.000]} & \shortstack{0.856 [0.853, 0.860]\\0.856 [0.853, 0.860]} & \shortstack{0.858 [0.847, 0.869]\\0.857 [0.847, 0.869]} \\
18 & \shortstack{0.997 [0.984, 1.008]\\0.997 [0.984, 1.008]} & \shortstack{0.859 [0.853, 0.868]\\0.859 [0.853, 0.868]} & \shortstack{0.851 [0.847, 0.858]\\0.851 [0.847, 0.858]} \\
\bottomrule
\end{tabular}
\end{table}

\begin{table}[!htbp]
\centering\footnotesize
\caption{When does certification pay off? All small-deletion trials, as algorithm / E2E paired fresh-to-method ratios. Certification reaches large ratios only in the three trials that never fall back; in the other seven it beats fresh training here but trails direct replay. Values above one favor replay. Designs 15 and 16 have $r=1$ and $r=10$; other settings are in \cref{tab:p3-designs}. Fallback is the first zero-based round, identical for both certificates. Direct has no fallback. Both certificates' attempted acceptance and useful coverage round to 100\% in the three never-fallback trials and are zero in the other seven; exact unsaved counts for the successes appear in \cref{tab:s2-p3-diagnostics}. Full per-method seconds and unrounded records remain in the artifact.}
\label{tab:pilot-compact}
\begin{tabular}{rllll}
\toprule
Design/seed & Direct & Basic & Runner-up & Fallback \\
\midrule
15/0 & 1.374 / 1.342 & 17.331 / 11.488 & 15.544 / 10.731 & Never \\
15/1 & 1.355 / 1.324 & 16.959 / 11.377 & 15.191 / 10.591 & Never \\
15/2 & 1.386 / 1.353 & 1.145 / 1.126 & 1.144 / 1.125 & 0 \\
15/3 & 1.386 / 1.353 & 1.151 / 1.130 & 1.138 / 1.119 & 0 \\
15/4 & 1.337 / 1.308 & 14.438 / 10.241 & 13.026 / 9.504 & Never \\
16/0 & 1.258 / 1.232 & 1.070 / 1.055 & 1.065 / 1.049 & 0 \\
16/1 & 1.269 / 1.242 & 1.058 / 1.043 & 1.054 / 1.040 & 0 \\
16/2 & 1.243 / 1.217 & 1.055 / 1.041 & 1.046 / 1.032 & 0 \\
16/3 & 1.265 / 1.238 & 1.052 / 1.037 & 1.031 / 1.017 & 0 \\
16/4 & 1.347 / 1.318 & 1.130 / 1.113 & 1.131 / 1.114 & 0 \\
\bottomrule
\end{tabular}
\end{table}

\begin{table}[!htbp]
\centering
\footnotesize
\caption{P3 raw point-round totals over the 95-trial suite. $N$ counts eligible assignments, $A$ attempts, $P$ attempted passes, $S$ effective saved assignments, and $J$ duplicate assignments. Non-certifying methods have no attempts and hence undefined $P/A$, not zero acceptance.}
\label{tab:p3-suite-counts}
\begin{tabular}{lrrrrr}
\toprule
Method & $N$ & $A$ & $P$ & $S$ & $J$ \\
\midrule
Fresh & 110,998,475 & 0 & 0 & 0 & 0 \\
Direct & 110,998,475 & 0 & 0 & 0 & 0 \\
Basic & 110,998,475 & 17,905,183 & 1,499,968 & 1,499,968 & 16,405,198 \\
Runner-up & 110,998,475 & 17,905,183 & 1,499,968 & 1,499,968 & 16,405,198 \\
\bottomrule
\end{tabular}
\end{table}

Pooled useful coverage is $1.351\%$. Each certificate variant repeats 16,405,198 assignments in fallback, for $N-S+J=125,903,705$ assignment point-units, versus 110,998,475 for fresh training and Direct. These work counters do not isolate elapsed-time effects.

\FloatBarrier
\subsubsection{Seven prescribed mechanism diagnostics}
\label{app:p3-mechanisms}
The subset is designs 15/seeds 0--4 ($r=1$), 16/seed 0 ($r=10$), and 0/seed 0 ($r=100$), all with $n=100000,C=50,k=4,T=5,b=12,L=6,\gamma=T_a=0$. It retains prescribed successes and failures. These preserved FTF-1 original-checkpoint versus retained-training comparisons measure stability, separately from replay-versus-fresh equality. Their arithmetic/trace analysis was not rerun; exact old/new initializer, trajectory and accounting agreement supports retaining their numerical conclusions.

\begin{table}[htbp]
\centering
\begin{tabular}{lrrrrrr}
\toprule
Design/seed & $r$ & Changed labels/$N_0$ & $\Delta_{\rm idx}$ & $\Delta_{\rm match}$ & $F_0$ & Server draw\\
\midrule
15/0 & 1 & 0/99999 & 0.000 & 0.000 & 0 & none \\
15/1 & 1 & 0/99999 & 0.000 & 0.000 & 0 & none \\
15/2 & 1 & 43870/99999 & 4.302 & 3.867 & 99999 & 0 \\
15/3 & 1 & 31280/99999 & 6.471 & 6.264 & 99999 & 2 \\
15/4 & 1 & 0/99999 & 0.000 & 0.000 & 0 & none \\
16/0 & 10 & 60120/99990 & 8.880 & 7.292 & 99990 & 0 \\
0/0 & 100 & 67071/99900 & 9.502 & 7.032 & 99900 & 0 \\
\bottomrule
\end{tabular}

\medskip
\begin{tabular}{lrrrrrr}
\toprule
Design/seed & $N$ & $A$ & $P$ & $S/N$ & $J$ & Fallback\\
\midrule
15/0 & 499995 & 499995 & 499992 & 499992/499995 & 0 & none \\
15/1 & 499995 & 499995 & 499989 & 499989/499995 & 0 & none \\
15/2 & 499995 & 99999 & 0 & 0/499995 & 99999 & 0 \\
15/3 & 499995 & 99999 & 0 & 0/499995 & 99999 & 0 \\
15/4 & 499995 & 499995 & 499987 & 499987/499995 & 0 & none \\
16/0 & 499950 & 99990 & 0 & 0/499950 & 99990 & 0 \\
0/0 & 499500 & 99900 & 0 & 0/499500 & 99900 & 0 \\
\bottomrule
\end{tabular}
\caption{Seven prescribed ACS diagnostic cases: why did certification fail or succeed? The top panel is round zero:
$N_0=100000-r$, indexed label changes, maximum indexed movement and diagnostic
bottleneck movement (rounded to three decimals), certificate failures $F_0$,
and first server draw with changed selected coordinates. Both implemented
and independently resolved ideal Basic/Runner-up failure counts equal the
displayed $F_0$; every numerical-only and unresolved count is zero.
Matching never defines exactness. The lower panel reuses full five-round
accounting, $N=5N_0$: attempts $A$, passes $P$, effective coverage $S/N$,
duplicated assignments $J$, and first fallback round. Counts coincide for
the two certificate methods in these seven cases. Draw/round indices start
at zero; ``none'' means no such event. The three successful trials leave
$3+6+8=17$ point-rounds unsaved despite percentages that round to $100\%$.}
\label{tab:s2-p3-diagnostics}
\end{table}

\FloatBarrier

\paragraph{Geometry and numerical decisions.}
In four abandoned cases, positive maximum matched-center movement rules out a mere index permutation. Between 31,280 and 67,071 retained indexed labels change, while both ideal certificates fail on every survivor at round zero: unchanged labels need not be certifiable. Two separately written arithmetic implementations resolve all 699,885 retained-point tests; implemented and ideal decisions coincide with no numerical-only or unresolved failures. This covers these seven round-zero comparisons, not all trials, later rounds, or arbitrary inputs.

\paragraph{Distinct sampling mechanisms.}
The first server proposal width changes from 33 to 30 bits in design 15/seed 2, and 33 to 29 in design 16/seed 0. The same word is accepted without rejection at the same offset; changed selection precedes consumption divergence. In design 15/seed 3, draws 0/1 agree; draw 2 uses the same 61-bit width, accepted proposal, and offsets, but changed cumulative intervals select different coordinates. Consumption never diverges across its four server draws. Design 0/seed 0 has first-call geometric TV approximately $0.641473$, not a tiny geometric-law change. These conditional laws use observed prefixes and unions of persistent identities or coordinates, not slot numbers; they are not TVs of the full adaptive initialization or refined trajectory.

\paragraph{Successful reuse and diagnostic scope.}
The three successful $r=1$ seeds have changed local summaries but identical round-zero server initializers. Trajectories diverge after the first update, leaving 3, 6, and 8 point-rounds unsaved over five rounds. Coverage rounds to $100\%$, unlike the exact counts in \cref{tab:s2-p3-diagnostics}. The four abandoned cases have no round-zero passes to discard, unlike the group-B passes in \cref{prop:s2-deletion-work}. Full reconstruction is the policy, not a necessity of exact replay.

Preserved trace checks cover 600 calls in 75 paired streams (68 local, seven server): frozen prefixes, unchanged-kernel endpoints, rational decisions and ideal inequalities agree. These are same-agent checks, not human review. Matching diagnoses geometry, not exactness; no intervention isolates causal effects. Diagnostic times are excluded from performance tables, and nested timers cannot establish an unmeasured alternative's speed.

\subsection{P4: quality, residuals, collisions, and serialized bytes}
\label{app:p4-evidence}
The same unique-source cohorts use three federated initializers at $\gamma\in\{0,.01,.05,.1,.5,1\}$ and one centralized reference per dataset: 57 settings, not replications. All use $k=8$, seed 42, $T=15,L=6,b=12,T_a=0$. Quality uses the same represented $X$ throughout; the centralized reference is not a federated runtime baseline.

For $m$ slots, unweighted residual is $m^{-1}\sum_i\|q_i-z_i\|^2$; the proof instead uses $Q=\sum_i(h_i/n_{g_i})\|q_i-z_i\|^2$. The theorem gives an envelope, not exact quadratic scaling with $\gamma$ (see represented rounding in \cref{eq:represented-quantization}). Duplicate IDs are slots minus unique represented locations; baseline duplicates and new snapping collisions are separate, and colliding-weight and duplicate-slot fractions can differ.
\Cref{fig:p4-current} includes all settings; \cref{tab:corrected-p4-diagnostics} gives weighted distortion, duplicates, and bytes. Exact endpoint values remain in the indexed artifact table.

\begin{table}[!htbp]
\centering\footnotesize
\caption{P4 at $\gamma=1$. $Q$ uses actual represented anchors with global group-normalized weights. Base IDs are duplicates before snapping; extra IDs show additional collisions over all anchor slots. Packet bytes compare $\gamma=0/1$ and include the implemented metadata; they are not network traffic.}
\label{tab:corrected-p4-diagnostics}
\begin{tabular}{llrrrr}
\toprule
Dataset & Method & $Q$ & Base IDs & Extra IDs / slots & Packet bytes $0/1$ \\
\midrule
Adult & Merged-1k & 8.81718 & 5 & 26/800 & 697000/697000 \\
Adult & Merged-2k & 8.79206 & 14 & 66/1600 & 1369000/1369000 \\
Adult & SplitGroup & 8.77216 & 8 & 70/1600 & 1377200/1377200 \\
Bank & Merged-1k & 6.95398 & 0 & 0/160 & 61320/61320 \\
Bank & Merged-2k & 6.98036 & 0 & 2/320 & 117640/117640 \\
Bank & SplitGroup & 7.00838 & 0 & 0/320 & 119280/119280 \\
Credit & Merged-1k & 3.27047 & 0 & 38/800 & 185000/185000 \\
Credit & Merged-2k & 3.26066 & 1 & 121/1600 & 345000/345000 \\
Credit & SplitGroup & 3.29509 & 0 & 113/1600 & 353200/353200 \\
\bottomrule
\end{tabular}
\end{table}

\begin{figure}[!htbp]
\centering\includegraphics[width=\linewidth]{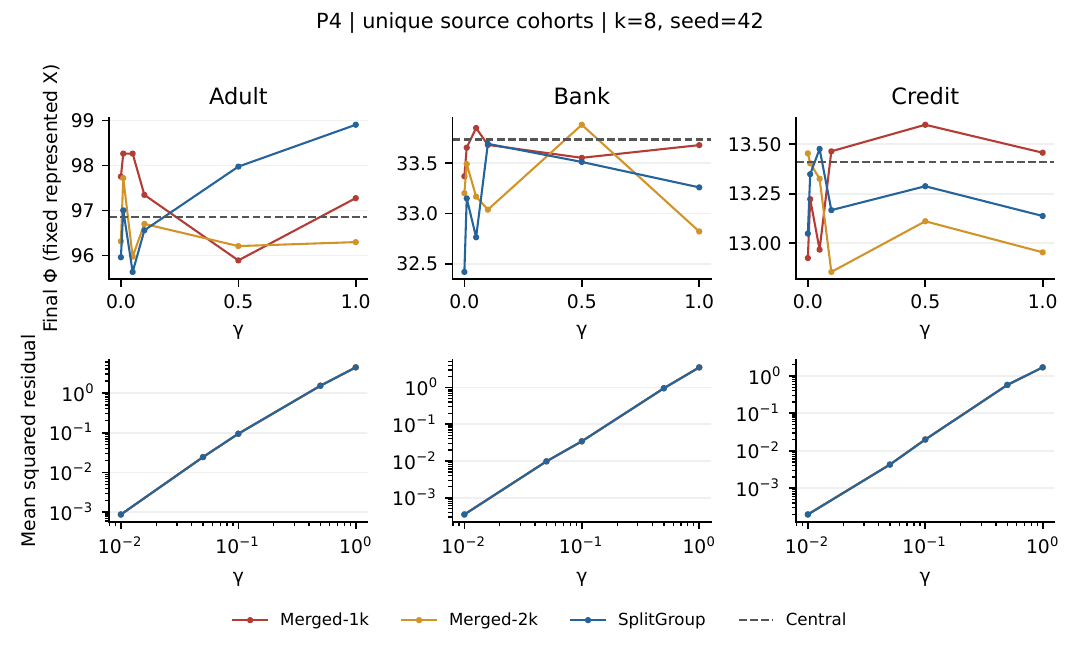}
\caption{P4: how does the anchor grid width $\gamma$ affect quality and anchor residuals? All 57 settings on the unique-source cohorts, seed 42. Top: final represented-data group-maximum cost. Bottom: unweighted mean squared residual over anchor slots, not weighted $Q$. Dashed lines are centralized quality references. No variability across training seeds is estimated.}
\label{fig:p4-current}
\end{figure}

Across the federated methods, final $\Phi$ spread does not exceed warm spread in 17/18 dataset/$\gamma$ groups; final $G$ spread grows in 1/18. These distinct summaries do not show universal erasure of initialization effects. Adult final federated $\Phi$ ranges from 95.6339 to 98.9039. Maximum duplicate-ID fraction is $7.625\%$, including baseline duplicates. Binary64 shape/dtype remain unchanged, so larger $\gamma$ alone does not reduce coordinate or packet bytes. Packets include metadata, not cryptographic or network transport. The artifact retains the separate serialization benchmark and full-run resource records. P1/P4 have no measured process RSS. P4 unit times include an unisolated output write; the later three-repeat serialization benchmark cannot recover that earlier duration and is not a rerun of clustering.

\FloatBarrier

\section{Additional Refinement-Budget and Whole-Client Studies}
\label{app:september26}
These bounded studies use the fixed FTF-1D-descriptor-v1 map, $k=10$, $b=12$, $L=6$, $\gamma=T_a=0$, and seeds 10000--10004. Preprocessing, persistent identities, row order and the inherited Adult/Credit provenance limitations remain fixed. Neither study replaces P2/P3 or changes the 810-comparison claim. Timing repetitions are not independent seeds; record and whole-client requests are not pooled. Both studies use local serial execution on an Apple M3 Max (48 GiB), macOS 27.0, Python 3.12.14 and NumPy 2.3.5, with five numerical-library thread limits set to one. An exclusive advisory lock coordinates the experiment tasks, but desktop background work remains uncontrolled; native pool sizes were not independently enumerated for the whole-client study.

\subsection{Record deletion across refinement budgets}
\label{app:round-budgets}
Adult/Bank/Credit retain their original partitions and independent record batches of 30/45/30 deletions. Each of 45 dataset/seed/budget settings has a separately constructed original-population checkpoint at $T=0,1,2$. Fresh, direct, basic and runner-up replay give 180 method keys and 135 distinct replay-versus-fresh correctness comparisons. All complete indexed trajectories and available exact integer $N,S,SS$ agree; all gate outputs also match the corresponding accepted $T=15$ prefixes. At $T=0$, only the initialized center tuple exists and refinement statistics are empty. Four separate-process timing repetitions per key give 720 observations, each matching its gate witness; cyclic order places every method in each position once. These are full measured calls at each budget, not prorated $T=15$ times or sequential deletion requests.

\begin{table}[htbp]
\centering
\caption{Record-deletion results, five seeds and four timing repetitions. Each ratio cell is algorithm / file-I/O-inclusive E2E fresh-to-replay total time. $T$ counts refinement rounds. Quality is identical across methods at a matched seed and budget.}
\label{tab:record-budgets}
\begin{tabular}{llrrrr}
\toprule
Dataset & $T$ & $\Phi$: mean $\pm$ SD & Direct & Basic & Runner-up \\
\midrule
Adult & 0 & 119.594 $\pm$ 3.048 & 4.119/3.304 & 4.101/3.284 & 4.110/3.296 \\
Adult & 1 & 97.941 $\pm$ 1.184 & 1.741/1.647 & 1.050/1.024 & 1.044/1.019 \\
Adult & 2 & 95.931 $\pm$ 1.391 & 1.421/1.374 & 1.033/1.014 & 1.031/1.012 \\
Bank & 0 & 47.959 $\pm$ 0.757 & 1.390/1.345 & 1.394/1.349 & 1.394/1.348 \\
Bank & 1 & 34.016 $\pm$ 1.106 & 1.228/1.203 & 0.889/0.881 & 0.890/0.882 \\
Bank & 2 & 33.010 $\pm$ 0.929 & 1.161/1.142 & 0.917/0.908 & 0.914/0.905 \\
Credit & 0 & 18.303 $\pm$ 0.876 & 3.696/3.200 & 3.709/3.214 & 3.687/3.187 \\
Credit & 1 & 13.461 $\pm$ 0.219 & 1.967/1.853 & 1.195/1.161 & 1.189/1.155 \\
Credit & 2 & 12.973 $\pm$ 0.146 & 1.592/1.527 & 1.144/1.115 & 1.137/1.108 \\
\bottomrule
\end{tabular}
\end{table}

\begin{figure}[htbp]
\centering
\includegraphics[width=\linewidth]{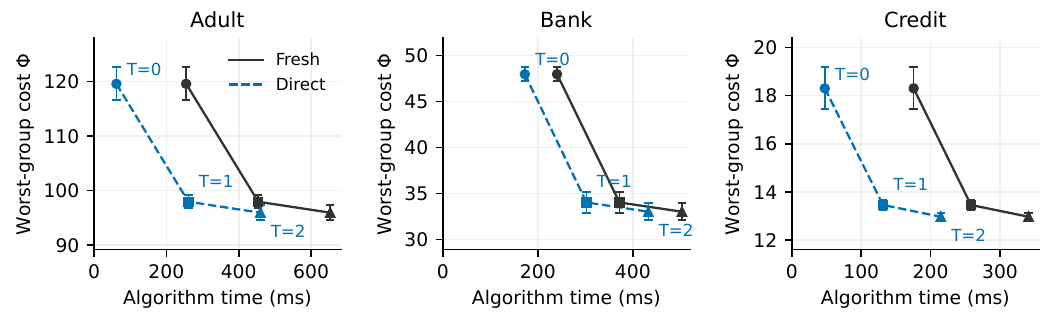}
\caption{Record deletion across $T=0,1,2$: five-seed means of four-repeat median algorithm times, with one seed SD of $\Phi$. Fresh and Direct have identical matched-budget quality. Connecting measured budgets does not establish a Pareto frontier or a cross-budget unlearning ratio.}
\label{fig:refinement-tradeoff}
\end{figure}

\paragraph{Corrected quality.}
\Cref{tab:record-budgets} and \cref{fig:refinement-tradeoff} use $\Phi=\max(\Phi_A,\Phi_B)$, not the auxiliary sum $G=\Phi_A+\Phi_B$. The initial downstream reporting interchanged these quantities; the current correction derives quality from saved exact fractions and verifies both identities on all 180 gate outputs. The correction leaves raw results and timings unchanged; no experiment was rerun. Across five seeds, the mean paired $T=0$-to-$T=2$ decreases in $\Phi$ are 19.8\% (Adult), 31.2\% (Bank), and 29.0\% (Credit). These decreases do not specify an acceptable application-level fairness loss. Error bars are descriptive seed SDs, not confidence intervals; the plotted time coordinate is the mean of seedwise four-repeat medians, whereas table ratios divide all observed total times.

\paragraph{Certificates and request boundary.}
At each positive budget, both variants reject every attempted certificate and fall back in round zero: 30 dataset/seed/budget cases per variant. Useful coverage and attempted acceptance are zero. All $T=0$ counters are zero, so these rates are undefined and timing differences do not establish a certificate advantage. For example, at $T=2$ each dataset attempts certificates only on its first round; failed assignments are repeated by the full rebuild. Basic and runner-up are slower than direct throughout, and slower than fresh on Bank (\cref{tab:record-budgets}).

Algorithm time includes the full public call, validation, encoding/materialization, allocations and instrumentation; fresh uses no replay-cache construction. The file-I/O-inclusive E2E clock runs from input-bundle opening through deletion materialization, replay checkpoint loading, pre-call garbage collection, the call, exact witness construction and equal-schema JSON output. Receipt metadata, quality scoring and verification are outside it. Process startup is separately reported, and no disk cache is flushed. Across 45 checkpoints, offline setup totals 17.08 seconds (16.26 seconds in cache-enabled calls); serialized checkpoint sizes are 12.51--58.01 MB and include original data. This boundary differs from the resident requests below and from the earlier P2/P3 E2E boundary, which does not identically enclose witness construction and intervening work. $T=0$ still communicates summaries/counts and performs server initialization. Network time is unmeasured; no deployed-speedup or communication guarantee follows.

Full-grid fresh/replay comparisons share production numerical kernels. The separately implemented reporting verifier checks exact bytes, integers, identities, counts and quality fractions; it is not an independent reimplementation of training arithmetic. The earlier independent small-case arithmetic gate is historical source-bound evidence, not a rerun. Corrected compact tables and presentation checks are included under \texttt{generated/september26}; the preserved run is \texttt{round-budget-20260926T064431Z}.

\subsection{Whole-client summary replay at initialization}
\label{app:client-summary}
Whole-client removal leaves every surviving local slice unchanged. Dropping departing summaries and retraining only the server follows the architecture of \citet[Section~5.3, Algorithm~4]{pan2023mufc}. The statement below specializes the initializer part of \cref{thm:exactness} to current demographic normalization and the declared indexed map; it is not a new general unlearning theorem. Pan's ordinary local sampler is not our locally demographic-balanced \Merged{} adaptation, and its reported $84\times$ is not directly numerically comparable to these different learners, baselines and timing boundaries.

\Needspace{12\baselineskip}
\begin{proposition}[Whole-client summary replay at initialization]
\label{prop:client-summary}
Fix a valid represented dataset $X$, persistent client and record ordering,
a supported configuration with $T=0$, and seed $s$. Let $D$ be a set of
whole clients such that both protected groups remain nonempty. Assume
local summaries depend only on their ordered represented slice, fixed
parameters, and a reproducible stream keyed by $(s,c,g)$. Remove the
summaries of clients in $D$, compute the retained group counts $n'_g$
exactly, rebuild the canonical $(c,g,j)$ anchor list with weights
$h_{c,g,j}/n'_g$, and rerun the declared server procedure from its original
seeded stream, including any fixed optional anchor-Lloyd iterations.
With the same representation, sampling, quantization, tie and completion
rules as training, this procedure returns exactly the indexed tuple
$\mathbf C^{(0)}$ of fresh training on $X\setminus\bigcup_{c\in D}X_c$.
\end{proposition}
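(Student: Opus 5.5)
We will argue exactly as in the initialization step of the proof of \cref{thm:exactness}, treating each random entity as a pure function of its ordered input, fixed parameters, and keyed stream. Write $X'=X\setminus\bigcup_{c\in D}X_c$. The plan is to show that the server receives an identical ordered input in the two executions, and then that the server's output is a deterministic function of that input together with its stream.

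\emph{Step 1 (surviving summaries).} For $c\notin D$ and each group $g$, the ordered represented slice $X'_{c,g}$ equals $X_{c,g}$. Persistent record ordering is inherited, and the client partition is unchanged. By assumption the local summary depends only on this ordered slice, the fixed parameters ($k$, $\gamma$, the tie and completion rules), and the stream keyed by $(s,c,g)$. Each surviving slice therefore produces the same sequence of draws. It yields identical representatives $z_{c,g,j}$, multiplicities $h_{c,g,j}$ and quantized anchors $q_{c,g,j}$ in fresh training on $X'$ and in the original training on $X$. Empty slices contribute nothing in both executions. Departing clients have no records in $X'$, so fresh training produces no slices for them, and dropping their cached summaries matches this.

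\emph{Step 2 (server input).} The retained counts satisfy $n'_g=n_g-\sum_{c\in D}n_{c,g}$, which equals $|X'_g|$ exactly in integer arithmetic. Validity gives $n'_g>0$, so the weights are defined. The canonical $(c,g,j)$ ordering restricted to surviving clients coincides with the ordering fresh training builds on $X'$, since client indices are persistent. Hence the ordered list of $(q_{c,g,j},h_{c,g,j}/n'_g)$, weights included, is identical in both executions. Each weight is an exact rational.

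\emph{Step 3 (server output).} The server's weighted $D^2$-sampling depends only on this ordered weighted list and its seeded stream. That stream is keyed by $s$ and the server identity, not by the data, so the two executions read the same stream. Three further ingredients are fixed functions of the list and the current selections: exact rational categorical sampling (the FTF-1 gcd-reduced sampler), the deterministic zero-potential completion, and the fixed number $T_a$ of deterministic anchor-Lloyd steps. By induction over the $k$ draws, every cumulative mass vector, every proposal word consumed and every selected index agree, and then each Lloyd iterate agrees. The same representation and rounding conventions then give the identical indexed tuple $\mathbf C^{(0)}$. Since $T=0$, this tuple is the entire model.

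The main obstacle is Step 2's claim that the \emph{canonical ordered} anchor list matches exactly. The gcd-reduced sampler is sensitive to slot order and to zero-weight slots, as \cref{lem:s2-word-coupling} shows, so we must check carefully on two points. First, fresh training retains zero-multiplicity slots in the same positions, rather than collapsing them. Second, dense client reindexing, if any, does not permute surviving slots. We would discharge this by appealing to the declared conventions that slot order is client-then-selection index under persistent client identities, and that zero-weight slots are retained and skipped by the strict cumulative comparison.
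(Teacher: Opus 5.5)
Your proposal is correct and follows essentially the same route as the paper's proof: unchanged ordered surviving slices and keyed streams reproduce the local summaries, the exact retained counts give the same canonical rational-weighted anchor list, the same server stream then gives identical draws, completion and anchor-Lloyd iterates, and $T=0$ makes $\mathbf C^{(0)}$ the whole model. The ordering concern you flag is already covered by Step~1 together with the persistent-ID hypothesis, since each surviving slice's full slot list, zero-multiplicity slots included, is reproduced verbatim.
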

\begin{proof}
Every surviving slice has the same ordered represented records, parameters
and keyed stream as before deletion. Hence its selected indices,
representatives, multiplicities and quantized anchors equal fresh
training's local summary. Deleted slices are omitted in both procedures.
Summing the surviving exact multiplicities gives the fresh group counts,
so the canonical server list and every rational weight agree. Starting
the same server stream therefore gives the same categorical decisions
and indexed centers, including deterministic saturation and zero-potential
completion. Any fixed deterministic guarded anchor-Lloyd iterations also
agree. Since $T=0$, this is the entire released trajectory and final model.
\end{proof}
For fixed or seed-independent $D$, pointwise equality in $s$ implies
equality in distribution under any seed law. This does not assert equality
of complete stored states, erasure of the original checkpoint, or a
fresh-independent guarantee for a request selected adaptively from that
same seed-dependent checkpoint. No runtime factor follows from the
proposition.

\paragraph{Compact-state assumptions.}
State is produced by the trusted canonical $T=0$ learner and must remain unmodified. It retains active persistent client IDs (including empty clients), per-client exact group counts, local summaries and fixed server settings. Surviving slice order, local seed keys and canonical $(c,g,j)$ anchor order are preserved; clients are never renumbered. Old weights and server centers cannot simply be kept. Requests reject unknown, duplicate, malformed or already-deleted IDs and any removal emptying a protected group. Under these assumptions, the returned active summaries/counts equal fresh compact state after each valid whole-client step by induction. This invariant does not make the original record-replay cache reusable or establish fresh-independent distributional equality for adaptively selected requests. The separate three-step timing campaign in \cref{app:client-sequences} tests this invariant with buffered state round trips.

The candidate retains $O(Ckd)$ summary coordinates and $O(C)$ count metadata, without raw arrays or point caches. Per request it filters metadata, updates exact counts, rebuilds compact anchors/weights and runs the existing server sampler. Integer/rational bit costs remain; the request is not constant time in clients or anchors. Nested dictionaries and shared summary objects are trusted, not authenticated immutable storage. Prior states, original checkpoints, raw inputs and witnesses can retain deleted information; model equality and smaller serialized state do not establish privacy or physical erasure.

\paragraph{Cases and exactness evidence.}
Before timing, the valid client closest to median size was selected with persistent ID as tie-breaker, choosing client 0 in each dataset. Adult deletes 300/30,162 records from 100 clients ($d=103$); Bank 2,259/45,211 from 20 ($d=42$); Credit 299/30,000 from 100 ($d=23$). Retained group counts are respectively (9,685,20,177), (17,098,25,854) and (5,332,24,369). Five seeds repeat this same selected request per dataset, giving 15 cases, not diverse client-deletion coverage. Each has three balanced-order observations per method: 135 observations and 90 repeated fresh-versus-replay model comparisons (30 distinct case/method pairs). All indexed $\mathbf C^{(0)}$ outputs, group counts, surviving summary fields and rational-weight anchor tables agree; the $T=0$ statistics list is empty. Full fresh caches are not claimed equal to compact state. This real-data check shares canonical arithmetic.

A separately authored AI correctness review checked the exact candidate against a scalar oracle on 196 valid subsets across 14 tiny configurations and 84 sequential steps, plus invalid requests, raw-access traps and an equal 510-byte model-output wrapper. The oracle independently implements geometry, rational arithmetic, ties, quantization and counts but shares NumPy PCG64/SeedSequence and binary64 runtime conversion. This is separate finite small-case evidence, not human validation or independent certification of real-data timing/quality aggregates. The reviewed candidate SHA-256 is \texttt{6d7b8d993ed9e9fc}\allowbreak\texttt{da896192ee6e6299}\allowbreak\texttt{a434ab7cdc25ab99}\allowbreak\texttt{07fc6b4bc1cf3dde}; it matches the experiment source and immutable v1 snapshot. The review found no blocking issue within its trusted-state, fixed-preprocessing, persistent-ID, valid whole-client, $T=0$ scope.

\begin{table}[htbp]
\centering
\caption{Whole-client case means: three timing repetitions per method, with a single preselected client per dataset. All speed ratios use fresh $T=0$; $\Phi_2$ is solely a quality reference.}
\label{tab:client-cases}
\begin{tabular}{lrrrrrrr}
\toprule
Dataset & Seed & Fresh ms & Direct ms & Fast ms & F/fast & $\Phi_0$ & $\Phi_2$ \\
\midrule
Adult & 10000 & 244.925 & 17.138 & 9.019 & 27.16 & 112.512 & 95.182 \\
Adult & 10001 & 243.347 & 17.308 & 9.070 & 26.83 & 117.132 & 95.807 \\
Adult & 10002 & 242.373 & 17.141 & 9.162 & 26.46 & 117.421 & 97.242 \\
Adult & 10003 & 250.312 & 17.690 & 9.403 & 26.62 & 119.269 & 98.305 \\
Adult & 10004 & 248.264 & 17.986 & 8.950 & 27.74 & 123.448 & 93.452 \\
Bank & 10000 & 221.181 & 6.316 & 1.973 & 112.12 & 44.867 & 32.578 \\
Bank & 10001 & 220.752 & 5.801 & 1.893 & 116.61 & 47.217 & 32.397 \\
Bank & 10002 & 219.056 & 6.295 & 1.898 & 115.39 & 44.861 & 32.770 \\
Bank & 10003 & 219.423 & 6.224 & 1.876 & 116.94 & 49.188 & 32.814 \\
Bank & 10004 & 220.003 & 6.288 & 1.884 & 116.77 & 43.358 & 32.613 \\
Credit & 10000 & 166.166 & 11.917 & 6.674 & 24.90 & 19.775 & 13.039 \\
Credit & 10001 & 165.317 & 11.772 & 6.599 & 25.05 & 20.554 & 13.106 \\
Credit & 10002 & 166.679 & 11.925 & 6.675 & 24.97 & 21.336 & 13.160 \\
Credit & 10003 & 166.718 & 12.082 & 6.602 & 25.25 & 20.177 & 12.883 \\
Credit & 10004 & 165.725 & 11.798 & 6.628 & 25.00 & 18.776 & 12.859 \\
\bottomrule
\end{tabular}
\end{table}

\paragraph{Resident-request timing and quality.}
All three methods have their required inputs/state resident. Fresh reconstructs local summaries with cache construction disabled; direct reuses them but retains raw-data bookkeeping/copies; the compact path needs no surviving raw records for the request. All rebuild and sample the compact server. Request time encloses preparation, the full public call, exact model-witness serialization and local file write, without \texttt{fsync}. It excludes cold loading, process startup, pre-request garbage collection, initial training/compaction, quality scoring, verification, network, physical purge and durable next-state persistence. The overall comparison harness retains raw data for fresh references and quality scoring, despite the compact request's narrower access needs.

Fresh/direct/fast totals over 45 observations per method are 9.480725/0.533040/0.264916 seconds for resident requests and 9.451897/0.506083/0.240401 seconds for algorithm calls. Thus pooled request ratios are 17.79 fresh/direct, 35.79 fresh/fast and 2.01 direct/fast; algorithm ratios are 18.68 and 39.32 fresh/direct and fresh/fast. Bank's 380 surviving anchor slots versus 1,980 for Adult/Credit help explain its smaller server cost. All 15 fresh/fast case-mean request ratios exceed ten (range 24.90--116.94), a property of this grid and implementation, not a universal speedup.

Matched retained-population mean $\Phi$ at $T=0,1,2$ is 117.957/97.816/95.998 (Adult), 45.898/33.544/32.634 (Bank), and 20.124/13.798/13.009 (Credit). The paired $T=0$ excess over $T=2$ ranges from 18.2--32.1\%, 32.9--49.9\%, and 46.0--62.1\%, respectively. No application-level quality threshold was established. The $T=1,2$ runs are quality references on these whole-client-deleted populations, never denominators for $T=0$ speedup and never substituted with record-deletion quality values.

\begin{table}[htbp]
\centering
\caption{Whole-client setup means across five seeds. Times are milliseconds; protocol-5 serialized sizes are measured in memory. Serialization is accounted separately, not as durable per-request state persistence.}
\label{tab:client-setup}
\begin{tabular}{lrrrrrrr}
\toprule
Dataset & Train & Compact & Load & Full ser. & Small ser. & Full MiB & Small MiB \\
\midrule
Adult & 251.805 & 1.178 & 44.902 & 5.627 & 1.202 & 52.481 & 3.196 \\
Bank & 231.292 & 0.173 & 30.118 & 2.455 & 0.184 & 29.734 & 0.267 \\
Credit & 167.754 & 0.669 & 17.164 & 2.887 & 0.869 & 11.929 & 0.754 \\
\bottomrule
\end{tabular}
\end{table}

\paragraph{Setup and storage accounting.}
\Cref{tab:client-setup} separates canonical original training, incremental summary compaction, input load/identity verification, and serialization. Compaction copies summaries and uses their existing exact counts without scanning raw records; this prototype still constructs the full canonical checkpoint first and does not eliminate that initial memory peak. Full checkpoint/compact serialized means are 55,030,623/3,351,441 bytes (Adult), 31,178,477/280,225 (Bank), and 12,508,311/791,090 (Credit). These are object sizes measured in memory, not observed disk or RAM savings. No new full checkpoints were persisted. Returned next-state dictionaries share unchanged summary arrays; prior state references are not purged.

Using observed case means, incremental compaction amortizes in one independent request against either fresh or direct; charging the full initial training plus compaction gives two against fresh. This arithmetic does not measure a durable sequential service. The full compact setup and quality tables accompany the source; the preserved run is \texttt{client-c0-20260926-v1}.

\section{Refinement, Client Coverage, Sequences, and Group Quality}
\label{app:consolidated-studies}
These studies extend the September 26 evidence without replacing the original P2/P3 panels. The binary studies retain the fixed preprocessing, original partitions, persistent identities, $k=10$, $b=12$, $L=6$, $\gamma=T_a=0$ and seeds 10000--10004. Adult/Credit's inherited raw-provenance limitations remain. The reporting inputs, protocols, source identities and manifests are included under \texttt{generated/consolidation20260926}. All results below were executed before manuscript integration; integration runs no training campaign. Five seeds, repeated timings, overlapping requests and correlated budgets or sequence steps are distinct units, never pooled as independent populations. Local serial measurements use the same Apple M3 Max/48 GiB, Python 3.12.14 and NumPy 2.3.5 environment; the shared advisory lock coordinates research workers but does not isolate background desktop load.

\subsection{Canonical whole-client replay with positive refinement}
\label{app:client-refinement}
Run \texttt{client-refinement-20260926-v1} extends exactly the 15 client-0 cases of \cref{app:client-summary} to $T=1,2$. Each of 30 settings builds its own original-population checkpoint at the matching budget. Fresh, Direct, Basic and Runner-up produce 120 distinct outputs and 90 distinct replay-versus-fresh comparisons; all indexed $C^{(0)},\ldots,C^{(T)}$, final centers and exact $N,S,SS$ match. Every new fresh witness also equals the previously completed same-population quality reference. Four method-order rotations give 480 timing observations; these are technical repetitions, not new scientific trials. The compact $T=0$ implementation is neither imported nor called at positive budgets.

\begin{table}[htbp]
\centering
\caption{Positive-budget whole-client replay. Ratios divide total resident-request times over five seeds and four repeats per dataset/budget; ``All'' divides pooled seconds. Each replay method wins all 15 seed/dataset cases per budget against fresh.}
\label{tab:client-positive}
\begin{tabular}{llrrr}
\toprule
Dataset & $T$ & Fresh/Direct & Fresh/Basic & Fresh/Runner-up \\
\midrule
Adult & 1 & 2.053 & 1.164 & 1.159 \\
Bank & 1 & 2.646 & 1.455 & 1.464 \\
Credit & 1 & 2.647 & 1.425 & 1.407 \\
All & 1 & 2.357 & 1.309 & 1.305 \\
Adult & 2 & 1.544 & 1.107 & 1.103 \\
Bank & 2 & 1.828 & 1.287 & 1.286 \\
Credit & 2 & 1.856 & 1.284 & 1.274 \\
All & 2 & 1.696 & 1.200 & 1.196 \\
\bottomrule
\end{tabular}
\end{table}

Inputs and required checkpoints are resident for every method. The algorithm timer includes the entire public call, validation, encoding/materialization, initialization and refinement. The request timer adds actual survivor/request preparation and identical full-witness projection, serialization and buffered local output write, without \texttt{fsync}. Fresh omits replay-cache construction. Initial data loading, original checkpoint training, setup/size measurement, pre-call garbage collection, reference checks and quality scoring are separate. This is not a cold-start, network, durable-state or physical-erasure measurement. Fresh/Direct pooled request totals are 20.550742/8.718981 seconds at $T=1$ and 28.607252/16.870120 seconds at $T=2$, each over 60 observations per method. Paired case ratios range 2.041--2.681 and 1.538--1.868, respectively. The earlier $T=0$ totals used three repeats and a separate run; no cross-budget ratio of these speed ratios is a latency measurement.

Direct retains Phase-I summaries but repeats all raw-survivor refinement. For Adult, mean fresh/Direct Phase-I times are 230.60/9.96 ms at $T=1$, whereas their refinement times are 195.00/195.77 ms. At $T=2$, the corresponding refinement times are 391.47/391.53 ms. This explains the falling relative saving on this grid without asserting a law for arbitrary $T$. Additive components sum every client and round once; certificate-scan, failed-assignment and fallback diagnostics overlap those components and are not added again. Residual time is not silently attributed to refinement.

Every Basic and Runner-up case rejects every attempted certificate in round zero, triggers strict-majority fallback and repeats a full survivor assignment round; the second round, when present, starts direct. Across the 15 cases per budget, each certificate method has $A=J=512{,}575$ and $P=S=0$. With $N=512{,}575$ at $T=1$ and $N=1{,}025{,}150$ at $T=2$, its assignment count $N-S+J$ is respectively $2N$ and $1.5N$. Thus beating fresh reflects initialization reuse, not successful certification. All 30 positive-budget settings are retained; no timing-selected exclusions or retries occurred. These tests share canonical numerical kernels; the earlier compact-only review is not a positive-budget harness endorsement.

\subsection{Limited client-identity coverage at zero refinement}
\label{app:client-diversity}
Six metadata-only roles were frozen before outcomes: minimum, nearest-median and maximum client size; minimum and maximum group-0 fraction; and smallest surviving smaller-group count. Ties use persistent IDs, invalid group-emptying requests are excluded, and repeated roles are deduplicated. On the original partitions they select only two clients per dataset: 0/99 for Adult and Credit, 0/19 for Bank. This includes previously tested client~0 and adds only one client identity per dataset. The resulting 30 dataset/client/seed cases include 15 cases identical to the original client-0 study and to sequence step~1 below. They are not 30 new independent populations, and the six overlapping roles are not six distinct clients.

\begin{table}[htbp]
\centering
\caption{Metadata-selected client coverage. Requests independently start from the original checkpoint. Both groups survive each removal; within each dataset both requests retain the same number of anchor slots.}
\label{tab:client-coverage}
\begin{tabular}{lrrrrr}
\toprule
Dataset & Client ID & Removed & Removed group 0/1 & Retained group 0/1 & Anchors \\
\midrule
Adult & 0 & 300 & 97/203 & 9,685/20,177 & 1,980 \\
Adult & 99 & 462 & 179/283 & 9,603/20,097 & 1,980 \\
Bank & 0 & 2,259 & 899/1,360 & 17,098/25,854 & 380 \\
Bank & 19 & 2,290 & 916/1,374 & 17,081/25,840 & 380 \\
Credit & 0 & 299 & 53/246 & 5,332/24,369 & 1,980 \\
Credit & 99 & 399 & 138/261 & 5,247/24,354 & 1,980 \\
\bottomrule
\end{tabular}
\end{table}

At $T=0$, all 60 distinct gate replay comparisons and 270 timed method witnesses match the fresh indexed model and declared surviving summary/count/anchor state. Three balanced-order repetitions per case give total fresh/compact request ratios of 26.65 (Adult), 115.54 (Bank) and 24.97 (Credit), or 35.58 pooled; paired case ratios range 24.56--118.93. The pooled Direct/compact ratio is 1.76. The equal-model-output resident boundary excludes setup, cold loading and durable next-state persistence. This limited endpoint grid does not vary compact anchor count within dataset or establish robustness to arbitrary heterogeneous partitions. No $T>0$ quality reference was measured for the additional clients. All 30 case-level quality values use $\Phi=\max_g\Phi_g$; no application-level adequacy conclusion follows.

\subsection{Three sequential compact-state withdrawals}
\label{app:client-sequences}
Run \texttt{sequences-20260926-v1} tests three $T=0$ whole-client withdrawals in original persistent-ID order, skipping only group-emptying requests. Every dataset selects IDs 0, 1, 2 without skips. The same sequence at five seeds gives 15 dataset/seed sequences and 45 correlated seed/step cases. Each of three full-sequence repeats restarts from a legitimately prepared original state. Within a sequence, each compact call consumes the validated reloaded state returned by the immediately preceding step, never an original checkpoint substituted as current state. Persistent identities and row order remain unchanged. Step~1 overlaps the original client-0 population and seed cases; later steps have different survivors.

All 135 timed compact outputs equal matched fresh $T=0$ in the indexed initialized centers, active identities, surviving summary fields, exact counts and canonical rational-weight anchor table. Returned and reloaded compact states match fresh compaction. The refinement-statistic list is empty at $T=0$, so this does not test nonempty refinement statistics. A new-process check also reloads all 15 final states and reproduces their saved step-3 models. These are finite checks of the conditional compact-state invariant, not equality of full raw caches or a fresh-independent law for model-adaptive requests.

\begin{table}[htbp]
\centering
\caption{Cumulative three-withdrawal timings at matched $T=0$, averaged over five seeds and three repeats. Operational time adds each method's necessary buffered state write/reload to its matched resident-output time; it is not durable transaction latency.}
\label{tab:client-sequence}
\begin{tabular}{lrrrrrr}
\toprule
 & \multicolumn{3}{c}{Matched resident output} & \multicolumn{3}{c}{Including buffered state round trip} \\
Dataset & Fresh ms & Compact ms & Ratio & Fresh ms & Compact ms & Ratio \\
\midrule
Adult & 713.326 & 26.608 & 26.81 & 714.219 & 44.826 & 15.93 \\
Bank & 612.280 & 5.763 & 106.25 & 613.038 & 8.346 & 73.45 \\
Credit & 483.370 & 19.331 & 25.01 & 484.338 & 28.164 & 17.20 \\
\bottomrule
\end{tabular}
\end{table}

\begin{figure}[htbp]
\centering
\includegraphics[width=\linewidth]{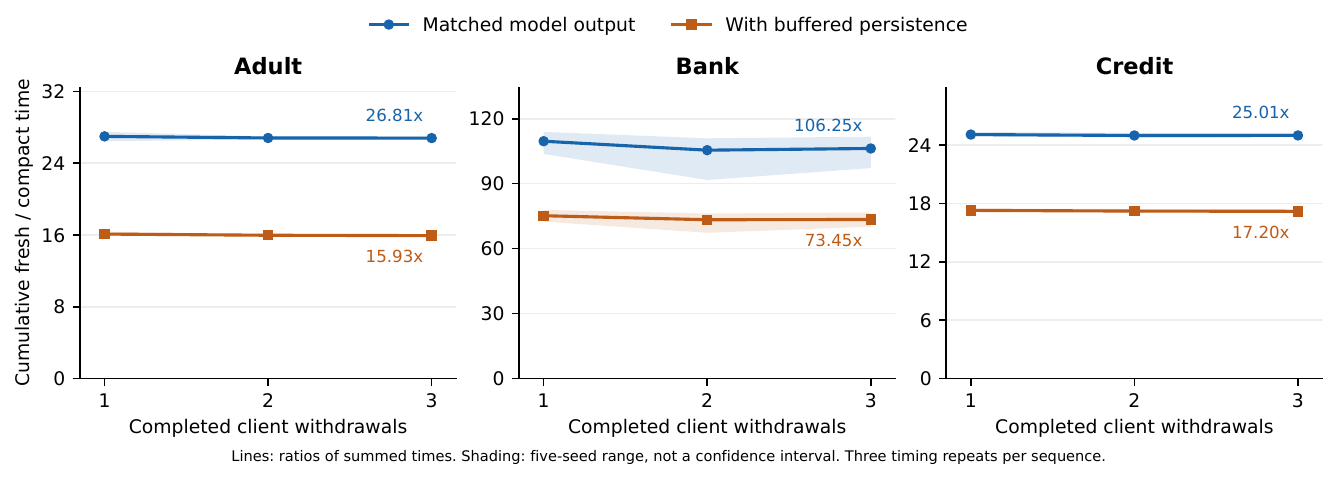}
\caption{Ratios of cumulative fresh/compact times through each withdrawal. Shading spans the five paired seed ratios, not a confidence interval. Blue uses matched resident model outputs; orange also charges method-specific buffered state persistence/reload. Steps and timing repetitions are correlated; dataset scales differ.}
\label{fig:client-sequence}
\end{figure}

The matched primary clock covers request preparation, public call, identical model-witness serialization and buffered local write. Compact next-state construction occurs within that call. The separate operational clock adds serialization, write/close, read/deserialization and validation of necessary state: compact's returned checkpoint versus fresh's active-client ledger (fixed arrays/configuration remain resident). It is the sum of charged components, not a continuous atomic transaction timer. Across all observations, fresh/compact ratios are 34.99 matched and 22.27 operational. No \texttt{fsync}, cache eviction, durable commit, network, physical purge or recovery latency is included. Initial training, incremental compaction and data loading are separate. Charging incremental compaction once to the three-request operational totals gives ratios 15.58/71.97/16.80; an explicit scenario charging common original training to both methods plus compaction to compact gives 3.31/3.56/3.35 (Adult/Bank/Credit). These setup views do not require a fresh-only service to build an original replay cache.

Initial/final compact serialized sizes are 3,351,441/3,250,905 bytes (Adult), 280,225/238,246 (Bank) and 791,090/767,372 (Credit). These are serialized state sizes, not measured disk savings or total RAM; preparation still constructs the full original checkpoint before compaction. Trusted local serialization/reload is not an adversarial-state validator, and overwritten files do not imply secure erasure. Group proportions and sizes remain similar along these early-ID sequences. Changes in $\Phi$ between steps concern different populations, so they are not fixed-population quality improvements. Longer, adaptive and positive-refinement sequences remain untested.

\subsection{Group-resolved quality and adverse transitions}
\label{app:group-quality}
For each fixed retained population and seed, report group costs $\Phi_g$, worst-group cost $\Phi=\max_g\Phi_g$, balanced sum $G=\sum_g\Phi_g$, pooled SSE $\sum_g n_g\Phi_g$, and population-average cost $\sum_g n_g\Phi_g/\sum_g n_g$. Dataset means average each case-level quantity across five seeds; in particular, the mean of maxima need not equal the maximum of group means. Sample SD uses divisor four and is descriptive. Group 0/1 means complement/male mask (Adult), not-married/married (Bank), and other stored education codes/\texttt{EDUCATION} in $\{1,2\}$ (Credit). Literal stored-mask wording preserves the Adult/Credit provenance limitation.

The analysis reuses 90 exact quality rows from the corrected record-deletion run and the original client-0 quality references, representing 30 seed/population cases at three budgets. It is saved-evidence analysis, not 90 new trials or independent raw-coordinate rescoring. The subsequent comparator in \cref{app:lloyd} supplies new scoring checks. Record requests remove 30/45/30 rows, whereas client requests remove 300/2,259/299; the two populations remain separate. All replay methods inherit matched fresh quality by exact model equality.

Mean paired $T=0$-to-$T=2$ decreases in $\Phi$ are 19.76/31.17/28.99\% for record deletion and 18.53/28.76/35.26\% for client deletion (Adult/Bank/Credit). Both groups and population-average cost decrease over that full interval in all 30 cases. Nevertheless, a fair objective decrease does not guarantee every group's or the population's improvement in every round. All five adverse fair group transitions appear in \cref{tab:adverse-quality}; two also raise population-average cost. The guard controls $\Phi$, not coordinatewise group cost or SSE.

\begin{table}[htbp]
\centering
\caption{Every adverse fair group-cost transition: Adult, $T=1$ to $T=2$. Worst-group cost decreases in every row. Population-average changes are signed absolute differences.}
\label{tab:adverse-quality}
\begin{tabular}{llrrrr}
\toprule
Population & Seed & Group 1 before & Group 1 after & Change (\%) & Population change \\
\midrule
Record & 10000 & 96.192670 & 97.326494 & +1.1787 & +0.039075 \\
Record & 10002 & 96.224892 & 96.477361 & +0.2624 & $-0.048872$ \\
Record & 10004 & 94.487752 & 96.450765 & +2.0775 & +0.593167 \\
Client & 10002 & 96.002141 & 96.957289 & +0.9949 & $-0.179302$ \\
Client & 10003 & 94.642231 & 95.233313 & +0.6245 & $-0.155996$ \\
\bottomrule
\end{tabular}
\end{table}

\begin{figure}[htbp]
\centering
\includegraphics[width=\linewidth]{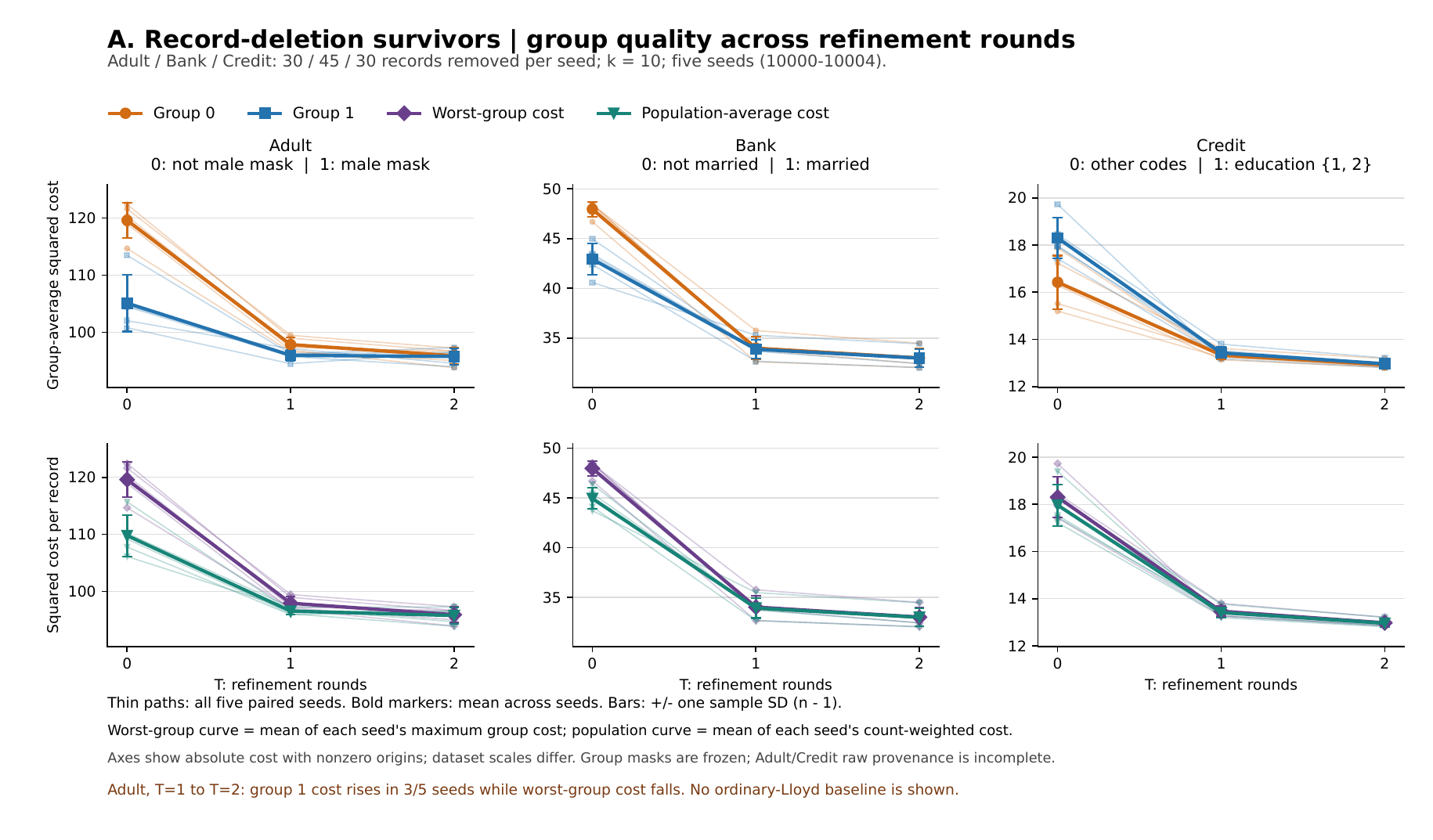}
\caption{Record-deletion quality on fixed represented survivors: separate group costs (top), worst-group and population-average costs (bottom). Thin lines retain five paired seeds; bold markers and bars show mean and one sample SD. Worst-group maxima and population count weighting are computed within seed. Adult group 1 rises in three second-round transitions. Dataset axes differ and have nonzero origins.}
\label{fig:record-groups}
\end{figure}
\begin{figure}[htbp]
\centering
\includegraphics[width=\linewidth]{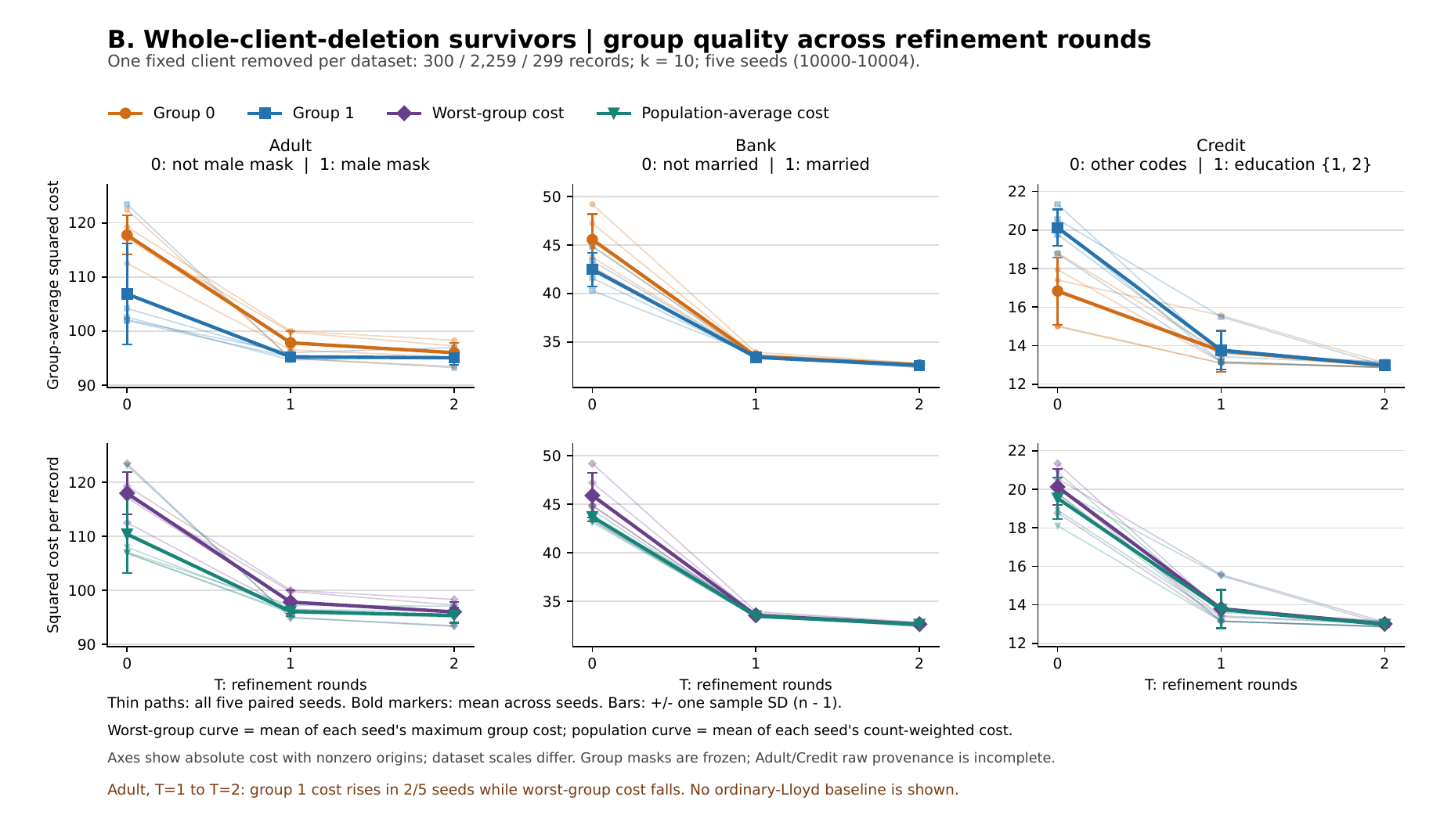}
\caption{Whole-client-deletion quality on the separate fixed client-0 survivor populations, with the same metric and seed conventions as \cref{fig:record-groups}. Adult group 1 rises in two second-round transitions. Positive-budget quality references are not denominators for zero-budget unlearning speedups.}
\label{fig:client-groups}
\end{figure}

\subsection{Matched-start ordinary Lloyd comparison}
\label{app:lloyd}
\begin{figure}[htbp]
\centering
\includegraphics[width=\linewidth]{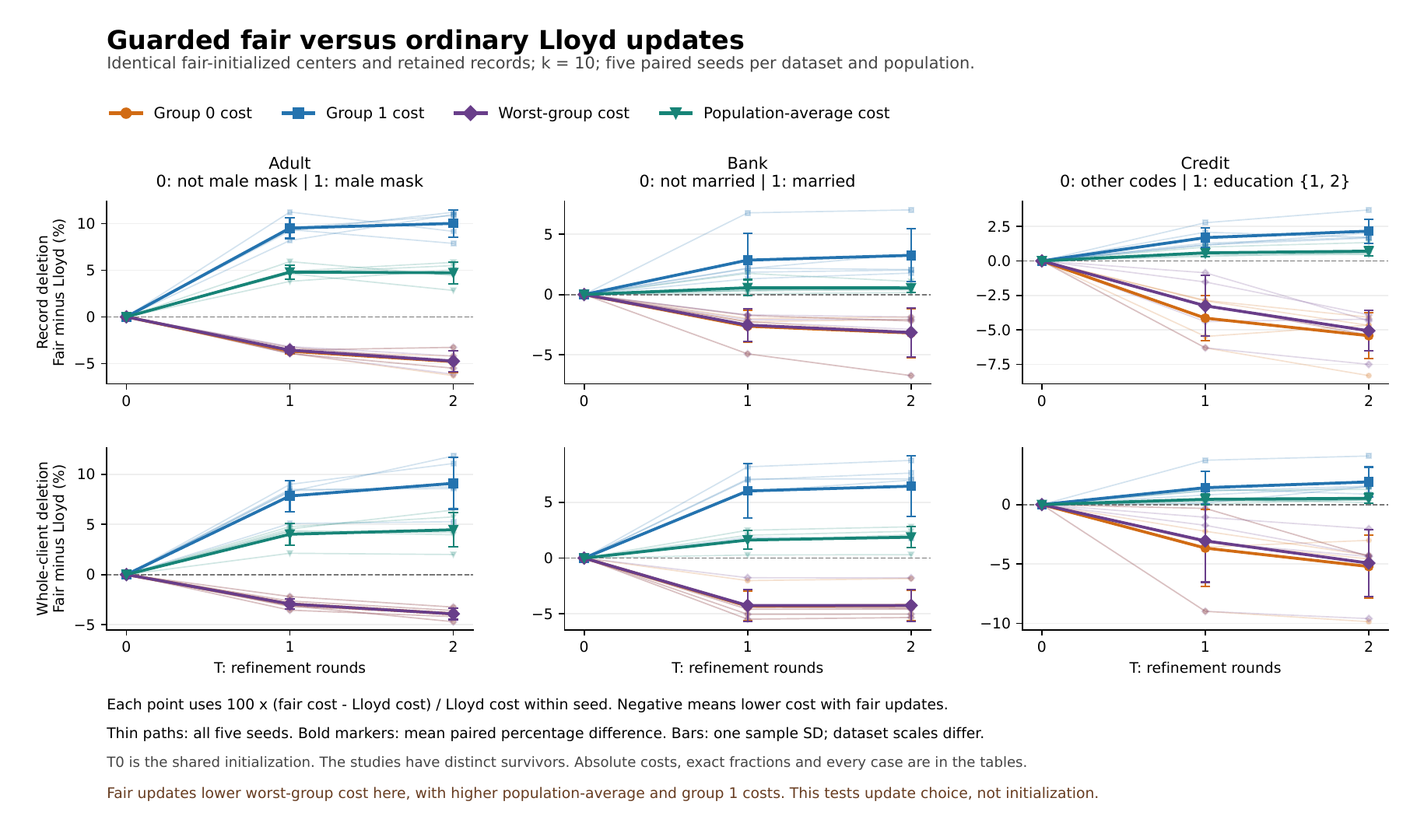}
\caption{Matched-start fair versus Lloyd refinement on separate record- and whole-client-deletion populations. Each seed's difference is $100(c_{\rm fair}-c_{\rm Lloyd})/c_{\rm Lloyd}$; negative favors fair for that cost. Thin lines show five seeds, bold markers their mean, bars one sample SD. Shared initialization gives zero differences at $T=0$; axes differ by dataset.}
\label{fig:matched-start}
\end{figure}
The comparator fixes survivors, represented coordinates, $k$, seeds and the entire indexed fair initialization. It then compares unchanged guarded fair refinement with ordinary unweighted Lloyd means for $T=1,2$. Lloyd uses exact integer sums, one binary64 rounding of each rational mean, indexed nearest-center ties and unchanged empty centers, with no acceptance guard and no resnapping to the data grid. It changes only the update rule: no independent ordinary initialization, optimization-to-convergence, runtime benchmark or new Lloyd-unlearning claim is supplied. P4's centralized comparator uses fair weighting and guarded refinement, so it cannot substitute for this test.

\begin{table}[htbp]
\centering
\caption{Fair-minus-Lloyd mean paired percentage differences: $100(c_{\rm fair}-c_{\rm Lloyd})/c_{\rm Lloyd}$ within seed, then averaged. Negative favors fair for the indicated cost. Each row has five seeds; $T=0$ is exactly tied.}
\label{tab:lloyd-quality}
\begin{tabular}{llrrrr}
\toprule
 & & \multicolumn{2}{c}{$T=1$} & \multicolumn{2}{c}{$T=2$} \\
Population & Dataset & Worst group & Population & Worst group & Population \\
\midrule
Record & Adult & $-3.57$ & +4.80 & $-4.75$ & +4.72 \\
Record & Bank & $-2.54$ & +0.57 & $-3.15$ & +0.55 \\
Record & Credit & $-3.26$ & +0.59 & $-5.07$ & +0.71 \\
Client & Adult & $-2.93$ & +4.02 & $-3.92$ & +4.47 \\
Client & Bank & $-4.25$ & +1.62 & $-4.24$ & +1.87 \\
Client & Credit & $-3.06$ & +0.45 & $-4.93$ & +0.53 \\
\bottomrule
\end{tabular}
\end{table}

At each positive budget, fair has lower $\Phi$ and group-0 cost in all 30 cases, but higher population-average and group-1 cost in all 30; group 1 is the larger group on these populations. This is a tradeoff between objectives, not simultaneous demographic improvement. Fair's five group-cost and two population-cost second-round increases remain as above. Lloyd additionally increases worst-group/group-0 cost by 0.039865 (0.2977\%) for Credit client-deletion seed 10004 from $T=1$ to $2$, even though its population SSE falls. Across all 60 adjacent steps, Lloyd's SSE falls and fair's $\Phi$ falls; these are observations on this grid. The sum $G$ favors fair in only 15/30 cases at $T=1$ and 14/30 at $T=2$, and must not be relabeled as $\Phi$.

Thirty completed case reruns reproduce all 90 saved fair center/statistic prefixes and objective sets exactly. A separately written scorer independently assigns represented raw rows using certified distance intervals with exact rational fallback, forms group moments and computes represented-center costs; it does not import production assignment/objective/accumulator routines. All 150 distinct center tuples agree with canonical labels and objectives; fixtures exercise ambiguous/tied distances and binary64 rounding. Fixtures and source/input/identity checks pass without exclusions or retries. Shared Python/NumPy and conversion remain dependencies; this is executed separately written checking, not independent human validation. The comparison does not establish optimality, cost equalization, downstream fairness or an application-level quality threshold.
\FloatBarrier

\paragraph{Separately authored experimental audit.}
A separately authored AI audit checked saved output/timing arithmetic, sequence chains, Lloyd mean-update coordinates and source/input/report bindings; it required the client-0 overlap disclosure in \cref{app:client-diversity}. A new arbitrary-precision integer distance scorer matched exact objectives and full labels for twelve $T=2$ tuples: both methods, seed 10000, on all six dataset/population combinations. This is a bounded independent rescore, not a rescore of all 150 tuples. Positive-budget and serialized/reloaded sparse-ID fixtures passed. A sub-microsecond cancellation in an auxiliary startup-inclusive record clock does not affect reported algorithm/request ratios. These are separately authored AI checks, not human validation; detailed scope and dispositions accompany the source.

\section{A Separate Categorical Multi-Group Prototype}
\label{app:multigroup-prototype}

The mathematical extensions in \cref{app:m-warm,app:multi-group} admit an isolated implementation for disjoint categorical groups. This is a separate numerical map, \texttt{FTF-MG-FW12-B6-v1}, and does not change the canonical binary implementation or its experimental panels. Multi-group socially fair clustering and its fixed-partition convex formulation are established prior work \citep[Section~3]{ghadiri2021socially}; whole-client summary removal with server recomputation is also an existing architecture \citep[Section~5.3, Algorithm~4]{pan2023mufc}.

\paragraph{Map and exactness.}
The prototype indexes local summaries, global counts and exact integer moments by $g\in\{0,\ldots,m-1\}$. Every declared global group must remain nonempty; absent local groups are omitted before any mean is formed. Whole-client deletion preserves surviving client and row identities, local stream keys and slice summaries. Updating $n_g$, rebuilding the sorted anchor weights $h/n_g$, and restarting the server stream therefore reproduces fresh initialization. Direct replay subsequently recomputes every retained assignment and exact $(N,S,SS)$ triple. Induction through the same deterministic guarded updater gives pointwise same-seed equality of indexed trajectories and moments. This is conditional on fixed preprocessing, numerical conventions and trusted state. It is not a claim of identical internal caches, physical erasure or a fresh-independent law under model-adaptive deletion.

\paragraph{Bounded optimization.}
For each fixed partition, the prototype searches the concave dual in \cref{app:multi-group}, not a presumed convex primal objective composed with simplex weights. Its frozen schedule evaluates uniform weights and simplex vertices, then at most twelve deterministic Frank--Wolfe directions with six dyadic derivative-bisection evaluations per segment. Exact arithmetic determines comparisons and ties. Weighted means are rounded once to binary64; the exact objective guard evaluates those represented returned centers against the incumbent. Zero weighted denominators use the sole occupied group mean, or the incumbent when multiple or no groups occupy the cluster. The maximum exact dual value over evaluated weights gives a valid lower bound, so the returned fixed-partition primal value minus that bound is an absolute optimality certificate for that partition. The finite solver need not attain the optimum for replay to be exact. Neither this gap nor exact replay certifies a global clustering optimum.

\paragraph{Bounded validation.}
Separately implemented scalar rational fixtures at $m=3,5$ pass 120 initialization comparisons, 48 positive-round replay comparisons and 54 valid compact-state sequence steps, covering local absences, ties, saturation, zero mass and invalid deletions. Six small primal-reference cases reach an exact optimization bracket of width at most $10^{-5}$. These checks are separately implemented self-validation for the prototype, not human validation. A subsequent separately authored AI audit found no blocking correctness defect within the frozen map and bounded pilot; its independent checks are distinguished below.

One real-data pilot uses the existing ACS 2018 population with 100,000 rows, ten state clients and the nine documented \texttt{RAC1P} categories. Exact recovery from the frozen feature schema and transform agrees with every stored row and native-identity mapping. No guessed Adult dummy-column interpretation or new population is used. The original sixteen-feature processing is preserved, including \texttt{RAC1P} as a feature, with no survey weighting. One prespecified whole-client request removes Missouri (persistent client~1, 6,799 rows), leaving all nine groups nonempty; the smallest retained group contains seven rows. The pilot uses $k=10$, seeds 10000--10004, and matched $T\in\{0,1,2\}$.

There are 20 configured replay-versus-fresh gate comparisons across 35 gate files, all equal at matched budgets including nonempty refinement statistics. The 105 timed outputs (45 fresh and 60 replay) repeat these models. This is one population/request, five seeds and 15 seed/budget cases, not 105 independent deletion cases. Mean nearest-center worst-group cost $\Phi=\max_g\Phi_g$ is 16.374, 9.994 and 9.445 at $T=0,1,2$. The separate sum $G=\sum_g\Phi_g$ is not used as the fairness objective. Across the ten distinct positive updates, the exact fixed-partition primal--dual gaps range from 0.041149 to 0.153621; all updates reach the fixed iteration bound. No utility threshold or global-optimum claim is made.

With resident data/state, three technical repetitions and equal model-output serialization, ratios of total fresh to direct request time are 138.70, 1.60 and 1.30 at matched $T=0,1,2$. The raw-free compact $T=0$ path has a ratio of 139.14 and essentially the same latency as the optimized whole-client direct path. Its serialized state is 205,773 bytes, compared with 26,608,600 bytes for the coordinate-retaining direct state, measured in memory. Initial setup, loading, state persistence, network and erasure costs are outside these request ratios; the report supplies measured setup costs and boundary details. The shared process lock coordinates experimental workers but not unrelated desktop load. These figures describe one fixed population/request and five seeds, not a general service speed factor or a comparison with the binary panels. Full exact witnesses, source/protocol identities, technical-repeat variability and attribution are in the separate prototype report.

\paragraph{Separately authored prototype audit.}
The frozen manifest and all 335 listed files, ten data dependencies and seventeen baseline modules passed an independent identity recheck. Separately written tiny checks covered 192 candidate evaluations, 600 exact supergradient inequalities, eighteen fresh/Direct cases and eighteen compact sequence steps. Every processed row's recovered membership, native identity alignment and all 1,600,000 encoded coordinates were rechecked. Saved moment-based certificates were independently rederived for 778 candidate evaluations across ten distinct seed/round partitions; 1,164 stored evaluation records include repeated first-round partitions at $T=1,2$. This validates the scoped weak-duality/guard/replay argument and saved reporting, not independent reassignment of every real row for every model. Retained group counts in RAC1P code order are 66,577; 7,489; 1,059; 7; 162; 8,240; 168; 5,969; and 3,530. The audit does not establish original raw Census provenance, hostile-state protection, an exact optimizer or a positive-round sequential checkpoint service. No human validation or publication approval is implied.

\section{Implementation and Numerical Conventions}
\label{app:pseudocode}
\label{app:implementation}
The main text describes the procedures conceptually. Exact replay also depends on details that ordinary algorithm descriptions leave open, and this appendix collects the conventions needed to implement, reproduce or audit the computation: the training pseudocode, the numerical boundary and encoding, keyed randomness and the exact rational categorical sampler, saturation, completion and tie rules, quantization, the represented guarded update, certificate arithmetic, and deletion validation. Together these define the \emph{declared training map}. Algorithm~\ref{alg:training} summarizes the training path; \cref{alg:unlearn} gives deletion. Full records and per-round point caches stay at clients. Transmitted anchors can coincide with individual records, especially at $\gamma=0$; the server processes weighted anchors directly, without dense synthetic expansion. Slices are processed in canonical $(c,g)$ order, and the server receives anchors as a canonical list ordered by $(c,g,j)$. We write $(s,\texttt{local},c,g)$ for the separate random stream (tape) of slice $(c,g)$ described in the main text, keyed by the seed and the slice, and $(s,\texttt{server})$ for the server's; their executable form is specified below.

\begin{algorithm}[ht]
\caption{Declared training map $\Train(X;s)$}
\label{alg:training}
\begin{algorithmic}[1]
\ForAll{$(c,g)$ with $X_{c,g}\ne\varnothing$}
  \State Select $k_e$ representatives by $D^2$-sampling with tape $(s,\texttt{local},c,g)$
  \State Assign all slice records to nearest selected representative; build $h_{e,j}$
  \State Quantize deterministically and emit $(q_{e,j},h_{e,j})$
\EndFor
\State Compute exact $n_A,n_B$ and canonical weights $h/n_g$
\State Run weighted $D^2$-sampling on the anchors with tape $(s,\texttt{server})$
\State Apply optional fixed $T_a$ deterministic weighted anchor-Lloyd rounds
\For{$t=0,\ldots,T-1$}
  \State Clients assign the fixed represented $X$ using exact squared-distance comparisons and indexed ties
  \State Aggregate exact $(N,S,SS)$ (in-process in the simulation)
  \State Cache labels, nearest/second distances, and runner-up identities locally
  \State Server applies $\GFU_L$
\EndFor
\State \Return indexed trajectory, final centers, and distributed checkpoint state
\end{algorithmic}
\end{algorithm}

\paragraph{Versioned numerical boundary.}
The canonical specification is \texttt{TRAINING\_MAP.md}, version FTF-1. Input is finite binary64 of magnitude at most $2^{450}$, with common feature dimension $1\le d\le4096$, stable nonnegative integer client IDs, group IDs $0,1$, and both global groups nonempty. Preprocessing is fixed outside deletion. With integer $0\le b\le30$, positive finite clip $a$, and $a2^b\le2^{52}$, encode once by nearest-even rounding of $2^b\operatorname{clip}(x,-a,a)$. Require $dM^2\le2^{59}$ for the maximum encoded coordinate bound $M$ before any narrow-integer square or reduction. The same condition is checked on direct accumulator input. Chunk totals are bounded by $2^{59}$; completed $N,S,SS$ and signed corrections use Python integers. Any exact additive representation would serve; what matters is that subtracting a record's contribution exactly inverts adding it. Invalid numeric input is rejected, never silently wrapped. Empty clients retain dimension and identity. The main experiments use $b=12,L=6$; regression validation also covers $b=16,L=20$.

\paragraph{Canonical rational categorical sampling.}
Represent each positive mass as a rational, clear denominators by their least common multiple, and divide the integer vector by its greatest common divisor. If the resulting total is $W>1$, draw $B=\lceil\log_2W\rceil$ bits to obtain $U\in\{0,\ldots,2^B-1\}$. Reject and redraw all bits when $U\ge W$; otherwise choose the unique category whose half-open cumulative integer interval contains $U$. $W=1$ consumes no bits. $W=0$ invokes deterministic completion.

Under ideal independent fair bits, any $u<W$ is returned with probability
\begin{equation}
\sum_{q=0}^{\infty}\left(1-\frac{W}{2^B}\right)^q2^{-B}=\frac1W.
\end{equation}
The method terminates almost surely, each category receives its exact rational mass fraction, and zero-mass categories cannot be sampled. This proof establishes unbiasedness; a frequency test alone does not. Local squared-distance masses and server $hD^2/n_g$ masses use the exact rational values of represented coordinates.

Executable streams use explicit NumPy PCG64 with \texttt{SeedSequence}, using entropy $[s,\mathrm{role},\mathrm{entity\ IDs}]$. Role IDs are local 1, server 2, merged-local 3, merged-server 4, and centralized 5. Each trial consumes $\lceil B/64\rceil$ raw 64-bit words, first word least significant, masks bits above $B$, and discards unused bits. Rejection consumes a new complete word block. The local tape $(s,\texttt{local},c,g)$ and server tape $(s,\texttt{server})$ of the main text are these streams with roles 1 and 2. Keys are namespace separation; finite deterministic PRNG execution is not a proof of independent ideal tapes. The NumPy version, environment, seed, role/entity keys, and consumed words/rejections must be frozen or logged for reproducibility.

\paragraph{Saturation, order, and ties.}
For $k\ge n>0$, return ascending input row indices, then cyclic row indices if more slots are requested, without draws. Local budget is $\min(k,n)$, so a saturated slice has zero representation cost. Otherwise use the weighted first draw and ordinary $D^2$ draws until zero potential, then append unselected row IDs in ascending order and cycle if necessary. The server applies the same rules to its canonical anchor list, with weights $\omega=h/n_g$ and masses $\omega D^2$ held as exact rationals. This can include zero-weight rows and repeated locations. An empty sampler input permits only $k=0$. Distinct source records and coincident center slots remain distinct indexed objects. Assignments compare exact squared distances, break ties by smallest center index, and define the runner-up as the second entry in distance-then-index order, including coincident slots.

\paragraph{Quantization representation.}
$\gamma=0$ preserves representatives. For $\gamma>0$, interpret its binary64 value as an exact rational, compute the exact nearest-even grid index of $z/\gamma$, multiply by rational $\gamma$, and round once to binary64. Write the exact grid point as $q^\star$ and represented output as $q$. If $\|q-q^\star\|\le\rho$ over the valid anchor range, then
\begin{equation}
Q=\sum_x\frac{\|z(x)-q(x)\|^2}{n_{g(x)}}
\le\left(\sqrt{d\gamma^2/2}+\sqrt2\rho\right)^2.
\label{eq:represented-quantization}
\end{equation}
Thus the ideal-grid $d\gamma^2/2$ term requires exactly represented grid outputs or an explicit rounding qualification. The proof's root-cost argument also works directly with $\sqrt{\E Q}$ for the actual represented anchors. A measured $Q$ is a realization-level diagnostic, not an expectation bound by itself. Coordinates remain float64, so changing $\gamma$ alone does not compress their array payload.

\paragraph{Represented guarded update.}
Centers are finite indexed binary64 values interpreted as exact dyadic rationals. At each rational dyadic endpoint/midpoint $\lambda$, form \cref{eq:lambda-center} from exact rational means and group-cell weights and round each coordinate once to binary64. A one-group cell uses the occupied mean for every $\lambda$, including a zero-weight endpoint; a fully empty cell retains its incumbent. Evaluate both actual represented candidate costs by \cref{eq:stats-identity}. Set the lower bisection bound to $\lambda$ iff $f_A>f_B$; equality sets the upper bound. Select the lowest exact maximum group cost, with smallest rational $\lambda$ on ties, and accept iff its cost is no greater than the incumbent. Counts $T,L,T_a$ are nonnegative integers, with $L\le256$. Optional anchor-Lloyd uses a deterministic empty-cluster rule, exact weighted means, one final coordinate rounding, and an exact represented-objective guard, so it never increases the weighted anchor objective.

\paragraph{Certificates and fallback.}
Store outward lower/upper bounds for each cached nearest and second-nearest Euclidean distance and indexed labels/runner-up IDs. Shifts have outward bounds too. Basic certification compares the outward winner upper sum against $\max(0,\underline d_2-\max_{m\ne j}\overline\delta_m)$. Runner-up certification takes the distance between the intervals for $d_2$ and $\delta_r$, intersected with the non-runner competitor lower bound. Only a strict separated comparison accepts. Underflow is enclosed, never interpreted as a proven zero shift. Inconclusive bounds use exact/certified assignment with exact tie fallback; $k=1$ has label zero. Full sorting and exact fallback cost are charged. In memory, the two bound objects also retain one float64 display array each: six float64 arrays and two int64 IDs use $64nT$ array bytes before object overhead and other state. Serialization stores only four bound arrays and two IDs, reconstructing the display arrays on load: $48nT$ raw-array bytes, plus pickle framing and other checkpoint state. Recorded checkpoint bytes and process RSS supersede these array-only counts.

\paragraph{Deletion and output.}
Deletion identity is the checkpoint's (client ID, original local row index), resolved to persistent source identities by its manifest. Reject unknown clients, nonintegral, negative or out-of-range row indices, repeated requests, invalid modes/thresholds, and deletions emptying either global group. Canonicalize valid indices, preserve client IDs and survivor order, and keep each request relative to its original checkpoint. Reuse untouched summaries, rerun touched ones with their original keys, and rebuild the entire server initializer with retained group denominators. Finite nonnegative thresholds are valid; thresholds at least one disable abandonment. The failed fraction is compared to the actual represented threshold by integer cross multiplication. A fraction strictly greater than the threshold (default $1/2$) triggers current-round full reconstruction before the update, with all later rounds direct.

Replay returns indexed trajectories, exact statistics, and final centers with \texttt{cache\_valid=false}; it cannot be reused as a fresh checkpoint. Fresh training can omit cache construction for a model-output-matched timing comparison. No \texttt{rebuild\_cache=true} fast service or free sequential deletion is claimed. Cache construction, loading, purging, encoding, input materialization, output serialization, and fallback sunk costs are distinct measured tasks.

\section{Extended Related Work}
\label{app:positioning}
\label{app:related-extended}
We expand the comparisons in \cref{sec:related} by objective, retained information, and unlearning target.

\paragraph{Fair objectives and summaries.}
Representation-constrained clustering and coresets \citep{chierichetti2017fairlets,bera2019fair,schmidt2020faircoresets} differ from group-average cost fairness \citep{abbasi2021equitable,ghadiri2021socially}. Unlike our initialization anchors, the union of per-group strong coresets preserves the socially fair objective for every center query \citep{gorantla2023coresets}. Centralized guarantees address group-count dependence \citep{makarychev2021socially}, fixed-parameter factors \citep{goyal2023tight}, and constant-factor, parameterized or local-search algorithms \citep{ghadiri2022constant,anand2026socially}; we claim no best approximation ratio. Cascaded norms generalize point/group aggregation, with square-root and squared-cost guarantees requiring translation \citep{chlamtac2022cascaded}. F3KM uses a balance-oriented federated objective \citep{zhu2023f3km}.

\paragraph{Federated clustering and deletion.}
Beyond the two-level architectures discussed in \cref{sec:related}, ARCANE-based $k$-means++ unlearning also precedes our work \citep{gupta2025arcane}. OmniFC reconstructs a global distance matrix through coded computation \citep{yan2025omnifc}; its information/privacy protocol differs from our anchors and additive packets, and we do not benchmark distributed superiority. Q-$k$-means quantizes iterative centroids, saves optimization metadata, tests deletion-induced changes and stops when loss fails to improve; DC-$k$-means recomputes affected tree paths \citep{ginart2019forget}. These precede our stability checks and reuse. Sharded retraining \citep{bourtoule2021sisa}, FedCIO \citep{qiu2023fedcio}, and sequential-client SIFU \citep{fraboni2024sifu} address other deletion procedures.

\paragraph{Exactness targets.}
Stability and coupling already support exact unlearning \citep{ullah2021stability}. \citet{ullah2023adaptive} distinguish model-and-state from model-output distributional targets and cover adaptive linear queries, including Lloyd iterations and federated averaging. FATS uses total-variation stability for sample/client deletion \citep{tao2024fats}; Exact-Fun specifies a quantized federated map \citep{xiong2023exactfun}. Their guarantees are relative to their own training procedures. Our indexed center-trajectory equality does not establish Ullah--Arora/FATS full-state guarantees or a fresh-independent adaptive-request law. Smooth-risk approximate removal certificates have another target \citep{guo2020certified}. Quantized clustering for in-context language-model adaptation concerns a different pipeline \citep{muresanu2025context}. These distinctions preclude broad first-exact-federated or first-iterative claims.

\paragraph{Assignment information and arithmetic.}
Elkan, Hamerly and Yinyang use per-center, compact or grouped distance bounds \citep{elkan2003triangle,hamerly2010faster,ding2015yinyang}; adaptive and geometric alternatives trade different information and costs \citep{drake2012adaptive,newling2016bounds,rysavy2016geometric}. Our checkpoint-to-replay movement test specializes this tradition: the contributions are tightness for the stipulated cache, an exact-displacement runner-up refinement, and sound arithmetic with aggregate replay. Richer caches need not be dominated. Yinyang already notes sensitivity to one moving center \citep[Section 2.1]{ding2015yinyang}, which does not identify a P3 trial's cause from aggregate counters. Exact $D^2$-sampling accelerations \citep{raff2021exact,rodriguez2026fastkpp} likewise preclude universalizing our straightforward distance-work bound; they are not benchmarked here. Counts, sums and squared-norm sums are classical \citep{zhang1996birch}, as is exact/reproducible summation \citep{neal2015exact,ahrens2020summation}. Frozen-feature federated ridge-head deletion has exact additive statistics and a closed-form solution \citep{quan2026ridge}; our changing partitions must first be reconstructed by sound assignments. Repeatable rounded sums alone are not algebraically removable.

\paragraph{Other fairness targets.}
FedShard's hierarchical sharding addresses client efficiency/performance fairness \citep{wen2026fedshard}; supervised fair unlearning \citep{oesterling2024fair}, FROG's graph fairness \citep{chen2025frog}, and federated fairness/heterogeneity evaluation \citep{huang2025wild} also combine deletion and fairness. Our demographic maximum average squared-distance objective establishes neither downstream fairness nor fair client computation burdens. FeDa4Fair evaluates broader client fairness \citep{heilmann2026feda}; our independently specified ACS cohort does not reproduce that benchmark.

\end{document}